\newtheorem{theorem}{Theorem}[section]
\newtheorem{lemma}[theorem]{Lemma}
\newtheorem{remark}[theorem]{Remark}
\newtheorem{proposition}[theorem]{Proposition}
\theoremstyle{definition}
\newtheorem{definition}[theorem]{Definition}
\newtheorem{assumption}{Assumption}
\def\##1\#{\begin{align}#1\end{align}}
\def\$#1\${\begin{align*}#1\end{align*}}
\let\hat\widehat
\let\tilde\widetilde
\def\given{{\,|\,}}
\def\biggiven{{\,\big|\,}}
\newcommand{\la}{\langle}
\newcommand{\ra}{\rangle}
\DeclareMathOperator*{\argmax}{arg\,max}
\DeclareMathOperator*{\argmin}{arg\,min}
\newcommand\mydef{\stackrel{\text{def}}{=}}
\let\iftwo\if@twocolumn
\begin{document}

\twocolumn[
\icmltitle{Randomized Exploration for Reinforcement Learning with General Value Function Approximation}



\icmlsetsymbol{equal}{*}

\begin{icmlauthorlist}
\icmlauthor{Haque Ishfaq}{equal,mila,mcgill}
\icmlauthor{Qiwen Cui}{equal,peking}
\icmlauthor{Viet Nguyen}{mila,mcgill}
\icmlauthor{Alex Ayoub}{amii}
\icmlauthor{Zhuoran Yang}{princeton}
\icmlauthor{Zhaoran Wang}{northwestern}
\icmlauthor{Doina Precup}{mila,mcgill,deepmind}
\icmlauthor{Lin F.~Yang}{ucla}
\end{icmlauthorlist}

\icmlaffiliation{mila}{Mila}
\icmlaffiliation{peking}{School of Mathematical Science, Peking University}
\icmlaffiliation{deepmind}{DeepMind, Montreal}
\icmlaffiliation{mcgill}{School of Computer Science, McGill University}
\icmlaffiliation{amii}{Amii and Department of Computing Science, University
of Alberta}
\icmlaffiliation{northwestern}{ Industrial Engineering \& Management Sciences, Northwestern University}
\icmlaffiliation{princeton}{Department of Operations Research and Financial Engineering, Princeton University}
\icmlaffiliation{ucla}{Department of Electrical and Computer Engineering, University of California, Los Angeles}

\icmlcorrespondingauthor{Haque Ishfaq}{haque.ishfaq@mail.mcgill.ca}

\icmlkeywords{Machine Learning, ICML}

\vskip 0.3in
]



\printAffiliationsAndNotice{\icmlEqualContribution} 

\begin{abstract}
We propose a model-free reinforcement learning algorithm  inspired by the popular randomized least squares value iteration (RLSVI) algorithm as well as the optimism principle. Unlike existing upper-confidence-bound (UCB) based approaches, which are often computationally intractable, our algorithm  drives exploration by simply perturbing the training data with judiciously chosen i.i.d. scalar noises. To attain optimistic value function estimation without resorting to a UCB-style bonus, we introduce an optimistic reward sampling procedure. When the value functions can be represented by a function class $\mathcal{F}$, our algorithm achieves a worst-case regret bound of $\tilde{O}(\mathrm{poly}(d_EH)\sqrt{T})$ where $T$ is the time elapsed, $H$ is the planning horizon and $d_E$  is the \emph{eluder dimension} of $\mathcal{F}$. In the linear setting, our algorithm reduces to LSVI-PHE, a variant of RLSVI, that enjoys an $\tilde{\mathcal{O}}(\sqrt{d^3H^3T})$ regret. We complement the theory with an empirical evaluation across known difficult exploration tasks.
\end{abstract}

\section{Introduction}
The exploration-exploitation trade-off is a core problem in reinforcement learning (RL): an agent may need to sacrifice short-term rewards to achieve better long-term returns. A good RL algorithm should explore efficiently and find a near-optimal policy as quickly and robustly as possible. A big open problem is the design of provably efficient exploration when general function approximation is used to estimate the value function, i.e., the expectation of long-term return. In this work, we propose an exploration strategy inspired by the popular Randomized Least Squares Value Iteration (RLSVI) algorithm \citep{osband2016generalization,russo2019worst,zanette2019frequentist} as well as by the optimism principle \citep{Brafman01r-max-,jaksch2010near,jin2018q, jin2019provably,wang2020reinforcement}, which is efficient in both statistical and computational sense, and can be easily plugged into common RL algorithms, including UCB-VI \citep{azar2017minimax}, UCB-Q \citep{jin2018q} and OPPO \citep{cai2019provably}.

The main exploration idea is the well-known ``optimism in the face of uncertainty (OFU)'' principle, which leads to numerous upper confidence bound (UCB)-type algorithms. These algorithms compute statistical confidence regions for the model or the value function, given the observed history,  and perform the greedy policy with respect to these  regions, or upper confidence bounds. However, it is costly or even intractable to compute the upper confidence bound explicitly, especially for structured MDPs or general function approximations. For instance, in \citet{wang2020reinforcement}, computing the confidence bonus requires sophisticated sensitivity sampling and a width function oracle. The computational cost hinders the practical application of these UCB-type algorithms.

Another recently rediscovered exploration idea is Thompson sampling (TS) \citep{thompson1933likelihood, osband2013more}. It is motivated by the Bayesian perspective on RL, in which we have a prior distribution over the model or the value function; then we draw a sample from this distribution and compute a policy based on this sample. Theoretical guarantees exist for both Bayesian regret \citep{russo2013eluder} and worst-case regret \citep{agrawal2017optimistic} for this approach. Although TS is conceptually simple, in many cases the posterior is intractable to compute and the prior may not exist at all. Recently, approximate TS, also known as randomized least squares value iteration (RLSVI) or following the perturbed leader \cite{kveton2019randomized}, has received significant attention due to its good empirical performance. It has been proven that RLSVI enjoys sublinear worst-case or frequentist regret in tabular RL, by simply adding Gaussian noise on the reward \citep{russo2019worst,agrawal2020improved} However, in the improved bound for tabular MDP \citep{agrawal2020improved} and linear MDP \citep{zanette2019frequentist},  the uncertainty of the estimates still needs to be computed in order to perform optimistic exploration; it is unknown whether this can be removed. Moreover, this computation is difficult to do in the general function approximation setting.

In this work, we propose a novel exploration idea called optimistic reward sampling, which combines OFU and TS organically. The algorithm is surprisingly simple: we perturb the reward several times and act greedily with respect to the maximum of the estimated state-action values. The intuition is that after the perturbation, the estimate has a constant probability of being optimistic, and sampling multiple times guarantees that the maximum of these sampled estimates is optimistic with high probability. Thus, our algorithm utilizes approximate TS to achieve optimism.

Similar algorithms have been shown to work empirically, including SUNRISE \citep{lee2020sunrise}, NoisyNet \citep{fortunato2017noisy} and bootstrapped DQN \citep{osband2016deep}. However, the theoretical analysis of perturbation-based exploration is still missing. We prove that it enjoys near optimal regret $\Tilde{O}(\sqrt{H^3d^3T})$ for linear MDP and the sampling time is only $M=\Tilde{O}(d)$. We also prove similar bounds for the general function approximation case, by using the notion of eluder dimension \citep{russo2013eluder,wang2020reinforcement}. In addition, this algorithm is computationally efficient, as we no longer need to compute the upper confidence bound. In the experiments, we find that a small sampling time $M$ is sufficient to achieve good performance, which suggests that the theoretical choice of $M=\Tilde{O}(d)$ is too conservative in practice.

Optimistic reward sampling can be directly plugged into most RL algorithms, improving the sample complexity without harming the computational cost. The algorithm only needs to perform perturbed regression. To our best knowledge, this is the first online RL algorithm that is both computationally and statistically efficient with linear function approximation and general function approximation.
We hope optimistic reward sampling can be a large step towards bridging the gap between algorithms with strong theoretical guarantees and those with good computational performance.

\section{Preliminaries}\label{section:background}

We begin by introducing some necessary notations. For any positive integer $n$, we denote the set $\{1, 2, \ldots,n\}$ by $[n]$. For any set $A$,  $\la \cdot, \cdot \ra_A$ denotes the inner product over set $A$. For a positive definite matrix $A \in \mathbb{R}^{d \times d}$ and a vector $x \in \mathbb{R}^d$, we denote the norm of $x$ with respect to matrix $A$ by $\|x\|_A = \sqrt{x^TAx}$. We denote the cumulative distribution function of the standard Gaussian by $\Phi(\cdot)$. For function growth, we use $\Tilde{\mathcal{O}}(\cdot)$, ignoring poly-logarithmic factors. 

We consider episodic MDPs of the form $(\mathcal{S}, \mathcal{A}, H,  \mathbb{P}, r)$, where $\cS$ is the (possibly uncountable) state space, $\cA$ is the (possibly uncountable) action space, $H $ is the number of steps in each episode, $\mathbb{P} = \{\mathbb{P}_h\}_{h=1}^{H}$ are the state transition probability distributions, and $r = \{r_h\}_{h=1}^{H}$ are the reward functions. For each $h \in [H]$,  $\mathbb{P}_h(\cdot \,|\, s, a)$ is the transition kernel over the next states if action $a$ is taken at state $s$ during the $h$-th time step of the episode.  
Also, $r_h:\cS\times\cA\rightarrow [0,1]$ is the deterministic reward function at step $h$.\footnote{We assume the reward function is deterministic for notational convenience. Our results can be straightforwardly generalized to the case when rewards are stochastic.} 

A policy $\pi$ is a collection of $H$ functions $\{\pi_h: \cS \rightarrow \Delta(\cA) \}_{h \in [H]}$ where $\Delta(\cA)$ denotes probability simplex over action space $\cA$. We denote by $\pi(\cdot \,|\, s)$ the action distribution of policy $\pi$ for state $s$, and by $\pi^*$ the optimal policy, which maximizes the value function defined below.

The value function $V_h^\pi:\cS \rightarrow \mathbb{R}$ at step $h \in [H]$ is the expected sum of remaining rewards until the end of the episode, received under  $\pi$ when starting from $s_h = s$,
\begin{equation*}
    V_h^{\pi}(s) = \mathbb{E}_\pi \Big[ \sum_{h'=h}^{H} r_{h'}(s_{h'}, a_{h'}) \,|\,  s_h = s   \Big].
\end{equation*}
The action-value function $Q_h^\pi:\cS\times\cA \rightarrow \mathbb{R}$ is defined as the expected sum of rewards given the current state and action when the agent follows policy $\pi$ afterwards,
\begin{equation*}
    Q_h^{\pi}(s,a) =   \mathbb{E}_\pi \Big[\sum_{h'=h}^{H} r_{h'}(s_{h'}, a_{h'}) \,|\,  s_h = s, a_h = a   \Big].
\end{equation*}
We denote $V_h^*(s)=V_h^{\pi^*}(s)$ and $Q_h^*(s,a) = Q_h^{\pi^*}(s,a)$. Moreover, to simplify notation, we denote $[\mathbb{P}_h V_{h+1}](s,a) = \mathbb{E}_{s' \sim \mathbb{P}_h(\cdot \,|\, s, a)}V_{h+1}(s')$.

Recall that value functions obey the Bellman equations:
\iftwo
    \begin{equation} \label{eq:bellman-eq} 
        \begin{alignedat}{2} 
        &Q^{\pi}_{h}(s, a) &&= (r_h + \mathbb{P}_h V^{\pi}_{h+1})(s, a),\\
        &V^{\pi}_{h}(s)    &&= \langle Q^{\pi}_{h}(s,\cdot), \pi_h(\cdot \,|\, s) \rangle_\cA,\\ 
        &V^{\pi}_{H+1}(s)  &&= 0. 
        \end{alignedat}
    \end{equation}
\else 
    \begin{align} \label{eq:bellman-eq} 
        Q^{\pi}_{h}(s, a) = (r_h + \mathbb{P}_h V^{\pi}_{h+1})(s, a) , \qquad   V^{\pi}_{h}(s) = \la Q^{\pi}_{h}(s,\cdot), \pi_h(\cdot \,|\, s) \ra_\cA,    
        \qquad V^{\pi}_{H+1}(s) = 0.  
    \end{align}
\fi
The aim of the agent is to learn the optimal policy by acting in the environment for $K$ episodes. Before starting each episode  $k \in [K]$, the agent chooses a policy $\pi^k$  and an adversary chooses the  initial state $s^k_1$. Then, at each time step $h \in [H]$, the agent observes $s^k_h \in \cS$, picks an action $a^k_h \in \cA$, receives a reward $r_h(s_h^k, a_h^k)$ and the environment transitions to the next state $s^k_{h+1}\sim \mathbb{P}_h(\cdot \,|\, s^k_h, a^k_h)$. The episode ends after the agent collects the $H$-th reward and reaches the state $s^k_{H+1}$. The suboptimality of an agent can be measured by its regret, the cumulative difference of optimal and achieved return, which after $K$ episodes is
 \begin{equation}
     \text{Regret}(K) = \sum_{k=1}^{K} \Big[ V_1^{*}(s_1^{k}) - V_1^{\pi^k} (s_1^{k})  \Big].
 \end{equation}

\textbf{Additional notations.}
The performance of function $f$ on dataset $\cD=\{(x_t,y_t)\}_{t\in[|\cD|]}$ is defined by 
$L(f \,|\, \cD) =\left(\sum_{t=1}^{|\cD|}(f(x_t)-y_t)^2\right)^{1/2}.$ The empirical $\ell_2$ norm of function $f$ on input set $\cZ=\{x_t\}_{t\in[|\cZ|]}$ is defined by
$\|f\|_{\cZ}=\left(\sum_{t=1}^{|\cZ|}f(x_t)^2\right)^{1/2}$. Given a function class $\cF\subseteq\{f:X \rightarrow \mathbb{R}\}$, we define the width function given some input $x$ as
$w(\cF,x)=\max_{f,f'\in\cF}f(x)-f'(x)$.

\section{Algorithm: LSVI-PHE}

\begin{algorithm}[t]
\caption{\label{Algorithm-GFA} $\mathcal{F}$-LSVI-PHE}

\begin{algorithmic}[1]
\STATE Set $M$ to be a fixed integer. 
\STATE \textbf{For} episode $k=1,2,\ldots, K$ \textbf{do}
\STATE \hspace{0.05in} Receive the initial state $s_1^k$.
\STATE \hspace{0.05in} Set $V_{H+1}^k(s)=0$ for all $s\in\mathcal{S}$.
\STATE \hspace{0.05in} \textbf{For} step {$h=H, H-1,\ldots, 1$} \textbf{do}  \label{line:pes-start}
\STATE \hspace{0.13in} \textbf{For} {$m=1,2,\ldots,M$} \textbf{do}
\STATE \hspace{0.31in} Sample i.i.d. Gaussian noise $\xi_{h,k}^{\tau,m}\sim \cN(0,\sigma_{h,k}^2)$.
\STATE \hspace{0.31in} Perturbed dataset: $\Tilde{\mathcal{D}}_{h}^{k,m}\leftarrow\{(s_{h}^\tau,a_{h}^\tau, r_{h}^\tau+\xi_{h,k}^{\tau,m}$ \label{line:perturbed dataset}
\STATE \hspace{0.31in} $+V_{h+1}^k(s_{h+1}^\tau))\}_{\tau\in[k-1]}.$
\STATE \hspace{0.31in} Set $\Tilde{f}_h^{k,m}\leftarrow \argmin_{f\in\mathcal{F}}L(f \,|\, \Tilde{\mathcal{D}}_h^{k,m})^2+\lambda \Tilde{R}(f)$.
\STATE \hspace{0.31in} Set $Q_h^{k,m}(\cdot,\cdot)\leftarrow \Tilde{f}_h^{k,m}(\cdot,\cdot)$.
\STATE \hspace{0.13in} Set $Q_h^{k}(\cdot,\cdot)\leftarrow\min\{\max_{m\in[M]}\{Q_h^{k,m}(\cdot,\cdot)\},$
\STATE \hspace{0.33in} $H-h+1\}$.
\STATE \hspace{0.13in} Set $V_h^k(\cdot)\leftarrow\max_{a\in \mathcal{A}}Q_h^k(\cdot,a)$ and 
\STATE \hspace{0.33in} $\pi_h^k(\cdot)\leftarrow\argmax_{a\in\mathcal{A}}Q_h^k(\cdot,a)$.
\STATE \hspace{0.05in} \textbf{For} step {$h=1, 2, \ldots, H$} \textbf{do}  \label{line:pis-start}

\STATE \hspace{0.23in}  Take  action  $a^k_{h} \leftarrow \argmax_{a \in \cA} Q_h^{k}(s_h^k,a)$.
\STATE \hspace{0.23in}  Observe reward $r^k_{h}(s_h^k,a_h^k)$, get next state $s^k_{h+1}$.  \label{line:pis-end}
\end{algorithmic}
\end{algorithm}

In this section, we lay out our algorithm LSVI-PHE\footnote{Here PHE stands for Perturbed History Exploration.} (Algorithm~\ref{Algorithm-GFA}), an optimistic modification of RLSVI, where the optimism is realized by, what we will call, optimistic reward sampling. 
To describe our algorithm and facilitate its analysis in Section ~\ref{Section:Theoretical Analysis}, we first define the perturbed least squares regression. We add noises on the regression target and the regularizer to achieve enough randomness in all directions of the regressor.

\begin{definition}
[Perturbed Least Squares]\label{def:perturbed-least-squares} Consider a function class $\mathcal{F}: X \rightarrow \mathbb{R}$. For an arbitrary dataset $\cD=\{(x_i,y_i)\}_{i=1}^n$, a regularizer $R(f) = \sum_{j=1}^D p_j(f)^2$ where $p_j(\cdot)$ are functionals, 
and positive constant $\sigma$, the perturbed dataset and perturbed regularizer are defined as
\begin{equation*}
    \Tilde{\cD}_\sigma=\{(x_{i},y_i+\xi_i)\}_{i=1}^n, \ \ \Tilde{R}_\sigma(f)=\sum_{j=1}^D [p_j(f)+\xi'_j]^2,
\end{equation*}
where $\xi_i$ and $\xi_j'$ are i.i.d. zero-mean Gaussian noises with variance $\sigma^2$. For a loss function $L$, the corresponding perturbed least squares regression solution is
$$\Tilde{f}_\sigma=\argmin_{f\in\cF} L(f \,|\, \Tilde{\mathcal{D}}_\sigma)^2+\lambda \Tilde{R}_\sigma(f).$$
\end{definition}
Within each episode $k \in [K]$, at each time-step $h$, we perturb the dataset by adding zero mean random Gaussian noise to the reward in the replay buffer $\{(s_h^\tau, a_h^\tau, r_h^\tau) \}_{\tau \in [k-1]}$ and the regularizer before we solve the perturbed regularized least-squares regression. At each time step $h$, we repeat the process for $M$ (to be specified in Section~\ref{Section:Theoretical Analysis})  times and use the maximum of the regressor as the optimistic estimate of the state-action value function. Concretely, we set $V_{H+1}^k = 0$ and calculate $Q_{H}^k, Q_{H-1}^k, \ldots, Q_{1}^k$ iteratively as follows. For each $h\in[H]$ and $m \in [M]$, we solve the following perturbed regression problem,
\begin{equation}\label{eq:perturbed-regression-problem}
  \Tilde{f}_h^{k,m}\leftarrow \argmin_{f\in\mathcal{F}}L(f \,|\, \Tilde{\mathcal{D}}_h^{k,m})^2+\lambda \Tilde{R}(f).
\end{equation}
We set $Q_h^{k,m}(\cdot, \cdot) =\Tilde{f}_h^{k,m}(\cdot, \cdot)$ and define 
\begin{equation}
    Q_h^{k}(\cdot,\cdot)= \min\{\max_{m\in[M]}\{Q_h^{k,m}(\cdot,\cdot)\},H-h+1\}.
\end{equation}
We then choose the greedy policy with respect to $Q_h^k$ and collect a trajectory data for the $k$-th episode. We repeat the procedure until all the $K$ episodes are completed.

\subsection{LSVI-PHE with Linear Function Class}
We now present LSVI-PHE when we consider linear function class (see Algorithm~\ref{Algorithm-linear MDP reward perturbation}). In this case, the following proposition shows that, adding scalar Gaussian noise to the reward is equivalent to perturbing the least-squares estimate using $d$-dimensional multivariate Gaussian noise. 

\begin{proposition}\label{Proposition:noise}
In line 9 of Algorithm~\ref{Algorithm-linear MDP reward perturbation}, conditioned on all the randomness except $\{\epsilon^{k,i,j}_{h}\}_{(i,j) \in [k-1]\times[M]}$ and $\{\xi^{k,j}_{h}\}_{j \in [M]}$, the estimated parameter $\Tilde{\theta}^{k,j}_h$ satisfies
$$\Tilde{\theta}^{k,j}_h - \Hat{\theta}^{k,j}_h\sim N(0,\sigma^2(\Lambda_h^k)^{-1}),$$
where $\Hat{\theta}^{k,j}_h=(\Lambda_h^k)^{-1}(\sum_{\tau=1}^{k-1}[r_h^\tau+V_{h+1}^k(s_{h+1}^\tau)]\phi(s_h^\tau,a_h^\tau))$ is the unperturbed regressor.
\end{proposition}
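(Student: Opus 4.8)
The plan is to write out the perturbed least-squares solution for the linear function class explicitly and recognize it as an affine-linear transformation of the Gaussian noise vectors, so that the claimed normality follows from closure of Gaussians under linear maps. Concretely, in the linear setting $\mathcal{F}=\{\phi(\cdot,\cdot)^\top\theta : \theta\in\mathbb{R}^d\}$, the regularizer is $R(f)=\|\theta\|_2^2=\sum_{j=1}^d \theta_j^2$, so the perturbed regularizer is $\widetilde R(f)=\sum_{j=1}^d(\theta_j+\epsilon^{k,i,j}_h)^2$ and the perturbed loss is $\sum_{\tau=1}^{k-1}\bigl(\phi(s_h^\tau,a_h^\tau)^\top\theta - r_h^\tau - V_{h+1}^k(s_{h+1}^\tau) - \xi_h^{k,j,\tau}\bigr)^2$. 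First I would take the gradient in $\theta$ and set it to zero to get the normal equations, which give a closed form
\[
\widetilde\theta_h^{k,j} = (\Lambda_h^k)^{-1}\Bigl(\sum_{\tau=1}^{k-1}\bigl[r_h^\tau + V_{h+1}^k(s_{h+1}^\tau)+\xi_h^{k,j,\tau}\bigr]\phi(s_h^\tau,a_h^\tau) + \lambda\,\boldsymbol{\epsilon}^{k,j}_h\Bigr),
\]
where $\Lambda_h^k = \sum_{\tau=1}^{k-1}\phi(s_h^\tau,a_h^\tau)\phi(s_h^\tau,a_h^\tau)^\top + \lambda I$ and $\boldsymbol{\epsilon}^{k,j}_h$ collects the $d$ regularizer noises. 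Subtracting the stated unperturbed regressor $\widehat\theta_h^{k,j}$ cancels the deterministic part and leaves
\[
\widetilde\theta_h^{k,j} - \widehat\theta_h^{k,j} = (\Lambda_h^k)^{-1}\Bigl(\sum_{\tau=1}^{k-1}\xi_h^{k,j,\tau}\phi(s_h^\tau,a_h^\tau) + \lambda\,\boldsymbol{\epsilon}^{k,j}_h\Bigr).
\]

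Next I would invoke the conditioning: we condition on everything except the noises $\{\epsilon^{k,i,j}_h\}$ and $\{\xi^{k,j}_h\}$, so $\Lambda_h^k$, the feature vectors $\phi(s_h^\tau,a_h^\tau)$, and $V_{h+1}^k$ are all fixed (non-random) quantities. Hence the right-hand side is a fixed linear transformation applied to the i.i.d.\ mean-zero Gaussian vector formed by stacking the $\xi_h^{k,j,\tau}$'s and $\epsilon^{k,i,j}_h$'s, each with variance $\sigma^2$. A linear image of a Gaussian is Gaussian with mean zero, so it remains to compute the covariance: writing $\Phi$ for the $(k-1)\times d$ design matrix with rows $\phi(s_h^\tau,a_h^\tau)^\top$, the vector inside the parentheses is $\Phi^\top\xi + \lambda\boldsymbol{\epsilon}$ with covariance $\sigma^2(\Phi^\top\Phi + \lambda^2 I)$; applying $(\Lambda_h^k)^{-1}$ gives covariance $\sigma^2 (\Lambda_h^k)^{-1}(\Phi^\top\Phi+\lambda^2 I)(\Lambda_h^k)^{-1}$. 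This does not match $\sigma^2(\Lambda_h^k)^{-1}$ in general, so the anticipated subtlety is the exact form of the regularizer perturbation: for the identity $\widetilde\theta_h^{k,j}-\widehat\theta_h^{k,j}\sim N(0,\sigma^2(\Lambda_h^k)^{-1})$ to hold, the perturbation must be arranged so that the noise vector inside the parentheses has covariance exactly $\sigma^2\Lambda_h^k$, i.e.\ the $d$ regularizer terms must effectively contribute $\lambda I$ rather than $\lambda^2 I$ — which is achieved if each of the $\lambda$ ``pseudo-observations'' carries noise variance $\sigma^2$ and appears with the right scaling (equivalently, in Algorithm~\ref{Algorithm-linear MDP reward perturbation} the regularizer noise is $\sqrt\lambda\,\epsilon$ with $\epsilon\sim N(0,\sigma^2 I_d)$). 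I would check Algorithm~\ref{Algorithm-linear MDP reward perturbation}'s line~9 to confirm the exact scaling and then the covariance computation collapses: $(\Lambda_h^k)^{-1}(\Lambda_h^k\sigma^2)(\Lambda_h^k)^{-1} = \sigma^2(\Lambda_h^k)^{-1}$.

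So the key steps, in order, are: (i) specialize Definition~\ref{def:perturbed-least-squares} to the linear class and write down the perturbed objective; (ii) solve the normal equations to get the closed-form $\widetilde\theta_h^{k,j}$; (iii) subtract $\widehat\theta_h^{k,j}$ to isolate the noise-dependent part as a linear functional of the Gaussian noise; (iv) note that under the stated conditioning all other quantities are deterministic, so the difference is Gaussian with mean zero; and (v) compute the covariance, using the precise scaling of the regularizer perturbation to obtain $\sigma^2(\Lambda_h^k)^{-1}$. I expect step (v) — getting the bookkeeping of the regularizer noise exactly right so the $\lambda$ factors produce $\Lambda_h^k$ and not $\Phi^\top\Phi + \lambda^2 I$ — to be the main (though purely mechanical) obstacle; everything else is the standard ridge-regression-with-pseudo-observations computation combined with the elementary fact that affine maps preserve Gaussianity.
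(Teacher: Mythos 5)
Your proposal is correct and follows essentially the same route as the paper: write $\Tilde{\theta}^{k,j}_h$ in closed form, subtract $\Hat{\theta}^{k}_h$ to isolate $(\Lambda_h^k)^{-1}\bigl(\sum_{\tau}\epsilon^{k,\tau,j}_h\phi(s_h^\tau,a_h^\tau)+\xi^{k,j}_h\bigr)$, and use closure of Gaussians under fixed linear maps together with the covariance computation $(\Lambda_h^k)^{-1}(\sigma^2\Lambda_h^k)(\Lambda_h^k)^{-1}=\sigma^2(\Lambda_h^k)^{-1}$. Your step (v) worry resolves exactly as you anticipate: Algorithm~\ref{Algorithm-linear MDP reward perturbation} samples $\xi^{k,j}_h\sim\cN(0,\sigma^2\lambda I_d)$ and adds it directly to $\rho_h^{k,j}$ before applying $(\Lambda_h^k)^{-1}$, so the regularizer noise contributes covariance $\sigma^2\lambda I$ (not $\sigma^2\lambda^2 I$), and the paper simply takes this closed-form update as given rather than re-deriving it from Definition~\ref{def:perturbed-least-squares}.
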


Intuitively, adding a zero-mean multivariate Gaussian noise on the parameter $\Hat{\theta}_h^k$ can guarantee that $\Tilde{Q}_h^k$ is optimistic with constant probability. By repeating this procedure multiple times, this constant probability can be amplified to arbitrary high probability.

\begin{algorithm}[t]
\caption{\label{Algorithm-linear MDP reward perturbation} LSVI-PHE with Linear function class}

\begin{algorithmic}[1]
\STATE Set $M$ to be a fixed integer.
\STATE \textbf{For} episode $k=1,2,\ldots, K$ \textbf{do}
\STATE \hspace{0.05in} Receive the initial state $s_1^k$.

\STATE \hspace{0.05in} \textbf{For} step {$h=H, H-1,\ldots, 1$} \textbf{do}  \label{line:linear-pes-start}

\STATE \hspace{0.13in} $\Lambda^k_h \leftarrow \sum_{\tau=1}^{k-1}\phi(s_h^\tau,a_h^\tau)\phi(s_h^\tau,a_h^\tau)^\top +  \lambda{I}$.\label{line:lambda}\vspace{0.054in}
\STATE \hspace{0.13in} Sample i.i.d. $\{\epsilon^{k,\tau,j}_{h}\}_{(\tau,j) \in [k-1]\times[M]} \sim \cN(0,\sigma^2) $.\label{line:noise-sample}
\STATE \hspace{0.13in} Sample i.i.d. $\{\xi^{k,j}_{h}\}_{j \in [M]} \sim \cN(0,\sigma^2\lambda I_d) $.\label{line:noise vector sample}

\STATE \hspace{0.13in} $\rho_h^{k,j} \leftarrow \sum_{\tau=1}^{k-1}\left( [ r^\tau_{h} + V_{h+1}^{k}(s_{h+1}^\tau)+\epsilon^{k,\tau,j}_{h}] \phi(s_h^\tau,a_h^\tau) \right)$.

\STATE  \hspace{0.13in} $\Tilde{\theta}^{k,j}_h \!\leftarrow (\Lambda^{k}_h)^{-1} (\rho_h^k +  \xi^{k,j}_{h})$. \label{line:theta-tilde}

\STATE \hspace{0.13in} $\Tilde{Q}^{k,j}_h(\cdot,\cdot) \leftarrow  \phi(\cdot, \cdot)^\top\Tilde{\theta}^{k,j}_h$ for $j \in [M]$.
\label{line:get_Q_tilde_perturbed}\vspace{0.054in}
\STATE \hspace{0.13in} $Q^k_{h}(\cdot,\cdot) \leftarrow \min \{\max_{j \in [M]} \Tilde{Q}^{k,j}_h(\cdot,\cdot), H -h +1\}^+$  
\STATE \hspace{0.13in} $V^k_{h}(\cdot) \leftarrow \max_{a \in \cA} Q_{h}^{k}(\cdot,a)$.
~\label{Alg:min-max-for-Q}

\STATE \hspace{0.05in} \textbf{For} step {$h=1, 2, \ldots, H$} \textbf{do}  \label{line:linear-pis-start}

\STATE \hspace{0.13in}  Take  action  $a^k_{h} \leftarrow \argmax_{a \in \cA} Q_h^{k}(s_h^k,a)$.
\STATE \hspace{0.15in}  Observe reward $r^k_{h}(s_h^k,a_h^k)$, get next state $s^k_{h+1}$.  \label{line:linear-pis-end}
\end{algorithmic}
\end{algorithm}

\section{Theoretical Analysis}\label{Section:Theoretical Analysis}

For the analysis we will need the concept of the \textit{eluder dimension} due to \cite{russo2013eluder}. Let $\mathcal{F}$ be a set of real-valued functions with domain $\mathcal{X}$. For $f \in \cF, x_1,...,x_t \in \mathcal{X}$, introduce the notation $f|_{(x_1,...,x_t)} = (f(x_1),...,f(x_t))$. We say that $x \in \mathcal{X}$ is $\epsilon$-independent of $x_1,...,x_t \in \mathcal{X}$ given $\cF$ if there exists $f,f' \in \cF$ such that $||(f-f')|_{(x_1,...,x_t)}||_2 \leq \epsilon$ while $f(x) - f'(x) > \epsilon$.

\begin{definition}
[Eluder dimension, \cite{russo2013eluder}]\label{definition:eluder-dimension} The eluder dimension $\text{dim}_\mathcal{E}(\cF,\epsilon)$ of $\cF$ at scale $\epsilon$ is the length of the longest sequence $(x_1,...,x_n)$ in $\mathcal{X}$ such that for some $\epsilon' \geq \epsilon$, for any $2 \leq t \leq n$, $x_t$ is $\epsilon'$-independent of $(x_1,...,x_{t-1})$ given $\cF$.
\end{definition}

For a more detailed introduction of eluder dimension, readers can refer to \citep{russo2013eluder,osband2014model,wang2020reinforcement,ayoub2020model}.

\subsection{Assumptions for General Function Approximation}

For our general function approximation analysis, we make a few assumptions first. To emphasize the generality of our assumptions, in Section~\ref{subsection:assumption-linear-function-class}, we show that our assumptions are satisfied by linear function class.

Our algorithm (Algorithm~\ref{Algorithm-GFA}) receives a function class $\cF \subseteq \{f: \cS \times \cA \rightarrow [0, H]\}$ as input and furthermore, similar to \cite{wang2020reinforcement,ayoub2020model}, we assume that for any $V:\cS\rightarrow [0,H]$, upon applying the Bellman backup operator, the output function lies in the function class $\cF$. Concretely, we have the following assumption.

\begin{assumption}\label{assumption:realizibility}
For any $V: \cS \rightarrow [0,H]$ and for any  $h\in[H]$, $r_h + P_hV \in \cF$, i.e. there exists a function $f_{V} \in \cF$ such that for all $(s,a) \in \cS\times \cA$ it satisfies
\begin{equation}\label{Eq:Assumption realizibility}
    f_V(s,a) = r_h(s,a) + P_hV( s,a).
\end{equation}
\end{assumption}

We emphasize that many standard assumptions in the RL theory literature such as tabular MDPs \cite{jaksch2010near,jin2018q} and Linear MDPs \cite{yang2019sample, jin2019provably} are special cases of Assumption~\ref{assumption:realizibility}.
In the appendix, we consider a misspecified setting and show that even when \eqref{Eq:Assumption realizibility} holds approximately, Algorithm~\ref{Algorithm-GFA} achieves provable regret bounds.

We further assume that our function class has bounded covering number.
\begin{assumption}\label{Assumption:bounded covering number}
For any $\varepsilon > 0$, there exists an $\varepsilon$-cover $\mathcal{C}(\cF, \varepsilon)$ with bounded covering number $\mathcal{N}(\cF,\varepsilon)$.
\end{assumption}

 Next we define anti-concentration width, which is a function of the function class $\cF$, dataset $\cD$ and noise variance $\sigma^2$.

\begin{definition}
[Anti-concentration Width Function]\label{def:anticoncentration width} For a loss function $L(\cdot\,|\,\cdot)$ and dataset $\mathcal{D}$, let $\Hat{f}=\argmin_{f\in\cF} L(f \,|\, \mathcal{D})^2 + \lambda R(f)$ be the regularized least squares solution and $\Tilde{f}_\sigma=\argmin_{f\in\cF} L(f \,|\, \Tilde{\mathcal{D}}_\sigma)^2+\lambda \Tilde{R}_\sigma(f)$ be the perturbed regularized least-squares solution. For a fixed $v \in (0,1)$, let $g_{\sigma}:X \rightarrow \mathbb{R}$ be a function  such that for any input $x$:
$$g_{\sigma}(x)=\sup\left\{g \in \mathbb{R}:\mathbb{P}\left(\Tilde{f}_\sigma(x)\geq\Hat{f}(x)+g\right)\geq v\right\}.$$
We call $g_\sigma(\cdot)$ the anti-concentration width function.
\end{definition}

In words, $g_\sigma(\cdot)$ is the largest value some $g\in \mathbb{R}$ can take such that the probability that $\tilde{f}_\sigma$ is greater than $\hat{f} + g$ is at least $v$.

We assume that for a concentrated function class, there exists a $\sigma$ such that the anti-concentration width is larger than the function class width.

\begin{assumption}
[Anti-concentration]\label{assumption:anticoncentration} Given the input $X = \{x_{i}\}_{i=1}^n$ of dataset $\cD$ and some arbitrary positive constant $\beta$, we define a function class $\cF_{X,\beta}=\{f:\|f-\Hat{f}\|_{X}^2+\lambda R(f-\Hat{f})\leq\beta\}$. We assume that there exists a $\sigma$ such that
$$g_{\sigma'}(x)\geq w(\cF_{X,\beta},x),$$
for all inputs $x$ and $\sigma'\geq\sigma$.
\end{assumption}

This assumption guarantees that the randomized perturbation over the regression target has large enough probability of being optimistic. This assumption is satisfied by the linear function class. For more details, see Section~\ref{subsection:assumption-linear-function-class}.

Our next assumption is on the regularizer function $R(\cdot)$.

\begin{assumption}
[Regularization]\label{assumption:regularization} We assume that our regularizer $R(\cdot)$ has several basic properties.
\begin{itemize}
    \item $R(f)+R(f')\geq c R(f+f')$ for some positive constant $c>0$, for all $ f,f'\in\cF$.
    \item $R(f)=R(-f)\geq0$, for all $ f\in\cF$.
    \item For any $V:\cS\rightarrow[0,H]$, $R(r+PV)\leq B$ for some constant $B \in \mathbb{R}$.
\end{itemize}
\end{assumption}

Here, the first property is nothing but a variation of triangle inequality. The second property is a symmetry property which is natural for norms. Both these properties are satisfied by commonly used regularizers such as $\ell_0$, $\ell_1$ or $\ell_2$ norms. The last property is a boundedness assumption. For the case of $\ell_0$ norm $B$ takes the value of the dimension of the space. Moreover, along with the most commonly used (weighted) $\ell_2$ regularizer, many other regularizers also satisfy this property.

Our final assumption is regarding the  boundedness  of the eluder dimension of the function class.

\begin{assumption}[Bounded Function Class]\label{assumption:bounded_function_class}
For any $V:\cS\rightarrow[0,H]$ and any $\cZ \in (\cS \times \cA)^\mathbb{N}$, let $\cF'$ be a subset of function class $\cF$, consisting of all $f\in \cF$ such that
\begin{equation*}
    \|f-\upsilon\|^2_{\cZ}+\lambda R(f-\upsilon) \leq \beta,
\end{equation*}
where $v=r + PV$. We assume that $\cF'$ has bounded eluder dimension.
\end{assumption}

Note that in \citet{wang2020reinforcement}, they assume that the eluder dimension of the whole function class $\cF$ is bounded. In contrast, ours is a weaker assumption since we only assume a subset $\cF'$ to have a bounded eluder dimension.

In the following section, we show that the linear function class and ridge regularizer  satisfy all the above assumptions.

\subsubsection{Linear Function Class}\label{subsection:assumption-linear-function-class}

First, we recall the standard linear MDP definition which was introduced in ~\cite{yang2019sample, jin2019provably}.

\begin{definition}[Linear MDP, \cite{yang2019sample, jin2019provably}]\label{definition-linear-MDP}
We consider a linear Markov decision process, $\mathrm{MDP}(\cS, \cA, H, \mathbb{P}, r)$ with a feature map $\phi: \cS \times \cA \rightarrow \mathbb{R}^d$, where for any $(h,k) \in [H] \times [K]$, there exist $d$ unknown (signed) measures $\mu_h = (\mu_h^{(1)},\cdots, \mu_h^{(d)})$ over $\cS$ and an unknown vector $w_h \in \mathbb{R}^d$, such that for any $(s,a) \in \cS \times \cA$, the following holds:
\begin{equation*}
    \mathbb{P}_h(s'| s,a) = \langle \phi(s,a), \mu_h(s')\rangle, \quad r_h(s,a) = \langle \phi(s,a), w_h \rangle.
\end{equation*}
Without loss of generality, we assume, for all $(s,a) \in \cS \times \cA$, $\|\phi(s,a)\| \leq 1$, and for all $h \in [H]$, $\|w_h\|  \leq \sqrt{d}$ and $\|\mu_h(\cS) \| \leq \sqrt{d}$. 
\end{definition}

Consider a fixed  episode $k$ and step $h$. We define $\cF=\{f_\theta:f_\theta(s,a)=\phi(s,a)^\top\theta\}$ where $\theta \in \mathbb{R}^d$, $\cD = \{(s_h^\tau,a_h^\tau,r_h^\tau)\}_{\tau \in [k-1]}$, and $R(f_\theta)=\|\theta\|^2=\sum_{j=1}^d p_j(f_\theta)^2$ where $p_j(f_\theta)=e_j^\top\theta$ with $e_j$ being the $j$-th standard basis vector. It is well known that linear function class satisfies Assumption~\ref{assumption:realizibility} in linear MDP \citep{yang2019sample,jin2019provably}.
We set $\Hat{f}=\argmin_{f\in\cF} L(f\,|\,\cD)^2+\lambda R(f)$ to be $f_{\Hat{\theta}}$. Then we have 
\begin{align*}
    \Hat{\theta}&=\argmin_\theta \sum_{\tau=1}^{k-1} (\phi(s_h^\tau,a_h^\tau)^\top\theta-r_h^\tau)^2+\lambda\|\theta\|^2 \\ &=(\Lambda^k_h)^{-1}\sum_{\tau=1}^{k-1} r_h^\tau\phi(s_h^\tau,a_h^\tau),
\end{align*}

where $\Lambda_h^k=\sum_{\tau=1}^{k-1}\phi(s_h^\tau,a_h^\tau)\phi(s_h^\tau,a_h^\tau)^\top+\lambda I$. Similarly we set $f_{\Tilde{\theta}}=\Tilde{f}_\sigma=\argmin_{f\in\cF} L(f\,|\,\tilde{\cD}_\sigma)^2+\lambda \Tilde{R}_\sigma(f)$. Then we have 
\begin{align*}
  \Tilde{\theta} &=(\Lambda^k_h)^{-1}\sum_{\tau=1}^{k-1} (r_h^\tau+\xi_\tau)\phi(s_h^\tau,a_h^\tau)+(\Lambda^k_h)^{-1}\sum_{j=1}^d \xi'_j e_j \\ &\sim \cN(\Hat{\theta},\sigma^2(\Lambda^k_h)^{-1}).  
\end{align*}

For Definition~\ref{def:anticoncentration width}, we set $v=\Phi(-1)$. Using the anti-concentration property of Gaussian distribution, it is straightforward to show that for any $(s,a) \in \cS\times\cA$:

\begin{equation*}
    \mathbb{P}\left(f_{\Tilde{\theta}}(s,a)\geq f_{\Hat{\theta}}(s,a)+\sigma\|\phi(s,a)\|_{(\Lambda^k_h)^{-1}}\right)=v.
\end{equation*}
So we have $g_\sigma(s,a)\geq\sigma\|\phi(s,a)\|_{(\Lambda^k_h)^{-1}}$ from Definition~\ref{def:anticoncentration width}.

For Assumption~\ref{assumption:anticoncentration}, the function class $\cF_{\cD,\beta}=\{f:L(f-\Hat{f} \,|\,\cD)^2+\lambda R(f-\Hat{f})\leq\beta\}$ is equivalent to $\Theta_{\cD,\beta}=\{\theta:(\theta-\Hat{\theta})^\top\Lambda_h^k(\theta-\Hat{\theta})\leq\beta\}.$ So the width on the state-action pair $(s,a)$ is $2\sqrt{\beta}\|\phi(s,a)\|_{(\Lambda^k_h)^{-1}}.$ If we set $\sigma=2\sqrt{\beta}$, we have 
$$g_\sigma(s,a)\geq w(\cF_{\cD,\beta},s,a).$$

For Assumption~\ref{assumption:regularization}, as $R(f_\theta)=\|\theta\|^2$ is a $\ell_2$ norm function, the first two properties are direct to show with constant $c=1/2$. For the third property, we have that
$$g(s,a)=r(s,a)+P(s,a)V=\phi(s,a)(w+\sum_{s'}V(s')\mu(s')).$$
So we have $g=g_\theta$ where $\theta=w+\sum_{s'}V(s')\mu(s')$ and $\|\theta\|^2\leq 2Hd$.

For Assumption~\ref{assumption:bounded_function_class}, we set $\theta_f:f=f_{\theta_f}$, $\theta_v:v=f_{\theta_v}$ and $\Theta_{\cF'}=\{\theta:f_\theta\in\cF'\}$ to be the parameterization. From Assumption~\ref{assumption:regularization}, we have $\|\theta_v\|^2\leq2Hd$. In addition, we have $\lambda R(f-v)=\lambda\|\theta_f-\theta_v\|^2\leq\beta$. Then we have 
\begin{align*}
    \Theta_{\cF'} &\subseteq\{\theta_f:\|\theta_f-\theta_v\|^2\leq \beta/\lambda,\|\theta_v\|^2\leq 2Hd\}\\
    &=\{\theta_f:\|\theta_f\|^2\leq2\beta/\lambda+4Hd\}.
\end{align*}
As shown in \citep{russo2013eluder}, this $\cF'$ has eluder dimension $\Tilde{O}(d)$.

\subsection{Regret bound for General Function Approximation}

First, we specify our choice of the noise variance $\sigma^2$ in the algorithm. We prove certain concentration properties of the regularized regressor $\Hat{f}_h^k$ so that the condition in Assumption~\ref{assumption:anticoncentration} holds. Thus we can choose an appropriate $\sigma$ such that the Assumption~\ref{assumption:anticoncentration} is satisfied. A more detailed description is provided in the appendix.

Our first lemma is about  the concentration of the regressor. A similar argument appears in \cite{wang2020reinforcement} but their result does not include regularization, which is essential in our randomized algorithm to ensure exploration in all directions.

\begin{lemma} [Informal Lemma on Concentration]
Under Assumptions~\ref{assumption:realizibility}, \ref{Assumption:bounded covering number}, \ref{assumption:anticoncentration}, \ref{assumption:regularization}, and~\ref{assumption:bounded_function_class}, let $\cF_h^{k,m}=\{f\in\cF| \lVert f - \widetilde{f}_h^{k,m} \rVert_{\cZ_h^k}^2 + \lambda R(f-\widetilde{f}_h^{k,m}) \leq \beta(\cF,\delta)\}$, where $\mathcal{Z}_h^k = \{(s_h^\tau, a_h^\tau)\}_{\tau \in [k-1]}$, and $$\beta(\cF,\delta)= \Tilde{O}\left((H+\sigma)^2\log\mathcal{N}\left(\cF,1/T\right)\right).$$
With high probability, for all $(k,h,m)\in[K]\times[H]\times[M]$, we have
$$r_h(\cdot,\cdot)+P_hV_{h+1}^k(\cdot,\cdot)\in\cF_h^{k,m}.$$
\end{lemma}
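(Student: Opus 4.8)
The plan is to establish the concentration claim by a standard self-normalized/uniform-convergence argument over the function class, carefully accounting for the perturbation noise and the regularizer so that the resulting confidence set is exactly the one defined in the lemma.

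\textbf{Step 1: Reduce to a fixed $V$ and a fixed perturbation realization.} Fix an episode $k$, a step $h$ and an index $m$. Condition on the history up to the point where $V_{h+1}^k$ is determined, so that $V_{h+1}^k$ is a fixed function $\cS \to [0,H]$ and the dataset inputs $\{(s_h^\tau,a_h^\tau)\}_{\tau\in[k-1]}$ are fixed; also condition on the Gaussian noises $\{\xi_{h,k}^{\tau,m}\}$ and the regularizer noises. By Assumption~\ref{assumption:realizibility} there is a target function $f^\star := r_h + P_h V_{h+1}^k \in \cF$, and by Assumption~\ref{assumption:regularization} (third bullet) we have $R(f^\star) \le B$. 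The observed regression targets decompose as $y_\tau = f^\star(s_h^\tau,a_h^\tau) + \eta_\tau + \xi_{h,k}^{\tau,m}$, where $\eta_\tau = V_{h+1}^k(s_{h+1}^\tau) - [P_h V_{h+1}^k](s_h^\tau,a_h^\tau)$ is a martingale-difference noise bounded in $[-H,H]$, and $\xi_{h,k}^{\tau,m}$ is the injected Gaussian noise (sub-Gaussian with parameter $\sigma$).

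\textbf{Step 2: Bound the empirical loss gap of the perturbed minimizer.} Since $\widetilde f_h^{k,m}$ minimizes $L(f\mid\widetilde\cD_h^{k,m})^2 + \lambda\widetilde R(f)$, compare its objective value with that of $f^\star$ to get an inequality of the form $\lVert \widetilde f_h^{k,m} - f^\star\rVert_{\cZ_h^k}^2 \le 2\sum_\tau (\eta_\tau + \xi_{h,k}^{\tau,m})(\widetilde f_h^{k,m} - f^\star)(s_h^\tau,a_h^\tau) + \lambda\big(\widetilde R(f^\star) - \widetilde R(\widetilde f_h^{k,m})\big) + (\text{cross terms})$. The first sum is handled by a uniform Freedman/self-normalized concentration bound over an $\varepsilon$-net $\mathcal C(\cF,1/T)$ of $\cF$ (Assumption~\ref{Assumption:bounded covering number}), which contributes the $\log\mathcal N(\cF,1/T)$ factor and the $(H+\sigma)^2$ scale (splitting the bounded part $\eta$ and the Gaussian part $\xi$); the discretization error is absorbed because $1/T$ is polynomially small. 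The regularizer terms are controlled using Assumption~\ref{assumption:regularization}: $\widetilde R(f^\star) = \sum_j (p_j(f^\star)+\xi_j')^2 \lesssim R(f^\star) + \sum_j (\xi_j')^2 \le B + O(\sigma^2 D\log(1/\delta))$ with high probability, and $\widetilde R(\widetilde f_h^{k,m}) \ge \tfrac12 R(\widetilde f_h^{k,m}) - \sum_j(\xi_j')^2$ by completing the square. Rearranging and using the triangle-inequality-type bound $R(\widetilde f_h^{k,m} - f^\star) \le \tfrac1c(R(\widetilde f_h^{k,m}) + R(f^\star))$ converts this into a bound of the required form $\lVert \widetilde f_h^{k,m} - f^\star\rVert_{\cZ_h^k}^2 + \lambda R(\widetilde f_h^{k,m} - f^\star) \le \beta(\cF,\delta)$, i.e.\ $f^\star \in \cF_h^{k,m}$.

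\textbf{Step 3: Union bound over $(k,h,m)$ and over the covering of reachable value functions.} Since $V_{h+1}^k$ is itself random and data-dependent, one cannot simply fix it; the standard fix (as in \citet{wang2020reinforcement}) is to observe that $V_{h+1}^k$ lies in a structured class (clipped maxima over $M$ perturbed regressors, each in $\cF$), take an $\varepsilon$-net over that class of value functions, apply the fixed-$V$ bound to each net element, and pay an extra $\log\mathcal N$ factor — already subsumed in the $\tilde O$. Then union bound over $K H M = \mathrm{poly}(T)$ indices, adjusting $\delta$ by a polynomial factor, which only changes the logarithmic constants hidden in $\tilde O$.

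The main obstacle I expect is Step 3 — handling the data-dependence of $V_{h+1}^k$ cleanly — together with the bookkeeping in Step 2 that keeps the perturbation noise $\xi$ from inflating the regularizer term: one must show the injected noise enters the final radius only through the benign $(H+\sigma)^2$ factor and $O(\sigma^2 D)$ terms that are themselves $\tilde O((H+\sigma)^2\log\mathcal N)$ under the intended choice of $\sigma$, rather than through an uncontrolled $R(\widetilde f_h^{k,m})$ term. The self-normalized concentration for the Gaussian part is routine (it is literally a Gaussian tail after conditioning), so the genuine work is the uniform-over-$\cF$ argument plus the value-function covering, exactly the place where Assumptions~\ref{Assumption:bounded covering number} and \ref{assumption:regularization} do their job.
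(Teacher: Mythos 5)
Your proposal matches the paper's own argument essentially step for step: the paper likewise fixes a target $f_V=r_h+P_hV$, treats the residual (environment noise plus injected Gaussian noise, the latter bounded on a good event) as a bounded martingale difference sequence, applies Azuma--Hoeffding uniformly over a $1/T$-cover of $\cF$, expands the perturbed objective and regularizer exactly as in your Step 2 (using the approximate triangle inequality of Assumption~\ref{assumption:regularization} to isolate $R(f-f_V)$ and the $B+\sigma\sqrt{BD}$ terms), and then handles the data-dependence of $V_{h+1}^k$ by covering the induced class of clipped-max value functions and union bounding over $(k,h,m)$, just as in your Step 3. The only cosmetic difference is that the paper conditions on a high-probability bound on the Gaussian perturbations and folds them into a single bounded MDS rather than splitting the sub-Gaussian and bounded parts, which yields the same $(H+\sigma)^2\log\mathcal{N}(\cF,1/T)$ radius.
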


This lemma shows that the perturbed regularized regression still enjoys concentration.

Our next lemma shows that LSVI-PHE is optimistic with high probability.

\begin{lemma} [Informal Lemma on Optimism]
Let
\begin{equation*}
M=\ln\left(\frac{T|\cS||\cA|}{\delta}\right)/\ln\left(\frac{1}{1-v}\right).
\end{equation*}
With probability at least $1-\delta$, for all $(s,a,h,k) \in \cS\times\cA\times[H]\times[K]$, we have
$$Q_h^*(s,a)\leq Q_h^k(s,a).$$
\end{lemma}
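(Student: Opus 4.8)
The plan is to establish optimism by a backward induction on $h$, from $h = H+1$ down to $h = 1$, combining the concentration lemma (which places the true Bellman backup $r_h + P_h V_{h+1}^k$ inside the confidence ball $\cF_h^{k,m}$ around $\widetilde f_h^{k,m}$) with the anti-concentration assumption (which guarantees each perturbed estimate overshoots with constant probability $v$) and a union bound over the $M$ independent samples to boost $v$ to high probability. The base case is trivial since $V_{H+1}^* = V_{H+1}^k = 0$. For the inductive step, I would condition on the event of the concentration lemma and on the inductive hypothesis $Q_{h+1}^* \le Q_{h+1}^k$ pointwise, which gives $V_{h+1}^* \le V_{h+1}^k$ pointwise, hence $r_h + P_h V_{h+1}^* \le r_h + P_h V_{h+1}^k$ as functions.

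The heart of the argument, for a fixed $(s,a,h,k)$: let $\hat f_h^k$ denote the unperturbed regularized regressor on the dataset with target $r_h^\tau + V_{h+1}^k(s_{h+1}^\tau)$. By the concentration lemma, $r_h + P_h V_{h+1}^k \in \cF_h^{k,m}$ for the confidence-ball radius $\beta$, and a parallel (unperturbed) concentration statement places $r_h + P_h V_{h+1}^k$ in a ball around $\hat f_h^k$; consequently $|\hat f_h^k(s,a) - (r_h + P_h V_{h+1}^k)(s,a)| \le w(\cF_{X,\beta}, (s,a))$, the width of the concentrated class at $(s,a)$. Now invoke Assumption~\ref{assumption:anticoncentration}: for the chosen $\sigma$, the anti-concentration width satisfies $g_\sigma(s,a) \ge w(\cF_{X,\beta},(s,a))$, so by Definition~\ref{def:anticoncentration width} each perturbed estimate obeys
\begin{equation*}
\mathbb{P}\!\left(\widetilde f_h^{k,m}(s,a) \ge \hat f_h^k(s,a) + w(\cF_{X,\beta},(s,a))\right) \ge v .
\end{equation*}
On this event, $\widetilde f_h^{k,m}(s,a) \ge \hat f_h^k(s,a) + w(\cF_{X,\beta},(s,a)) \ge (r_h + P_h V_{h+1}^k)(s,a) \ge (r_h + P_h V_{h+1}^*)(s,a) = Q_h^*(s,a)$. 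Since the $M$ perturbations are independent, the probability that \emph{none} of them overshoots is at most $(1-v)^M$, and with $M = \ln(T|\cS||\cA|/\delta)/\ln(1/(1-v))$ this is at most $\delta/(T|\cS||\cA|)$. Because $Q_h^*(s,a) \le H - h + 1$ always, the clipping in $Q_h^k = \min\{\max_m Q_h^{k,m}, H-h+1\}$ does not destroy optimism: $\max_m \widetilde f_h^{k,m}(s,a) \ge Q_h^*(s,a)$ implies $Q_h^k(s,a) \ge Q_h^*(s,a)$.

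Finally I would take a union bound over all $(s,a,h,k) \in \cS\times\cA\times[H]\times[K]$ (using $T = KH$ and discretizing via the covering number if $\cS,\cA$ are infinite — folding the covering argument into the constants, as is implicit in the informal statement) to conclude $Q_h^* \le Q_h^k$ simultaneously for all tuples with probability at least $1-\delta$, after also absorbing the failure probability of the concentration lemma. The main obstacle I anticipate is the conditioning/measurability bookkeeping: $\hat f_h^k$, the dataset, and the confidence ball $\cF_h^{k,m}$ all depend on $V_{h+1}^k$, which is itself built from earlier randomness, so the anti-concentration step must be applied conditionally on the $\sigma$-algebra generated by everything except the current step's Gaussian perturbations $\{\xi_{h,k}^{\tau,m}\}$ and $\{\xi'_j\}$ — exactly the conditioning in Proposition~\ref{Proposition:noise} — and one must check that the inductive hypothesis event and the concentration event are measurable with respect to that $\sigma$-algebra so that the constant-probability lower bound $v$ genuinely applies. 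A secondary subtlety is ensuring the radius $\beta$ used in the concentration lemma matches the $\beta$ for which Assumption~\ref{assumption:anticoncentration} supplies a valid $\sigma$, which is why $\sigma$ is chosen \emph{after} $\beta(\cF,\delta)$ is fixed.
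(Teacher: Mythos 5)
Your proposal is correct and follows essentially the same route as the paper's proof of Lemma~\ref{lemma:optimism-GFA}: backward induction on $h$, the unperturbed concentration lemma to place $r_h+P_hV_{h+1}^k$ in the confidence ball around $\hat f_h^k$, the anti-concentration assumption $g_\sigma \ge w(\cF_h^k,\cdot)$ to get a per-sample overshoot probability $v$, amplification over the $M$ i.i.d. perturbations to $1-(1-v)^M = 1-\delta/(T|\cS||\cA|)$, a union bound over all $(s,a,h,k)$, and the observation that clipping at $H-h+1$ preserves optimism since $Q_h^*\le H-h+1$. Your explicit flagging of the conditioning/measurability issue (applying anti-concentration conditionally on everything except the current step's noise) is a point the paper leaves implicit but handles the same way.
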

With optimism, the regret is known to be bounded by the sum of confidence width \cite{wang2020reinforcement}. As Assumption~\ref{assumption:bounded_function_class} assumes that all the confidence region is in a bounded function class in the measure of eluder dimension, we can adapt proof techniques from \citep{wang2020reinforcement} and prove our final result.

\begin{theorem} [Informal Theorem]\label{thm:main_gfa_regret}

Under Assumptions~\ref{assumption:realizibility}, \ref{Assumption:bounded covering number}, \ref{assumption:anticoncentration}, \ref{assumption:regularization}, and~\ref{assumption:bounded_function_class}, with high probability, Algorithm~\ref{Algorithm-GFA} achieves a regret bound of 
$$\mathrm{Regret}(K)\leq\Tilde{O}\left(\sqrt{\mathrm{dim_\mathcal{E}}(\cF,1/T)\beta(\cF,\delta)HT}\right),$$
where
$$\beta(\cF,\delta)= \Tilde{O}\left((H+\sigma)^2\log\mathcal{N}\left(\cF,1/T\right)\right).$$

\end{theorem}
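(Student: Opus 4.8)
The plan is to feed the two informal lemmas into the standard eluder-dimension ``sum-of-widths'' potential argument of \cite{russo2013eluder,wang2020reinforcement}. Throughout I condition on the intersection of the high-probability events of the concentration lemma and the optimism lemma, so that simultaneously: (i) $Q_h^\ast(s,a)\le Q_h^k(s,a)$ for all $(s,a,h,k)$; and (ii) $\upsilon_h^k:=r_h+P_hV_{h+1}^k\in\cF_h^{k,m}$ for all $(h,k,m)$ --- here one first notes $V_{h+1}^k(\cdot)=\max_a Q_{h+1}^k(\cdot,a)\in[0,H]$, so the concentration lemma does apply to it.

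The optimism event gives $V_1^\ast(s_1^k)\le V_1^k(s_1^k)$, hence $\mathrm{Regret}(K)\le\sum_{k=1}^K\big[V_1^k(s_1^k)-V_1^{\pi^k}(s_1^k)\big]$. Since $\pi^k$ is greedy with respect to $Q_h^k$ and $Q_h^k=\min\{\max_m Q_h^{k,m},H-h+1\}\le\max_m Q_h^{k,m}$, I pick for each $(h,k)$ an index $m^\ast=m^\ast(h,k)$ with $Q_h^{k,m^\ast}(s_h^k,a_h^k)=\max_m Q_h^{k,m}(s_h^k,a_h^k)=\tilde{f}_h^{k,m^\ast}(s_h^k,a_h^k)$ and decompose
\[
V_h^k(s_h^k)-V_h^{\pi^k}(s_h^k)=Q_h^k(s_h^k,a_h^k)-Q_h^{\pi^k}(s_h^k,a_h^k)\le\big[\tilde{f}_h^{k,m^\ast}-\upsilon_h^k\big](s_h^k,a_h^k)+\big[\upsilon_h^k-Q_h^{\pi^k}\big](s_h^k,a_h^k).
\]
The first bracket is at most $w\big(\cF_h^{k,m^\ast},(s_h^k,a_h^k)\big)$, because $\tilde{f}_h^{k,m^\ast}$ is the center of $\cF_h^{k,m^\ast}$ and $\upsilon_h^k\in\cF_h^{k,m^\ast}$ by event (ii); the second bracket equals, by the Bellman equation, $P_h(V_{h+1}^k-V_{h+1}^{\pi^k})(s_h^k,a_h^k)=(V_{h+1}^k-V_{h+1}^{\pi^k})(s_{h+1}^k)+\zeta_h^k$, where $\zeta_h^k$ is a martingale difference bounded by $H$ (both $V_{h+1}^k$ and $V_{h+1}^{\pi^k}$ are determined before the episode-$k$ trajectory is sampled). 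Unrolling over $h=1,\dots,H$ and summing over $k$ gives
\[
\mathrm{Regret}(K)\le\sum_{k=1}^K\sum_{h=1}^H w\big(\cF_h^{k,m^\ast(h,k)},(s_h^k,a_h^k)\big)+\sum_{k=1}^K\sum_{h=1}^H\zeta_h^k,
\]
and the martingale sum is $\tilde{O}(H\sqrt{T\log(1/\delta)})$ by Azuma--Hoeffding, which is lower order.

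To bound the width sum, fix $h$ and let $w_h^k(\cdot):=w\big(\bigcup_{m}\cF_h^{k,m},\cdot\big)$, which dominates $w\big(\cF_h^{k,m^\ast(h,k)},\cdot\big)$. Using Assumption~\ref{assumption:regularization} (the triangle-type inequality $R(f+f')\le c^{-1}(R(f)+R(f'))$ and $R(-f)=R(f)$) together with $\upsilon_h^k\in\cF_h^{k,m}$ for every $m$, any two $f,f'\in\bigcup_m\cF_h^{k,m}$ satisfy $\|f-f'\|_{\cZ_h^k}^2+\lambda R(f-f')\le O(\beta(\cF,\delta))$ and, in particular, $\|f-\upsilon_h^k\|_{\cZ_h^k}^2+\lambda R(f-\upsilon_h^k)\le O(\beta(\cF,\delta))$; hence by Assumption~\ref{assumption:bounded_function_class}, applied with $V=V_{h+1}^k$ and $\cZ=\cZ_h^k$, all these functions lie in a subclass $\cF'$ of eluder dimension $\tilde{O}(\dim_\mathcal{E}(\cF,1/T))$ (this is the sense in which $\dim_\mathcal{E}$ appears in the statement). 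This is exactly the setting of the standard eluder potential lemma, yielding $\sum_{k=1}^K w_h^k(s_h^k,a_h^k)\le\tilde{O}\big(H\dim_\mathcal{E}(\cF,1/T)+\sqrt{\dim_\mathcal{E}(\cF,1/T)\,\beta(\cF,\delta)\,K}\big)$. Summing over $h\in[H]$ turns $H\sqrt{\cdots K}$ into $\sqrt{\cdots HT}$ (as $H^2K=HT$) and leaves $H^2\dim_\mathcal{E}(\cF,1/T)$ as a lower-order term, which, combined with the martingale bound and a final union bound over the two $\delta$-probability failure events, gives $\mathrm{Regret}(K)\le\tilde{O}\big(\sqrt{\dim_\mathcal{E}(\cF,1/T)\,\beta(\cF,\delta)\,HT}\big)$ with high probability.

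The step I expect to be the main obstacle is verifying the hypotheses of the eluder potential lemma for the \emph{randomized} confidence sets $\cF_h^{k,m}$: one has to translate ``$f$ is close to the perturbed regressor $\tilde{f}_h^{k,m}$'' into ``$f$ is close to the true Bellman backup $\upsilon_h^k$ on the past data and $f$ lies in a class of bounded eluder dimension,'' which forces a careful chaining of Assumptions~\ref{assumption:regularization} and~\ref{assumption:bounded_function_class} with the concentration lemma. Two secondary nuisances are that the target $\upsilon_h^k$ changes with $k$ (absorbed by the covering-number factor inside $\beta(\cF,\delta)$, so the concentration lemma holds uniformly over all $V:\cS\to[0,H]$) and that the maximum over the $M$ perturbations must be folded into the width of the single union confidence set $\bigcup_m\cF_h^{k,m}$ rather than tracked separately.
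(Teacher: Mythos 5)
Your proposal is correct and follows essentially the same route as the paper's proof: optimism reduces the regret to a sum of confidence-set widths along the realized trajectory plus an Azuma--Hoeffding martingale term, and the width sum is controlled by embedding the (randomized) confidence sets into the bounded-eluder-dimension subclass of Assumption~\ref{assumption:bounded_function_class} and invoking the standard potential argument of \citet{wang2020reinforcement}. The only cosmetic difference is that you center the confidence sets at the perturbed regressors $\tilde{f}_h^{k,m}$ and place $r_h+P_hV_{h+1}^k$ inside them, whereas the paper centers the set used in the regret decomposition at $r_h+P_hV_{h+1}^k$ and places the $\tilde{f}_h^{k,m}$ inside it --- the two are interchangeable since the width function is symmetric in the pair of functions.
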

The theorem shows that our algorithm enjoys sublinear regret and have polynomial dependence on the horizon $H$, noise variance $\sigma^2$ and eluder dimension $\mathrm{dim_\mathcal{E}}(\cF,1/T)$, and have logarithmic dependence on the covering number of the function class $\mathcal{N}\left(\cF,1/T\right)$. 

\subsection{Regret bound for linear function class}

Now we present the regret bound for Algorithm~\ref{Algorithm-linear MDP reward perturbation} under the assumption of linear MDP setting. In the appendix, we provide a simple yet elegant proof of the regret bound.

\begin{theorem} \label{thm:theorem-linear MDP} Let  $M = d\log(\delta/9)/\log \Phi(1)$, $\sigma = \Tilde{O}(H\sqrt{d})$, and $\delta \in (0,1]$. Under linear MDP assumption from Definition~\ref{definition-linear-MDP}, the regret of Algorithm~\ref{Algorithm-linear MDP reward perturbation} satisfies
$$\text{Regret}(T) \leq \Tilde{\mathcal{O}}(d^{3/2} H^{3/2} \sqrt{T}),$$
with probability at least $1-\delta$.
\end{theorem}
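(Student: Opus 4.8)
### Proof Plan for Theorem~\ref{thm:theorem-linear MDP}

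The plan is to instantiate the general-function-approximation machinery of the previous subsection in the linear MDP setting, where every quantity becomes explicit, and then run the standard optimism-plus-elliptic-potential argument. First I would verify that the linear function class with the ridge regularizer $R(f_\theta) = \|\theta\|^2$ satisfies Assumptions~\ref{assumption:realizibility}--\ref{assumption:bounded_function_class}: realizability follows from the linear MDP structure (the Bellman backup of any $V$ is $\phi(\cdot,\cdot)^\top(w_h + \int V\,d\mu_h)$), the covering number of $\{f_\theta : \|\theta\| \le R\}$ is $\tilde{\mathcal{O}}(d\log(1/\varepsilon))$, the anti-concentration assumption holds with $v = \Phi(-1)$ and $g_\sigma(s,a) \ge \sigma\|\phi(s,a)\|_{(\Lambda_h^k)^{-1}}$ as shown in Section~\ref{subsection:assumption-linear-function-class}, and the relevant confidence ball is the ellipsoid $\{\theta : (\theta-\hat\theta)^\top\Lambda_h^k(\theta-\hat\theta) \le \beta\}$ whose eluder dimension is $\tilde{\mathcal{O}}(d)$. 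By Proposition~\ref{Proposition:noise}, the perturbed regressor $\tilde\theta_h^{k,j}$ is exactly $\hat\theta_h^{k,j} + \mathcal{N}(0, \sigma^2(\Lambda_h^k)^{-1})$ conditioned on the rest of the randomness, which is what makes all the Gaussian anti-concentration calculations tight.

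Next I would establish the two main lemmas in their concrete linear forms. For concentration: a self-normalized martingale (elliptic potential) bound shows that with probability $\ge 1-\delta/2$, simultaneously over all $(k,h)$ and all $j\in[M]$, $\|\tilde\theta_h^{k,j} - \theta_h^{k,\star}\|_{\Lambda_h^k} \le \tilde{\mathcal{O}}(H\sqrt{d})$ where $\theta_h^{k,\star}$ parametrizes $r_h + \mathbb{P}_h V_{h+1}^k$; here the noise terms $\epsilon_h^{k,\tau,j}$ and $\xi_h^{k,j}$ contribute an extra $\tilde{\mathcal{O}}(\sigma\sqrt{d})$, which is absorbed by choosing $\sigma = \tilde{\mathcal{O}}(H\sqrt{d})$. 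This forces $\beta = \tilde{\mathcal{O}}(H^2 d)$. For optimism: conditioned on the concentration event, for each fixed $(s,a,h,k)$ a single perturbed estimate $\tilde Q_h^{k,j}(s,a)$ exceeds $Q_h^*(s,a)$ with probability at least $v = \Phi(-1)$ (combining the Gaussian anti-concentration lower bound on $\tilde\theta - \hat\theta$ with the fact that $\hat\theta$ is within the confidence ellipsoid of $\theta_h^{k,\star}$ and a backward induction over $h$); taking $M = d\log(\delta/9)/\log\Phi(1)$ independent samples, the maximum is optimistic with probability $\ge 1 - \delta/(2T)$, and a union bound over all $(s,a,h,k)$ — or over a net when $\cS,\cA$ are uncountable — gives uniform optimism with probability $\ge 1-\delta/2$.

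Finally, on the good event I would run the usual regret decomposition: $V_1^*(s_1^k) - V_1^{\pi^k}(s_1^k) \le V_1^k(s_1^k) - V_1^{\pi^k}(s_1^k)$ by optimism, then unroll the Bellman error along the trajectory so that $\mathrm{Regret}(K)$ is bounded (up to an $\tilde{\mathcal{O}}(\sqrt{H^2 T})$ martingale/Azuma term from the $s_{h+1}^k$ sampling) by $\sum_{k,h}\big(Q_h^k(s_h^k,a_h^k) - (r_h + \mathbb{P}_h V_{h+1}^k)(s_h^k,a_h^k)\big)$, and each summand is controlled by the width of the confidence ellipsoid, i.e. by $2\beta^{1/2}\|\phi(s_h^k,a_h^k)\|_{(\Lambda_h^k)^{-1}}$ (plus the Gaussian perturbation width $\sigma\|\phi\|_{(\Lambda_h^k)^{-1}}\sqrt{\log(MHT)}$, again absorbed). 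Summing via the elliptic potential lemma, $\sum_{k}\|\phi(s_h^k,a_h^k)\|_{(\Lambda_h^k)^{-1}} \le \tilde{\mathcal{O}}(\sqrt{dK})$, and summing over $h\in[H]$ yields $\mathrm{Regret}(T) \le \tilde{\mathcal{O}}(\beta^{1/2}\sqrt{dHK\cdot H}) = \tilde{\mathcal{O}}(\sqrt{H^2 d}\cdot\sqrt{d}\cdot\sqrt{HT}) = \tilde{\mathcal{O}}(d^{3/2}H^{3/2}\sqrt{T})$ using $T = KH$. The main obstacle I anticipate is the optimism step: because $V_{h+1}^k$ is itself a random, data-dependent function built from the perturbed estimates, one must be careful that the anti-concentration argument applies conditionally on the correct $\sigma$-field (exactly the conditioning in Proposition~\ref{Proposition:noise}) and that the backward induction over $h$ does not lose more than constant factors per level — together with handling continuous state/action spaces via a covering argument without blowing up $M$.
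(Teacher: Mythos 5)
Your plan follows essentially the same route as the paper's proof in the appendix: establish concentration of the unperturbed regressor via a self-normalized bound plus a covering argument over the value-function class, obtain optimism by Gaussian anti-concentration amplified over $M=\tilde{O}(d)$ samples with a net over the feature ellipsoid (so that $M$ does not depend on $|\cS||\cA|$), and then combine a regret decomposition with the elliptic potential lemma. The one place where your accounting is loose is the contribution of the perturbation itself: to make the bound $\max_{j}|\phi(s,a)^\top\zeta_h^{k,j}|$ uniform over all state--action pairs, the paper controls $\|\zeta_h^{k,j}\|_{\Lambda_h^k}\le\sqrt{\gamma_k}=\tilde{O}(\sigma\sqrt{d})=\tilde{O}(Hd)$ and then applies Cauchy--Schwarz, so the dominant confidence width is $\tilde{O}(Hd)\,\|\phi\|_{(\Lambda_h^k)^{-1}}$ rather than the per-direction $\tilde{O}(\sigma)=\tilde{O}(H\sqrt{d})$ you describe as ``absorbed''; this is precisely where the final $d^{3/2}$ comes from, and indeed your own intermediate arithmetic ($\sqrt{H^2d}\cdot\sqrt{d}\cdot\sqrt{HT}$) evaluates to $\tilde{O}(dH^{3/2}\sqrt{T})$, which is inconsistent with the $d^{3/2}H^{3/2}\sqrt{T}$ you then write. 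This does not break the proof --- carried out carefully it lands exactly on the paper's bound --- but the $\sqrt{d}$ from the uniform perturbation bound should be tracked explicitly rather than absorbed.
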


\begin{remark}
Under linear MDP assumption, this regret bound is at the same order as the LSVI-UCB algorithm from \citep{jin2019provably} and $\sqrt{dH}$ better than the state-of-the-art TS-type algorithm \citep{zanette2019frequentist}. The only work that enjoys a $\sqrt{d}$ better regret is \citep{zanette2020learning}, which requires solving an intractable optimization problem. 
\end{remark}

\begin{remark}
Along with being a competitive algorithm in statistical efficiency, we want to emphasize that our algorithm has good computational efficiency. LSVI-PHE with linear function class only involves linear programming to find the greedy policy while LSVI-UCB \citep{jin2019provably} requires solving a quadratic programming. The optimization problem in OPT-RLSVI \citep{zanette2019frequentist} is hard too because the Q-function there is a piecewise continuous function and in one piece, it includes the product of the square root of a quadratic term and a linear term.

\end{remark}

\section{Numerical Experiments}
We run our experiments on RiverSwim \cite{strehl2008analysis}, DeepSea \cite{osband2016generalization} and sparse MountainCar \citep{brockman2016openai} environments as these are considered to be hard exploration problems where $\varepsilon$-greedy is known to have poor performance. For both RiverSwim and DeepSea experiments, we make use of linear features.  The objective here is to compare an exploration method that randomizes the targets in the history (LSVI-PHE) with an exploration method that computes upper confidence bounds given the history (LSVI-UCB) \cite{jin2019provably, cai2019provably}. For the continous control MountainCar environment, we use neural-network as function approximator to implement LSVI-PHE. The objective here is to compare deep RL variant of LSVI-PHE against other popular deep RL algorithms specifically designed to tackle exploration task.

\subsection{Measurements} We plot the per episode return of each algorithm to benchmark their performance. As the agent begins to act optimally the per episode return begins to converge to the optimal, or baseline, return. The per episodes returns are the sum of all the rewards obtained in an episode. We also report the performance of LSVI-PHE when $\sigma^2$ is fixed and $M$ varies. 

\subsection{Results for RiverSwim} A diagram of the RiverSwim environment is shown in the Appendix. RiverSwim consists of $\cS$ states lined up in a chain. The agent begins in the leftmost state $s_1$ and has the choice of swimming to the left or to the right at each state. The agent's goal is to maximize its return by trying to reach the rightmost state which has the highest reward. Swimming to the left, with the current, transitions the agent to the left deterministically. Swimming to the right, against the current, stochastically transitions the agent and has relatively high probability of moving right toward the goal state. However, because the current is strong there is a high chance the agent will stay in the current state and a low chance the agent will get swept up in the current and transition to the left. 
Thus, smart exploration is required to learn the optimal policy in this environment. 
We experiment with the variant of RiverSwim where $\cS = 12$ and $H = 40$. For this experiment, we swept over the exploration parameters in both LSVI-UCB \citep{jin2019provably} and LSVI-PHE and report the best performing run on a $12$ state RiverSwim. LSVI-UCB computes confidence widths of the following form $\beta \lVert \phi(s,a) \rVert_{\Sigma^{-1}}$ where $\phi(s,a) \in \mathbb{R}^d$ are the features for a given state-action pair and $\Sigma \in \mathbb{R}^{d \times d}$ is the empirical covariance matrix. We sweep over $\beta$ for LSVI-UCB and $\sigma^2$ for LSVI-PHE, where $M$ is chosen according to our theory (Theorem~\ref{thm:theorem-linear MDP}). We sweep over these parameters to speed up learning as choosing the theoretically optimal choices for $\beta$ and $\sigma^2$ often leads to a more conservative exploration policy which is slow to learn. As shown in Figure \ref{fig:riverswim_best_run_12}, the best performing LSVI-PHE achieves similar performance to the best performing LSVI-UCB on the 12 state RiverSwim environment. 

\begin{figure}[t]
    \centering
    \includegraphics[width=0.5\textwidth]{./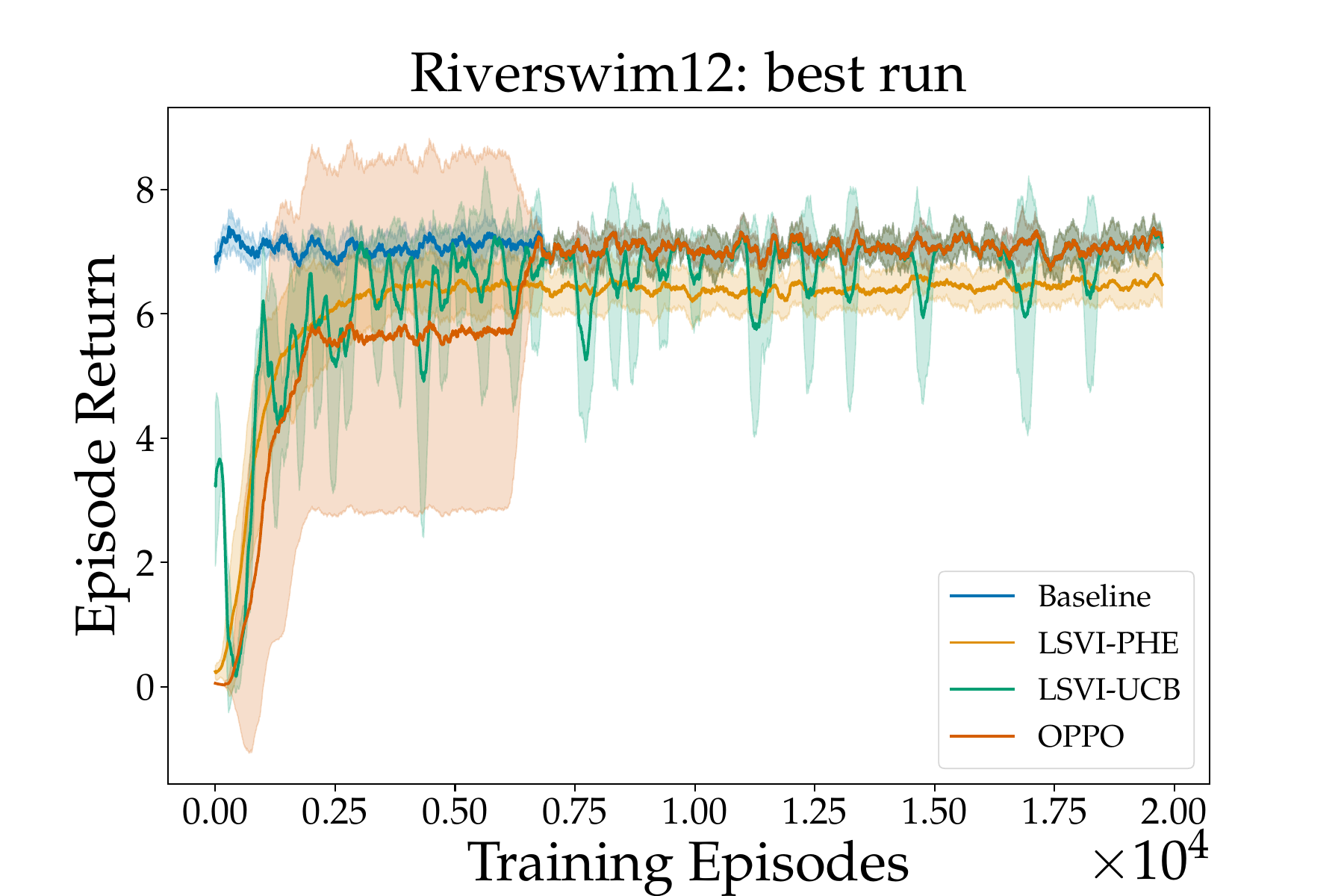}
    \caption{The results are averaged over 10 independent runs and error bars are reported for the regret plots. For this plot, $\beta = 5.0$ for LSVI-UCB and $\sigma^2 = 2\times 10^{-1}$ for LSVI-PHE.} 
    \label{fig:riverswim_best_run_12}
\end{figure}

\subsection{Results for DeepSea} DeepSea \cite{osband2016generalization} consists of $\cS = N \times N$ states arranged in a grid, where $N$ is the depth of the sea. The agent begins at the top leftmost state in the grid $s_1$ and has the choice of moving down and left or down and right at each state. Once the agent reaches the bottom of the sea it transitions back to state $s_1$. The agent's goal is to maximize its return by reaching the bottom right most state. The agent gets a small negative reward for transitioning to the right while no reward is given if the agent transitions to the left. Thus, smart exploration is required; otherwise the agent will rarely go right the necessary amount of time to reach the goal state. We run our experiments on a $10 \times 10$ DeepSea environment. As shown in Figure \ref{fig:deepsea_best_run_8}, the best performing LSVI-PHE achieves similar performance to the best performing LSVI-UCB on DeepSea. 
We also vary $M$ given a fixed $\sigma^2 = 5\times 10^{-4}$. As shown in Figure \ref{fig:deepsea_sweep_M_sig0.0005}, as we increase $M$, the performance of LSVI-PHE increases. 

\begin{figure}[t]
    \centering
    \includegraphics[width=0.5\textwidth]{./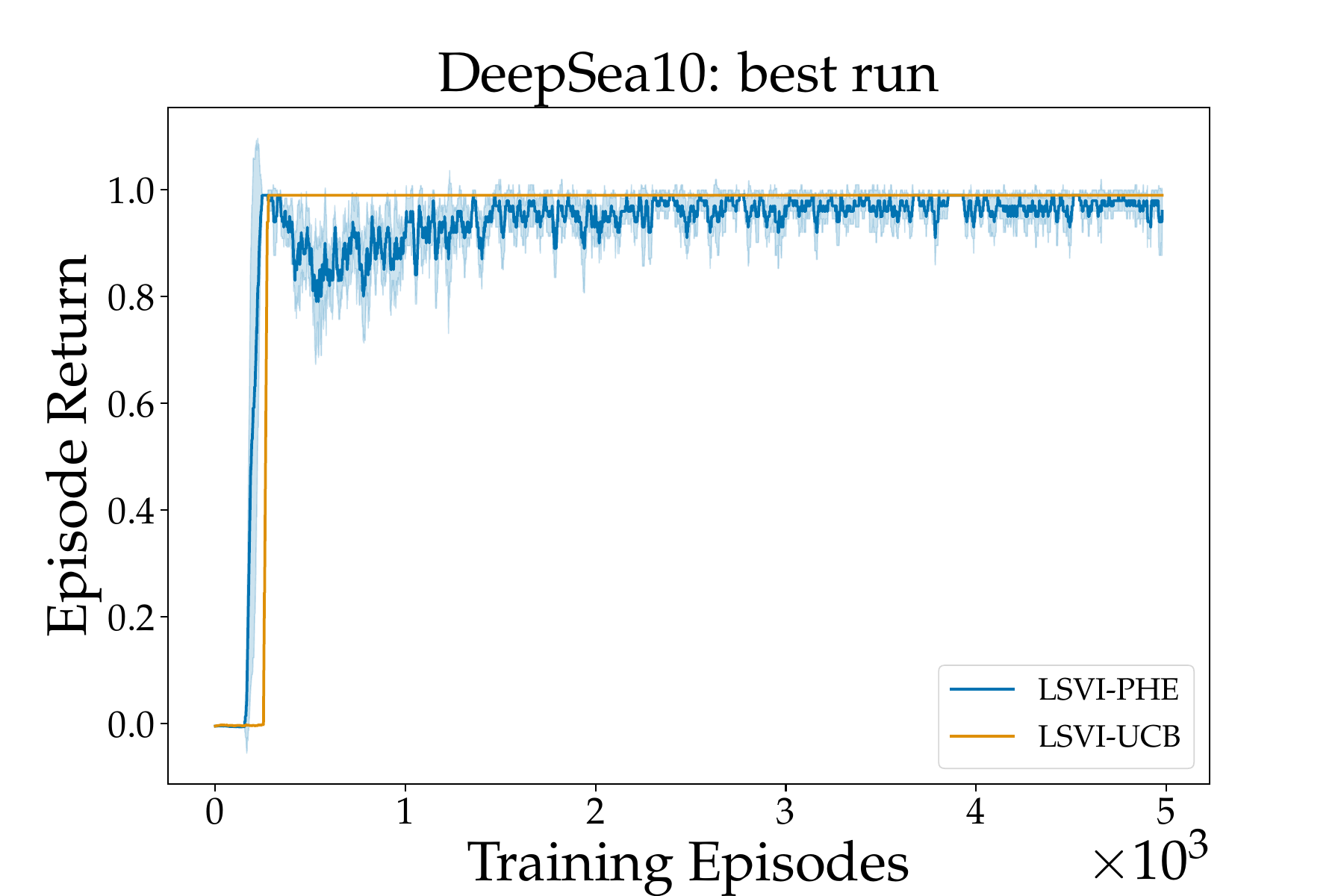}
    \caption{The results are averaged over 5 independent runs and error bars are reported for the return per episode plots. For this plot, $\beta = 5 \times 10^{-3}$ for LSVI-UCB and $\sigma^2 = 5 \times 10^{-5}$ for LSVI-PHE.}
    \label{fig:deepsea_best_run_8}
\end{figure}

\begin{figure}[t]
    \centering
    \includegraphics[width=0.50\textwidth]{./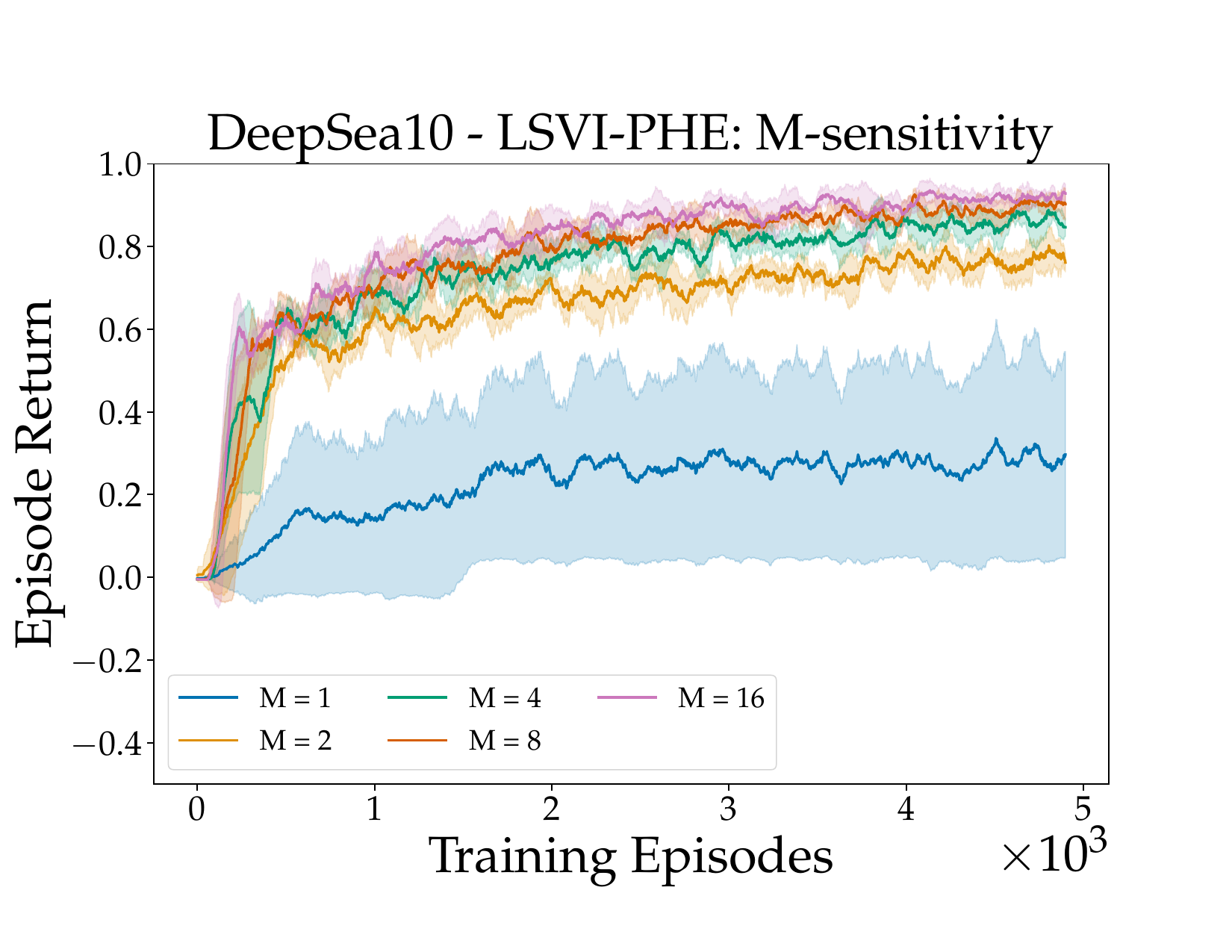}
    \caption{The results are averaged over 5 runs and error bars are reported for the return per episode plots. For this plot we fix $\sigma^2 = 5 \times 10^{-4}$.}
    \label{fig:deepsea_sweep_M_sig0.0005}
\end{figure}

These experiments on hard exploration problems highlight that we are able to simulate optimistic exploration, as in UCB, by perturbing the targets multiple times and taking the max over the perturbations to boost the probability of an optimistic estimate. If we are willing to sweep over $M$, the number of times we perturb the history, and $\sigma^2$, we can then get a faster algorithm that still performs well in practice. If we let $M=1$ and $\sigma^2=1$ then LSVI-PHE reduces to RLSVI and we would get the same performance as in \cite{osband2016generalization}. 

\subsection{Results for MountainCar}
We further evaluated LSVI-PHE on a continuous control task which requires exploration: sparse reward variant of continuous control MountainCar from OpenAI Gym \citep{brockman2016openai}. This environment consists of a $2$-dimensional continuous state space and a $1$-dimensional continuous action space $[-1,1]$. The agent only receives a reward of $+1$ if it reaches the top of the hill and everywhere else it receives a reward of $0$. We set the length of the horizon to be $1000$ and discount factor $\gamma = 0.99$.

For this setting, we compare four algorithms: LSVI-PHE, DQN with epsilon-greedy exploration, Noisy-Net DQN \citep{fortunato2017noisy} and Bootstrapped DQN \citep{osband2016deep}. Our experiments are based on the baseline implementations of \citep{Explorer}. As neural network, we used a multi-layer perceptron with hidden layers fixed to $[32,32]$. The size of the replay buffer was $10,000$. The weights of neural networks were optimized by Adam \citep{kingma2014adam} with gradient clip $5$. We used a batch size of $32$. The target network was updated every 100 steps. The best learning rate was chosen from $[10^{-3}, 5\times10^{-4}, 10^{-4}]$. For LSVI-PHE, we set $M = 8$ and we chose the best value of $\sigma$ from $[10^{-4}, 10^{-3}, 10^{-2}]$. Results are shown in Figure~\ref{fig:mountaincar}.

\begin{figure}[t]
    \centering
    \includegraphics[width=0.5\textwidth]{./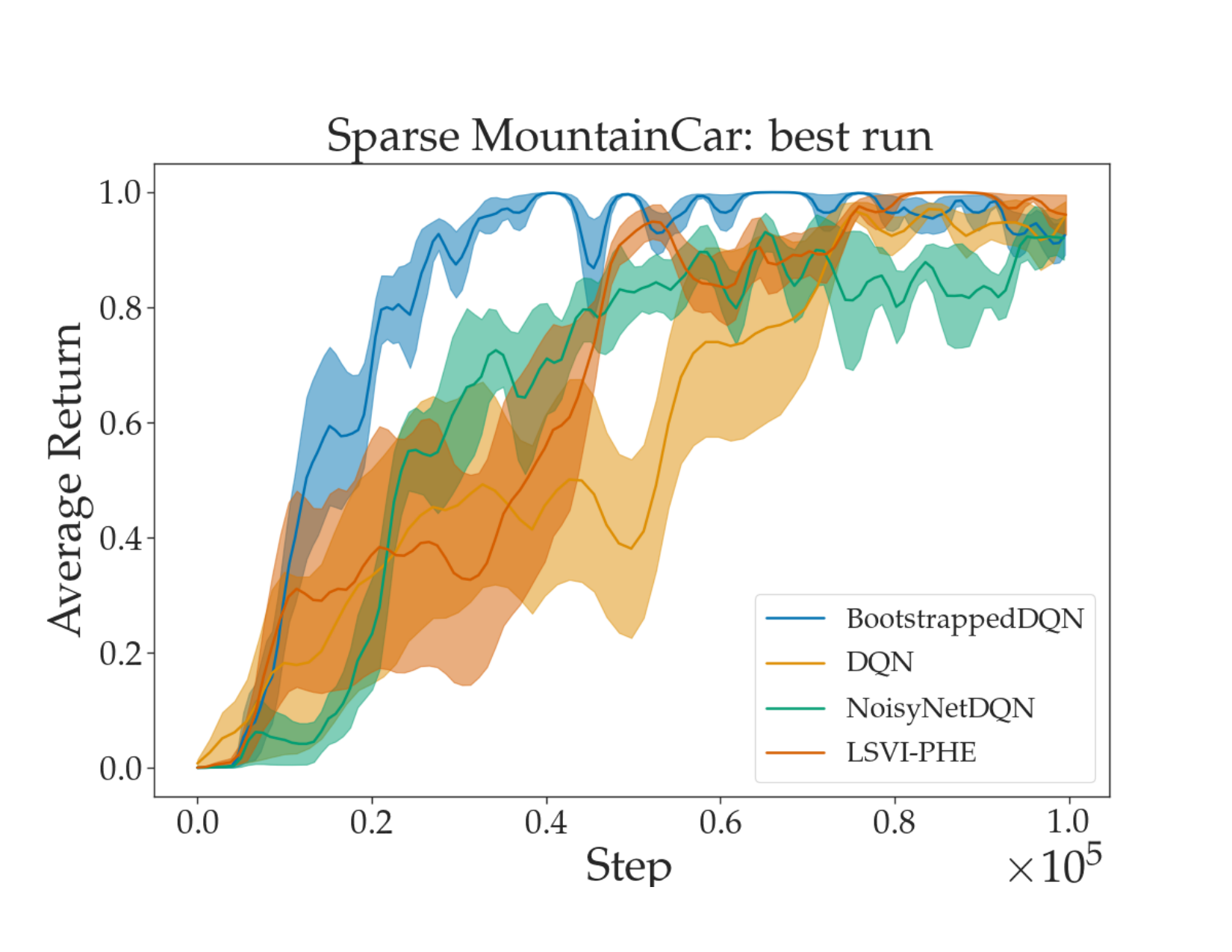}
    \caption{: Comparison of four algorithms on sparse MountainCar. The results are averaged over 5 independent runs and error bars are reported for the return per episode plots.}
    \label{fig:mountaincar}
\end{figure}

\section{Related Works}

\textbf{RL with Function Approximation.}
Many recent works have studied RL with function approximation, especially in the linear case \citep{jin2019provably,cai2019provably,zanette2019frequentist,zanette2020learning,wang2020reinforcement,ayoub2020model,foster2020instance,jiang2017contextual,sun2019model}. Under the assumption that the agent has access to a well-designed feature extractor, these works design provably efficient algorithms for linear MDPs and linear kernel MDPs. LSVI-UCB \cite{jin2019provably}, the first work with both polynomial runtime and polynomial
sample complexity with linear function approximation, has a regret of $\Tilde{\mathcal{O}}(\sqrt{d^3H^3T})$. The state-of-the-art regret bound is $\Tilde{\mathcal{O}}(Hd\sqrt{T})$, achieved by ELEANOR \citep{zanette2020learning}. However, ELEANOR needs to solve an optimization problem in each episode, which is computationally intractable. \citet{wang2019optimism} introduces a new expressivity condition named \textit{optimistic closure} for generalized linear function approximation under which they propose a variant of optimistic LSVI with regret bound $\Tilde{\mathcal{O}}(\sqrt{d^3H^3T})$. \citet{wang2020reinforcement,ayoub2020model} focus on online RL with general function approximation and their analysis is based on the eluder dimension \citep{russo2013eluder}. Other complexity measures of general function classes include disagreement coefficient \citep{foster2020instance}, Bellman rank \citep{jiang2017contextual} and Witness rank \citep{sun2019model}.

\textbf{Thompson Sampling.} Thompson Sampling ~\citep{thompson1933likelihood} was proposed almost a century ago and rediscovered several times. \citet{strens2000bayesian} was the first work to apply TS to RL. \citet{osband2013more} provides a Bayesian regret bound and \citet{agrawal2016near,ouyang2017learning} provide worst case regret bounds for TS. 

Randomized least-squares value iteration (RLSVI), proposed in \citet{osband2019deep}, uses random perturbations to  approximate the posterior. Recently, several works focussed on the theoretical analysis of RLSVI \citep{russo2019worst,zanette2020learning,agrawal2020improved}. \citet{russo2019worst} provides the first worst-case regret $\Tilde{O}(H^{5/2}S^{3/2}\sqrt{AT})$ for tabular MDP and \citet{agrawal2020improved} improves it to $\Tilde{O}(H^2S\sqrt{AT})$. \citet{zanette2019frequentist} proves $\Tilde{O}(H^2d^2\sqrt{T})$ regret bound for linear MDP. However, \citet{agrawal2020improved,zanette2019frequentist} both need to compute the confidence width as a warm-up stage, which is complicated and computationally costly.

\section{Conclusion}

In this work, we propose an algorithm LSVI-PHE for online RL with function approximation based on optimistic sampling. We prove the theoretical guarantees of LSVI-PHE and through experiments also demonstrate that it performs competitively against previous algorithms. We believe optimistic sampling provides a new provably efficient exploration paradigm in RL and it is practical in complicated real-world applications. We hope our work can be one step towards filling the gap between theory and application.

\section*{Acknowledgements}
The authors would like to thank Csaba Szepesvári for helpful discussions. HI thanks Qingfeng Lan for many insightful discussions regarding the experiments. The work was initiated when HI, ZY, ZW and LY was visiting the Simons Institute for the Theory of Computing at UC-Berkeley (Theory of Reinforcement Learning Program).
\bibliography{main}
\bibliographystyle{icml2021}

\newpage

\appendix
\onecolumn
\section{LSVI-PHE with General Function Approximations}

\subsection{Noise}\label{sec:noise}

In the section, we specify how to choose $\sigma$ in Algorithm~\ref{Algorithm-GFA}. Note that we use $\xi_{h,k}^{\tau,m}$ for the noise added in episode $k$, timestep $h$, data from episode $\tau<k$ and sampling time $m$. Similarly, $\xi'^{i,m}_{h,k}$ is for episode $k$, timestep $h$, regularizer $p_i(\cdot)$ and sampling time $m$. We set $\lambda=1$ in our algorithm. By Lemma \ref{Lemma:GFA_r_plus_PV_in_f_khm (unperturbed)}, there exists $\beta'(\cF,\delta)$ such that with probability at least $1-\delta$, for all $(k,h)\in[K]\times[H]$, we have
$$f_h^k(\cdot,\cdot):=r(\cdot,\cdot)+P_hV_{h+1}^k(\cdot,\cdot)\in\cF_h^{k},$$
where $\cF_h^k=\{f\in\cF\,|\,\|f-\Hat{f}_h^{k}\|_{\cZ_h^k}^2+R(f-\Hat{f}_h^{k})\leq \beta'(\cF,\delta)\}.$ By Assumption \ref{assumption:anticoncentration}, for each $\cF_h^k$, there exists a $\sigma_{h,k}$ such that 
$$g_{\sigma_{h,k}}(s,a)\geq w(\cF_h^k,s,a).$$
We define $\sigma=\max_{k\in[K],h\in[H]}\sigma_{h,k}$ to be the maximum standard deviation of the added noise.

\subsection{Concentration}

We first define few filtrations and good events that we will use in the proof of lemmas in this section.
\begin{definition}[Filtrations]\label{def:GFA_filtrations} 
We denote the $\sigma$-algbera generated by the set $\cG$ using $\sigma(\cG)$.  We define the following filtrations 
\begin{align*} 
    \cG^k &\mydef \sigma \left(\{(s_t^i, a_t^i, r_t^i)\}_{\{i,t\} \in [k-1]\times[H]}\,\ \bigcup \,\ \{\xi^{i,j}_{t,l}\}_{i\in[l],\{t,j,l\} \in [H]\times[M]\times[k-1]} \bigcup \,\ \{\xi'^{i,j}_{t,l}\}_{\{i,t,j,l\} \in [D]\times[H]\times[M]\times[k-1]}\right),\\
    \cG_{h,1}^k &\mydef \sigma \left(\cG^k \,\ \bigcup \,\ \{(s_t^k, a_t^k, r_t^k)\}_{t\in[h]}\bigcup \,\ \{\xi^{i,j}_{t,k}\}_{i\in[k],t \geq h,j\in[M]}\bigcup \,\ \{\xi'^{i,j}_{t,k}\}_{i\in[D],t \geq h,j\in[M]}\right),\\
    \cG_{h,2}^K &\mydef \sigma \left(\cG^k\,\  \bigcup \,\ \{(s_t^k, a_t^k, r_t^k)\}_{t\in[h]} \right).
\end{align*}
\end{definition}

\begin{definition}[Good events]\label{def:GFA_good-event}

For any $\delta > 0$, we define the following random events
\begin{align*}
    \mathcal{G}_{h}^k(\xi, \delta) &\mydef \Bigl\{ \max_{i\in[k],j \in [M]}\left|\xi^{i,j}_{h,k}\right| \leq \sqrt{\gamma_k(\delta)}\bigcap\max_{i\in[D],j \in [M]}|\xi'^{i,j}_{h,k}| \leq \sqrt{\gamma_k(\delta)}\Bigr\},\\
    \mathcal{G}(K, H, \delta) &\mydef \bigcap_{k \leq K}\bigcap_{h \leq H}\mathcal{G}^k_h(\xi,\delta),
\end{align*}
where $\gamma_k(\delta)$ is some constant to be specified in Lemma~\ref{lemma:GFA_good_event_prob}.
\end{definition}

\textbf{Notation: } To simplify our presentation, in the remaining part of this section, we always denote $\sqrt{\gamma_k}:=\sqrt{\gamma_k(\delta)}$.

The next lemma shows that the good event defined in Definition~\ref{def:GFA_good-event} happens with high probability. 
\begin{lemma}\label{lemma:GFA_good_event_prob}
For good event $\cG(K,H,\delta)$ defined in Definition \ref{def:GFA_good-event}, if we set $\sqrt{\gamma_k}=\Tilde{O}(\sigma)$, then it happens with probability at least $1-\delta$.
\end{lemma}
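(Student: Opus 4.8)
The plan is an elementary Gaussian tail estimate together with a union bound; since every perturbation noise is drawn independently of everything else in the algorithm, no martingale or adaptive argument is required.

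First I would record the standard one-dimensional Gaussian tail inequality: if $Z \sim \cN(0,\varsigma^2)$, then $\mathbb{P}(|Z| \ge t) \le 2\exp(-t^2/(2\varsigma^2))$ for all $t > 0$. By the construction in Section~\ref{sec:noise}, each scalar reward noise $\xi^{i,j}_{h,k}$ and each regularizer noise $\xi'^{i,j}_{h,k}$ is a zero-mean Gaussian of variance $\sigma_{h,k}^2 \le \sigma^2$, where $\sigma = \max_{k\in[K],\, h\in[H]} \sigma_{h,k}$. Hence $\mathbb{P}(|\xi^{i,j}_{h,k}| \ge t) \le 2\exp(-t^2/(2\sigma^2))$, and the same bound holds for $\xi'^{i,j}_{h,k}$.

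Next I would count the noise variables constrained by $\mathcal{G}(K,H,\delta) = \bigcap_{k\le K}\bigcap_{h\le H}\mathcal{G}^k_h(\xi,\delta)$. For each pair $(k,h)$, the event $\mathcal{G}^k_h(\xi,\delta)$ involves the $kM$ variables $\xi^{i,j}_{h,k}$ with $(i,j)\in[k]\times[M]$ and the $DM$ variables $\xi'^{i,j}_{h,k}$ with $(i,j)\in[D]\times[M]$, for a total of at most $N := KHM(K+D)$ noise variables across all $(k,h)$. Choosing the threshold $\sqrt{\gamma_k} = \sqrt{\gamma_k(\delta)} := \sigma\sqrt{2\log(2N/\delta)}$ makes the probability that any fixed one of these variables exceeds $\sqrt{\gamma_k}$ in absolute value at most $\delta/N$.

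A union bound over all $N$ variables then gives $\mathbb{P}\big(\mathcal{G}(K,H,\delta)^c\big) \le N \cdot (\delta/N) = \delta$, i.e. $\mathcal{G}(K,H,\delta)$ holds with probability at least $1-\delta$. Since $N$ is polynomial in $K,H,M,D$, we have $\log(2N/\delta) = \tilde{O}(1)$, whence $\sqrt{\gamma_k} = \sigma\sqrt{2\log(2N/\delta)} = \tilde{O}(\sigma)$, which is exactly the claimed choice. The only thing requiring a little care is the bookkeeping of how many noise variables the intersection actually touches (and if one prefers a sharper $k$-dependent threshold, one replaces $N$ by the per-$(k,h)$ count $M(k+D)$ inside the logarithm and sums the resulting tail bounds over $k,h$); neither refinement affects the $\tilde{O}(\sigma)$ conclusion, and there is no genuine obstacle, as the $\xi$'s and $\xi'$'s are i.i.d. and independent of the filtrations $\mathcal{G}^k$, $\mathcal{G}^k_{h,1}$, $\mathcal{G}^k_{h,2}$.
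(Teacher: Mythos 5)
Your proposal is correct and follows essentially the same route as the paper: a Gaussian tail bound on each individual noise variable (each with variance $\sigma_{h,k}^2 \le \sigma^2$), followed by a union bound over the $(K+D)MHK$ variables, yielding a threshold of order $\sigma\sqrt{\log(\cdot)} = \tilde{O}(\sigma)$. The only differences are cosmetic (the paper sets $\delta' = \delta/((K+D)MHK)$ rather than carrying the count $N$ explicitly).
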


\begin{proof}
Recall that $\xi_{t,l}^{i,j}$ is a zero-mean Gaussian noise with variance $\sigma_{t,l}^2$. By the concentration of Gaussian distribution (Lemma \ref{lemma-gaussian-concentration}), with probability $1-\delta'$, we have
$$|\xi_{t,l}^{i,j}|\leq \sigma_{t,l} \sqrt{2\log(1/\delta')}\leq\sigma \sqrt{2\log(1/\delta')}.$$
The same result holds for $\xi'^{i,j}_{t,l}$. We complete the proof by setting $\delta'=\delta/(K+D)MHK$ and using union bound.
\end{proof}

In Definition~\ref{def:perturbed-least-squares}, for a regularizer $R(f) = \sum_{j=1}^D p_j(f)^2$, where $p_j(\cdot)$ are functionals, we defined the perturbed regularizer as $\Tilde{R}_\sigma(f)=\sum_{j=1}^D [p_j(f)+\xi'_j]^2$ with $\xi'_j$ being i.i.d. zero-mean Gaussian noise with variance $\sigma^2$. Note that in the algorithm, the variance of the noise for the regularizer is the same as the dataset, which is $\sigma_{h,k}^2$.
Recall from Assumption~\ref{assumption:regularization} that for any $V:\cS\rightarrow[0,H]$, our regularizer $R$ satisfies $R(r+PV)\leq B$ for some constant $B\in\mathbb{R}$.

Our next lemma establishes a bound on the perturbed estimate of a single backup.
\begin{lemma}\label{Lemma:GFA-Concentration} Consider a fixed $k \in [K]$ and a fixed $h \in [H]$. Let $\cZ_h^k=\{(s_{h}^\tau,a_{h}^\tau)\}_{\tau\in[k-1]}$ and $\Tilde{\mathcal{D}}_{h,V}^{k}=\{(s_{h}^\tau,a_{h}^\tau, r_{h}^\tau+\xi_{h}^{\tau}+V(s_{h+1}^\tau))\}_{\tau\in[k-1]}$. Define
$\Tilde{f}_{h,V}^{k}= \argmin_{f\in\mathcal{F}}\|f\|_{\Tilde{\mathcal{D}}_{h,V}^{k}}^2+\Tilde{R}(f).$ Conditioned on the good event $\cG(K,H,\delta)$, with probability at least $1-\delta$, for a fixed $V:\mathcal{S}\rightarrow[0,H]$ and any $V':\mathcal{S}\rightarrow[0,H]$ with $\|V'-V\|_\infty\leq1/T$, we have
\begin{align*}
&\left\|\widetilde{f}_{h,V'}(\cdot,\cdot)-r_h(\cdot,\cdot)-P_hV'( \cdot,\cdot)\right\|^2_{\mathcal{Z}_h^k}+ R\left(\widetilde{f}_{h,V'}(\cdot,\cdot)-r_h(\cdot,\cdot)-P_hV'( \cdot,\cdot)\right)\\\leq& c'\left[\left(H+1 + \sqrt{\gamma_k}\right)\sqrt{\log\left(2/\delta\right)+\log\mathcal{N}\left(\cF,1/T\right)}+\sqrt{B+\sqrt{\gamma_kBD}}\right]^2,
\end{align*}
for some constant $c'$. Here $B$ is the bound on the regularizer (Assumption \ref{assumption:regularization}) and $D$ is the number of regularizers (Definition \ref{def:perturbed-least-squares}). Define this event as $\cE_{h,V}(\delta)$.
\end{lemma}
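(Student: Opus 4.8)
\textbf{Proof plan for Lemma~\ref{Lemma:GFA-Concentration}.}
The plan is to control the perturbed regularized least-squares estimate $\widetilde f_{h,V'}$ around the true Bellman backup $r_h + P_h V'$ by a self-normalized concentration argument, treating the added Gaussian noise (bounded via the good event) as part of the regression target. First I would fix $V$ and work with the perturbed dataset $\widetilde{\mathcal D}_{h,V}^{k}$, writing $y_\tau = r_h^\tau + \xi_h^\tau + V(s_{h+1}^\tau)$. Since $\widetilde f_{h,V}$ minimizes $\|f\|_{\widetilde{\mathcal D}_{h,V}^{k}}^2 + \widetilde R(f)$, its optimality against the competitor $f^\star := r_h + P_h V \in \cF$ (which exists by Assumption~\ref{assumption:realizibility}) gives the basic inequality
\$
\|\widetilde f_{h,V}\|_{\widetilde{\mathcal D}_{h,V}^{k}}^2 + \widetilde R(\widetilde f_{h,V}) \;\le\; \|f^\star\|_{\widetilde{\mathcal D}_{h,V}^{k}}^2 + \widetilde R(f^\star).
\$
Expanding the squares of the residuals $\widetilde f_{h,V}(s_h^\tau,a_h^\tau) - y_\tau$ and $f^\star(s_h^\tau,a_h^\tau) - y_\tau$, the quadratic terms in $y_\tau$ cancel and one is left, after rearranging, with
\$
\|\widetilde f_{h,V} - f^\star\|_{\cZ_h^k}^2 \;\le\; 2\sum_{\tau=1}^{k-1}\big(\widetilde f_{h,V}(s_h^\tau,a_h^\tau) - f^\star(s_h^\tau,a_h^\tau)\big)\big(y_\tau - f^\star(s_h^\tau,a_h^\tau)\big) + \widetilde R(f^\star) - \widetilde R(\widetilde f_{h,V}).
\$
The cross term $y_\tau - f^\star(s_h^\tau,a_h^\tau) = \xi_h^\tau + \big(V(s_{h+1}^\tau) - P_hV(s_h^\tau,a_h^\tau)\big)$ splits into the injected Gaussian noise plus a genuine martingale-difference term (mean zero given the past). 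The Gaussian part is bounded in absolute value by $\sqrt{\gamma_k}$ on the good event $\cG(K,H,\delta)$, so it contributes at most $\sqrt{\gamma_k}\sqrt{k-1}\,\|\widetilde f_{h,V}-f^\star\|_{\cZ_h^k}$ by Cauchy--Schwarz; the martingale part is handled by a uniform self-normalized concentration bound over an $\varepsilon$-cover $\mathcal C(\cF,1/T)$ of $\cF$ (Assumption~\ref{Assumption:bounded covering number}), producing the $(H+1)\sqrt{\log(2/\delta)+\log\mathcal N(\cF,1/T)}$ factor, exactly the regularization-free estimate from \citet{wang2020reinforcement}.

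Next I would absorb the regularizer terms. Using $\widetilde R(f^\star) = \sum_{j}(p_j(f^\star)+\xi'_j)^2 \le 2R(f^\star) + 2\sum_j (\xi'_j)^2 \le 2B + 2\gamma_k D$ on the good event (Assumption~\ref{assumption:regularization} gives $R(f^\star) = R(r+P_hV)\le B$), and similarly lower-bounding $\widetilde R(\widetilde f_{h,V})\ge 0$, I can move $\widetilde R(f^\star)-\widetilde R(\widetilde f_{h,V})$ to the right-hand side at a cost of $O(B + \gamma_k D)$. Now the bound has the shape $X^2 \le a X + b$ with $X = (\|\widetilde f_{h,V}-f^\star\|_{\cZ_h^k}^2 + R(\widetilde f_{h,V}-f^\star))^{1/2}$ — here I also need the symmetry/triangle properties of $R$ from Assumption~\ref{assumption:regularization} to relate $\widetilde R(\widetilde f_{h,V})$ and $R(f^\star)$ to $R(\widetilde f_{h,V}-f^\star)$ and thus fold the regularizer of the \emph{difference} into the left-hand side. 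Solving the quadratic inequality $X^2 \le aX + b$ yields $X \le a + \sqrt b$, i.e. the claimed $c'\big[(H+1+\sqrt{\gamma_k})\sqrt{\log(2/\delta)+\log\mathcal N(\cF,1/T)} + \sqrt{B + \sqrt{\gamma_k B D}}\big]^2$ after squaring (the $\sqrt{\gamma_k B D}$ coming from combining the $B$ bound with the $\gamma_k D$ bound via AM--GM or just tracking the cross terms).

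Finally I would upgrade from a fixed $V$ to all $V'$ with $\|V'-V\|_\infty \le 1/T$: since every target differs by at most $1/T$ uniformly, the perturbed regression solutions and all the residual terms change by at most $O(\mathrm{poly}(T^{-1}))$, which is dominated by the other terms and absorbed into the constant $c'$; this is the standard "one extra $1/T$" slack that also explains the $\log\mathcal N(\cF,1/T)$ resolution of the cover and the $\|V'-V\|_\infty \le 1/T$ hypothesis. The main obstacle I anticipate is the careful bookkeeping around the regularizer: making the $\widetilde R$ terms on both sides combine cleanly into $R(\widetilde f_{h,V'}-f^\star)$ on the left while only paying $O(B+\sqrt{\gamma_k BD})$ on the right requires the exact form of Assumption~\ref{assumption:regularization} (the constant $c$ in $R(f)+R(f')\ge cR(f+f')$, the symmetry $R(f)=R(-f)$, and the boundedness), and getting the dependence on $D$ and $\gamma_k$ right in the cross terms; the martingale concentration over the cover is by now routine and I would cite the corresponding lemma from \citet{wang2020reinforcement} rather than reprove it.
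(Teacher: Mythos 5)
Your overall architecture matches the paper's: the basic inequality from the optimality of $\widetilde f_{h,V'}$ against the realizable competitor $f_{V'}=r_h+P_hV'$ (Assumption~\ref{assumption:realizibility}), a cover-based uniform concentration of the cross term, the regularizer bookkeeping via Assumption~\ref{assumption:regularization} at a cost of $O(B+\sqrt{\gamma_k BD})$, the quadratic inequality $x^2+y\le ax+b\Rightarrow x\le a+\sqrt b$, and the final $1/T$ transfer from the fixed $V$ to all nearby $V'$.

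There is, however, one genuine gap: your treatment of the injected Gaussian noise. You split the residual $y_\tau-f^\star(s_h^\tau,a_h^\tau)=\xi_h^\tau+\bigl(V(s_{h+1}^\tau)-P_hV(s_h^\tau,a_h^\tau)\bigr)$ and bound the $\xi_h^\tau$ contribution deterministically: $|\xi_h^\tau|\le\sqrt{\gamma_k}$ on the good event, then Cauchy--Schwarz, yielding $\sqrt{\gamma_k}\sqrt{k-1}\,\|\widetilde f_{h,V}-f^\star\|_{\mathcal{Z}_h^k}$. This places a coefficient of order $\sqrt{\gamma_k k}$ on the linear term of your quadratic inequality, so solving $X^2\le aX+b$ gives $X^2\gtrsim \gamma_k (k-1)$ --- a bound that grows linearly in $k$ and does not match the claimed $k$-free bound $c'\bigl[(H+1+\sqrt{\gamma_k})\sqrt{\log(2/\delta)+\log\mathcal{N}(\mathcal{F},1/T)}+\sqrt{B+\sqrt{\gamma_kBD}}\bigr]^2$ (and would propagate into a linear-in-$T$ regret downstream). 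The fix, which is what the paper does, is to observe that $\xi_h^\tau$ is itself conditionally mean-zero given the relevant filtration, so the entire cross term $2\bigl(f(s_h^\tau,a_h^\tau)-f_V(s_h^\tau,a_h^\tau)\bigr)\bigl(f_V(s_h^\tau,a_h^\tau)-r_h^\tau-\xi_h^\tau-V(s_{h+1}^\tau)\bigr)$ is a single martingale difference sequence, bounded on the good event by $2(H+1+\sqrt{\gamma_\tau})\,|f(s_h^\tau,a_h^\tau)-f_V(s_h^\tau,a_h^\tau)|$; Azuma--Hoeffding applied to this combined sequence has variance proxy $\sum_\tau c_\tau^2\propto \|f-f_V\|^2_{\mathcal{Z}_h^k}$ and hence yields $4(H+1+\sqrt{\gamma_k})\|f-f_V\|_{\mathcal{Z}_h^k}\sqrt{\log(2/\delta)+\log\mathcal{N}(\mathcal{F},1/T)}$ with no extra $\sqrt{k}$. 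With that correction, the remainder of your plan goes through essentially as in the paper.
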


\begin{proof}
Recall that for notational simplicity, we denote $[\mathbb{P}_h V_{h+1}](s,a) = \mathbb{E}_{s' \sim \mathbb{P}_h(\cdot \,|\, s, a)}V_{h+1}(s')$.
Now consider a fixed $V:\cS\rightarrow [0,H]$, and define
\begin{equation}
    f_V(\cdot, \cdot) := r_h(\cdot,\cdot) +  P_hV( \cdot, \cdot).
\end{equation}

For any $f \in \cF$, we consider $\sum_{\tau\in[k-1]}\chi_h^\tau(f)$ where $$\chi_h^\tau(f):=2(f(s_h^\tau,a_h^\tau)-f_V(s_h^\tau,a_h^\tau))(f_V(s_h^\tau,a_h^\tau)-r_h^\tau(s_h^\tau,a_h^\tau) - \xi_{h}^{\tau} -V(s_{h+1}^\tau)).$$

Recalling the definition of the filtration $\cG_{h,1}^\tau$ from Definition~\ref{def:GFA_filtrations}, we note
\begin{align*}
    \mathbb{E}[\chi_h^\tau(f)|\cG_{h,1}^\tau]&=\mathbb{E}[2(f(s_h^\tau,a_h^\tau)-f_V(s_h^\tau,a_h^\tau))(f_V(s_h^\tau,a_h^\tau)-r_h^\tau(s_h^\tau,a_h^\tau) - \xi_{h}^{\tau}-V(s_{h+1}^\tau))|\cG_{h,1}^\tau]\\
    &=2(f(s_h^\tau,a_h^\tau)-f_V(s_h^\tau,a_h^\tau))\mathbb{E}[(f_V(s_h^\tau,a_h^\tau)-r_h^\tau(s_h^\tau,a_h^\tau) - \xi_{h}^{\tau}-V(s_{h+1}^\tau))|\cG_{h,1}^\tau]\\
    &=2(f(s_h^\tau,a_h^\tau)-f_V(s_h^\tau,a_h^\tau))(f_V(s_h^\tau,a_h^\tau)-r_h^\tau(s_h^\tau,a_h^\tau)-P_hV(s_h^\tau,a_h^\tau))\\
    &=0.
\end{align*}
In addition, conditioning on the good event $\cG(K,H,\delta)$, we have
\begin{equation*}
    \left|\chi_h^\tau(f)\right|\leq 2(H+1+ \sqrt{\gamma_\tau})|f(s_h^\tau,a_h^\tau)-f_V(s_h^\tau,a_h^\tau)|.
\end{equation*}
As $\chi_h^\tau(f)$ is a martingale difference sequence conditioned on the filtration $\cG_{h,1}^\tau$ , by Azuma-Hoeffding inequality, we have
$$\mathbb{P}\left[\left|\sum_{\tau\in[k-1]}\chi_h^\tau(f)\right|\geq\epsilon\right]\leq2\mathrm{exp}\left(-\frac{\epsilon^2}{8(H+1+\sqrt{\gamma_\tau})^2\|f-f_V\|^2_{\cZ_h^k}}\right).$$

Now we set
\begin{align*}
    \epsilon&=\sqrt{8(H+1+\sqrt{\gamma_\tau})^2\log\left(\frac{2 \mathcal{N}(\cF,1/T)}{\delta}\right)\|f-f_V\|_{\cZ_h^k}^2}\\
    &\leq 4(H+1+\sqrt{\gamma_\tau})\|f-f_V\|_{\cZ_h^k}\sqrt{\log(2/\delta)+\log\mathcal{N}(\cF,1/T)}.
\end{align*}

With union bound, for all $g\in\cC(\cF,1/T)$, with probability at least $1-\delta$ we have
$$\left|\sum_{(\tau)\in[k-1]}\xi_h^\tau(g)\right|\leq 4(H+ 1 + \sqrt{\gamma_\tau})\|f-f_V\|_{\cZ_h^k}\sqrt{\log(2/\delta)+\log\mathcal{N}(\cF,1/T)}.$$

Thus, for all $f\in\cF$, there exists $g\in\cC(\cF,1/T)$ such that $\|f-g\|_\infty\leq1/T$ and
\begin{align*}
    \left|\sum_{(\tau)\in[k-1]}\chi_h^\tau(f)\right|&\leq\left|\sum_{(\tau)\in[k-1]}\chi_h^\tau(g)\right|+2(H+1+\sqrt{\gamma_\tau})\\
    &\leq 4(H+1 +\sqrt{\gamma_\tau})\|g-f_V\|_{\cZ_h^k}\sqrt{\log\left(2/\delta\right)+\log\mathcal{N}\left(\cF,1/T\right)}+2(H+1+\sqrt{\gamma_\tau})\\
    &\leq 4(H+1 + \sqrt{\gamma_\tau})(\|f-f_V\|_{\cZ_h^k}+1)\sqrt{\log\left(2/\delta\right)+\log\mathcal{N}\left(\cF,1/T\right)}+2(H+1+\sqrt{\gamma_\tau}).\\
\end{align*}

For $V':\cS\rightarrow[0,H]$ such that $\|V-V'\|_\infty\leq1/T$, we have $\|f_{V'}-f_V\|_\infty\leq\|V'-V\|_\infty\leq1/T$.

For any $f \in \cF$, we have
\begin{align*}
    & \|f\|_{\widetilde{\cD}_{h,V'}^k}^2-\|f_{V'}\|_{\widetilde{\cD}_{h,V'}^k}^2\\ 
    =& \|f-f_{V'}\|_{\cZ_h^k}^2+2\sum_{(s_{h}^\tau,a_{h}^\tau)\in\cZ_h^k}(f(s_{h}^\tau,a_{h}^\tau)-f_{V'}(s_{h}^\tau,a_{h}^\tau))(f_{V'}(s_{h}^\tau,a_{h}^\tau)-r_h^\tau(s_h^\tau,a_h^\tau) - \xi_{h}^{\tau}-V'(s_{h+1}^\tau))\\
    \geq & \|f-f_{V'}\|_{\cZ_h^k}^2+2\sum_{(s_{h}^\tau,a_{h}^\tau)\in\cZ_h^k}(f(s_{h}^\tau,a_{h}^\tau)-f_{V}(s_{h}^\tau,a_{h}^\tau))(f_{V}(s_{h}^\tau,a_{h}^\tau)-r_h^\tau(s_h^\tau,a_h^\tau) - \xi_{h}^{\tau}-V(s_{h+1}^\tau))\\&-4(H+1+\sqrt{\gamma_k})\|V'-V\|_\infty|\cZ_h^k|\\
    \geq& \|f-f_{V'}\|_{\cZ_h^k}^2+\sum_{(\tau,h)\in[k-1]\times[H]}\chi_h^\tau(f)-4(H+ 1 + \sqrt{\gamma_k})\\
    \geq& \|f-f_{V'}\|_{\cZ_h^k}^2-4(H+ 1 + \sqrt{\gamma_k})(\|f-f_V\|_{\cZ_h^k}+1)\sqrt{\log\left(2/\delta\right)+\log\mathcal{N}\left(\cF,1/T\right)}-6(H+1 + \sqrt{\gamma_k})\\
    \geq& \|f-f_{V'}\|_{\cZ_h^k}^2-4(H+1 + \sqrt{\gamma_k})(\|f-f_{V'}\|_{\cZ_h^k}+2)\sqrt{\log\left(2/\delta\right)+\log\mathcal{N}\left(\cF,1/T\right)}-6(H+1 + \sqrt{\gamma_k}).\\
\end{align*}

In addition, using Assumption \ref{assumption:regularization}, we have the approximate triangle inequality for the perturbed regularizer:
\begin{align*}
    &\Tilde{R}(f)-\Tilde{R}(f_{V'})\\
    =&\sum_{i}^D[p_i(f)+\xi_i']^2-\sum_i^D[p_i(f_{V'})+\xi_i']^2\\
    =&R(f)-R(f_{V'})+2\sum_i^D \xi_i'(p_i(f)-p_i(f_{V'}))\\
    \geq& c R(f-f_{V'})-2R(f_{V'})-2\sum_i^D \sqrt{\gamma_k}p_i(f_{V'})\\
    \geq& c R(f-f_{V'})-2B-2\sqrt{\gamma_k}\sqrt{BD}.
\end{align*}

Summing the above two inequalities we have

$$\|f\|_{\widetilde{\cD}_{h,V'}^k}^2+\Tilde{R}(f)-\|f_{V'}\|_{\widetilde{\cD}_{h,V'}^k}^2-\Tilde{R}(f_{V'})\geq\|f-f_{V'}\|_{\cZ_h^k}^2+c R(f-f_{V'})-C,$$
where $C=4(H+1 + \sqrt{\gamma_k})(\|f-f_{V'}\|_{\cZ_h^k}+2)\sqrt{\log\left(2/\delta\right)+\log\mathcal{N}\left(\cF,1/T\right)}+6(H+1 + \sqrt{\gamma_k})+2B+2\sqrt{\gamma_k}\sqrt{BD}$.

As $\widetilde{f}_{h,V'}$ is the minimizer of $\|f\|_{\widetilde{\cD}_{h,V'}^k}^2+\Tilde{R}(f)$, we have
$$\|\widetilde{f}_{h,V'}-f_{V'}\|^2_{\cZ_h^k}+ c R(\widetilde{f}_{h,V'}-f_{V'})\leq c'\left[(H+1 + \sqrt{\gamma_k})\sqrt{\log\left(2/\delta\right)+\log\mathcal{N}\left(\cF,1/T\right)}+\sqrt{B+\sqrt{\gamma_kBD}}\right]^2.$$

To prove the above argument, we use the inequality that if we have $x^2+y\leq ax+b$ for positive $a,b,y$, then $x\leq a+\sqrt{b}$ and $x^2+y\leq (a+\sqrt{b})^2.$ In addition, we can remove $c$ by replacing $c'$ with $c'/\min\{1,c\}$ and then we get our final bound.

\end{proof}

\begin{lemma}[Confidence Region]\label{Lemma:GFA_r_plus_PV_in_f_khm}
Let $\cF_h^{k,m}=\{f\in\cF|\|f-\widetilde{f}_h^{k,m}\|_{\cZ_h^k}^2+R(f-\widetilde{f}_h^{k,m})\leq \beta(\cF,\delta)\}$, where 

\begin{equation}
    \beta(\cF,\delta)= c'\left[(H+1 + \sqrt{\gamma_k})\sqrt{\log\left(2/\delta\right)+\log\mathcal{N}\left(\cF,1/T\right)}+\sqrt{B+\sqrt{\gamma_kBD}}\right]^2.
\end{equation}

Conditioned on the event $\mathcal{G}(K, H, \delta)$, with probability at least $1-\delta$, for all $(k,h,m)\in[K]\times[H]\times[M]$, we have
$$r_h(\cdot,\cdot)+P_hV_{h+1}^k(\cdot,\cdot)\in\cF_h^{k,m}.$$
\end{lemma}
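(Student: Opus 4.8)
The goal is to upgrade Lemma~\ref{Lemma:GFA-Concentration}, which holds for a \emph{fixed} value function $V$, to a statement that holds simultaneously for \emph{all} the data-dependent value functions $V_{h+1}^k$ produced by the algorithm. The plan is a standard covering argument over the space of value functions the algorithm can actually output, combined with a union bound over $(k,h,m)$.

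\textbf{Step 1: identify the relevant class of value functions.} First I would observe that each $V_{h+1}^k$ is determined by the $M$ regressors $\tilde f_{h+1}^{k,m'}$, each of which is the (perturbed, regularized) least-squares solution over a function class $\cF$ with bounded covering number (Assumption~\ref{Assumption:bounded covering number}), followed by a truncation to $[0,H-h]$ and a maximization over actions. Since each $\tilde f_{h+1}^{k,m'}\in\cF$ and $\cF$ admits an $\varepsilon$-cover of size $\mathcal{N}(\cF,\varepsilon)$, the induced class of value functions $\mathcal{V}=\{\,s\mapsto\min\{\max_m\max_a f_m(s,a),\,H-h+1\}^+ : f_m\in\cF\,\}$ has an $\varepsilon$-cover (in $\|\cdot\|_\infty$) of size at most $\mathcal{N}(\cF,\varepsilon)^M$; truncation and $\max$ are $1$-Lipschitz in sup-norm so they do not blow up the cover. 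Taking $\varepsilon=1/T$ gives a cover $\mathcal{C}(\mathcal{V},1/T)$ of log-size $O(M\log\mathcal{N}(\cF,1/T))$.

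\textbf{Step 2: apply the fixed-$V$ lemma on the cover and transfer.} For each fixed $\bar V$ in the cover $\mathcal{C}(\mathcal{V},1/T)$ and each fixed $(k,h)$, Lemma~\ref{Lemma:GFA-Concentration} gives (conditioned on $\mathcal{G}(K,H,\delta)$) that the event $\cE_{h,\bar V}(\delta')$ holds with probability $\ge 1-\delta'$: namely $\|\tilde f_{h,V'}-r_h-P_hV'\|_{\cZ_h^k}^2+R(\tilde f_{h,V'}-r_h-P_hV')$ is bounded by $\beta(\cF,\delta')$ for every $V'$ within $1/T$ of $\bar V$ in sup-norm --- crucially the lemma is already stated for all such nearby $V'$, which is exactly what lets us pass from the cover to the actual $V_{h+1}^k$. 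Now take a union bound over all $\bar V\in\mathcal{C}(\mathcal{V},1/T)$ and all $(k,h)\in[K]\times[H]$; choosing $\delta'=\delta/(KH|\mathcal{C}(\mathcal{V},1/T)|)$ and absorbing the resulting $\log$ term (which is $\tilde O(M\log\mathcal{N}(\cF,1/T))+\log(KH/\delta)$) into the definition of $\beta(\cF,\delta)$ yields, with probability $\ge 1-\delta$, that for all $(k,h)$ and all $V'$ within $1/T$ of some cover point the bound holds. Since $V_{h+1}^k\in\mathcal{V}$, it is within $1/T$ of some $\bar V\in\mathcal{C}(\mathcal{V},1/T)$, so the bound applies to $V'=V_{h+1}^k$.

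\textbf{Step 3: rewrite in terms of $\cF_h^{k,m}$.} For a given $(k,h,m)$, by definition $\tilde f_h^{k,m}=\tilde f_{h,V_{h+1}^k}$ (the perturbed regularized least-squares fit with target perturbation $\xi_h^{\tau,m}$ and the value function $V_{h+1}^k$). Step~2 gives $\|\tilde f_h^{k,m}-(r_h+P_hV_{h+1}^k)\|_{\cZ_h^k}^2+R(\tilde f_h^{k,m}-(r_h+P_hV_{h+1}^k))\le\beta(\cF,\delta)$, which is precisely the statement that $r_h+P_hV_{h+1}^k\in\cF_h^{k,m}=\{f:\|f-\tilde f_h^{k,m}\|_{\cZ_h^k}^2+R(f-\tilde f_h^{k,m})\le\beta(\cF,\delta)\}$, using the symmetry $R(g)=R(-g)$ from Assumption~\ref{assumption:regularization}. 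This holds for all $(k,h,m)$ simultaneously. I would note the only subtlety with the extra union bound over $m\in[M]$: the different $m$'s use independent noise draws but the \emph{same} $V_{h+1}^k$, and since Lemma~\ref{Lemma:GFA-Concentration} handles each noise realization through $\mathcal{G}(K,H,\delta)$ and a fresh application, a union bound over $m$ (another factor $M$ inside the log) closes it.

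\textbf{Main obstacle.} The delicate point is Step~1 --- controlling the complexity of the class of value functions the algorithm generates, and in particular making sure the $1/T$-perturbation slack in Lemma~\ref{Lemma:GFA-Concentration} is enough to absorb the discretization of $\mathcal{V}$. The argument is circular-looking ($V_{h+1}^k$ depends on regressions at step $h+1$ whose confidence sets depend on $V_{h+2}^k$, etc.), but it is resolved because the covering bound on $\mathcal{V}$ only uses that each constituent regressor lies in $\cF$, not where in $\cF$ it lies, so no induction on $h$ is actually needed; the $\log\mathcal{N}(\cF,1/T)^M$ factor is the price paid and it is already reflected in $\beta(\cF,\delta)$ up to the $\tilde O$. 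Everything else is routine bookkeeping of constants.
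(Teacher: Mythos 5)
Your proposal is correct and follows essentially the same route as the paper: build a $1/T$-cover of the class of value functions the algorithm can output, invoke the fixed-$V$ concentration lemma (whose built-in $1/T$ slack transfers the bound from a cover point to the actual $V_{h+1}^k$), and union bound over the cover and over $(k,h,m)$. If anything you are more careful than the paper on the cover size --- you correctly account for the $\mathcal{N}(\cF,1/T)^M$ blow-up from the $\max$ over $M$ regressors, whereas the paper's proof asserts $\log|\cV|=\log\mathcal{N}(\cF,1/T)$ without the factor $M$ (harmless only because $M$ is polylogarithmic and gets absorbed into the $\tilde{O}$).
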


\begin{proof}
First note that for a fixed $(k,h,m)\in[K]\times[H]\times[M]$,
$$\cQ=\{\min\{f(\cdot,\cdot),H\}\mid f\in\cC(\cF,1/T)\}\cup\{0\}$$
is a $(1/T)$-cover of $Q_{h+1}^{k,m}(\cdot,\cdot)$. 
This implies $\cQ$ is also a $(1/T)$-cover of $Q_{h+1}^{k}(\cdot,\cdot)$. This further implies 
$$\cV=\{\max_{a\in\cA}q(\cdot,a) \mid q \in\cQ\}$$
is a $1/T$ cover of $V_{h+1}^{k}(\cdot)$ where we have $\log(|\cV|)=\log\mathcal{N}(\cF,1/T)$.

For the remaining part of the proof, we condition on $\bigcap_{V\in\cV}\cE_{h,V}(\delta/|\cV|TM)$, where $\cE_{h,V}(\delta)$ is the event defined in Lemma \ref{Lemma:GFA-Concentration}. By Lemma~\ref{Lemma:GFA-Concentration} and union bound, we have $\Pr\left[\bigcap_{V\in \cV} \cE_{h,V}(\delta/(8|\cV|MT)
\right]\ge 1-\delta/(8MT)$. 

Let $V\in\cV$ such that $\|V-V_{h+1}^k\|_\infty\leq 1/T$. By Lemma \ref{Lemma:GFA-Concentration} we have
\begin{align*}
    \left\|\widetilde{f}_h^{k,m}(\cdot,\cdot)-r_h(\cdot,\cdot)-P_h V_{h+1}^k(\cdot, \cdot)\right\|^2_{\cZ_h^k}+R(\widetilde{f}_h^{k,m}(\cdot,\cdot)-r_h(\cdot,\cdot)-P_h V_{h+1}^k(\cdot, \cdot))&\\\leq c'\left[(H+1+ \sqrt{\gamma_k})\sqrt{\log\left(1/\delta\right)+\log\mathcal{N}\left(\cF,1/T\right)}\right]^2,
\end{align*}

where $c'$ is some absolute constant. By union bound, for all $(k,h,m) \in [K]\times[H]\times[M]$ we have $r_h(\cdot, \cdot)+P_h V_{h+1}^k( \cdot, \cdot) \in \cF_{h}^{k,m}$ with probability $1-\delta$.
\end{proof}

The last lemma guarantees that $r_h(\cdot,\cdot) + P_hV_{h+1}^k(\cdot,\cdot)$ lies in the confidence region $\cF_h^{k,m}$ with high probability. Note that the confidence region $\cF_h^{k,m}$ is centered at $\widetilde{f}_h^{k,m}$, which is the solution to the perturbed regression problem defined in ~\eqref{eq:perturbed-regression-problem}. For the unperturbed regression problem and its solution as center of the confidence region, we get the following lemma as a direct consequence of Lemma~\ref{Lemma:GFA_r_plus_PV_in_f_khm}.
\begin{lemma}
\label{Lemma:GFA_r_plus_PV_in_f_khm (unperturbed)}
Let $\cF_h^{k}=\{f\in\cF|\|f-\Hat{f}_h^{k}\|_{\cZ_h^k}^2+R(f-\Hat{f}_h^{k})\leq \beta'(\cF,\delta)\}$, where 
\begin{equation}
    \beta'(\cF,\delta)\geq c'\left[(H+1)\sqrt{\log\left(2/\delta\right)+\log\mathcal{N}\left(\cF,1/T\right)}+\sqrt{B}\right]^2.
\end{equation}
With probability at least $1-\delta$, for all $(k,h,m)\in[K]\times[H]\times[M]$, we have
\begin{equation*}
    r_h(\cdot,\cdot)+P_hV_{h+1}^k(\cdot,\cdot)\in\cF_h^{k}.
\end{equation*}
\end{lemma}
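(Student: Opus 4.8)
The plan is to derive Lemma~\ref{Lemma:GFA_r_plus_PV_in_f_khm (unperturbed)} as the special case of Lemma~\ref{Lemma:GFA_r_plus_PV_in_f_khm} in which no perturbation noise is added. Concretely, the unperturbed regressor $\Hat{f}_h^k$ is exactly $\Tilde{f}_h^{k,m}$ when all the noise terms $\xi_h^\tau$ and $\xi_j'$ are set to zero, and the perturbed regularizer $\Tilde{R}$ reduces to $R$. On this event the ``good event'' $\mathcal{G}(K,H,\delta)$ holds trivially with $\sqrt{\gamma_k} = 0$, so all the occurrences of $\sqrt{\gamma_k}$ in the statement and proof of Lemma~\ref{Lemma:GFA-Concentration} and Lemma~\ref{Lemma:GFA_r_plus_PV_in_f_khm} can be dropped, and the bound $\beta(\cF,\delta)$ collapses to exactly
$c'\left[(H+1)\sqrt{\log(2/\delta)+\log\mathcal{N}(\cF,1/T)}+\sqrt{B}\right]^2$, which is $\le \beta'(\cF,\delta)$ by hypothesis.

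The key steps, in order, are as follows. First I would re-run the martingale argument of Lemma~\ref{Lemma:GFA-Concentration} with $\xi_h^\tau \equiv 0$: the quantity $\chi_h^\tau(f)$ becomes $2(f(s_h^\tau,a_h^\tau)-f_V(s_h^\tau,a_h^\tau))(f_V(s_h^\tau,a_h^\tau)-r_h^\tau(s_h^\tau,a_h^\tau)-V(s_{h+1}^\tau))$, which is still a martingale difference sequence with respect to the filtration $\cG_{h,1}^\tau$ (its conditional mean is still zero by Assumption~\ref{assumption:realizibility}), and is now bounded by $2(H+1)|f(s_h^\tau,a_h^\tau)-f_V(s_h^\tau,a_h^\tau)|$ unconditionally, so Azuma--Hoeffding applies verbatim. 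Second, I would carry through the same covering-number union bound over $\cC(\cF,1/T)$ and the same one-backup decomposition of $\|f\|_{\widehat{\cD}}^2 - \|f_{V'}\|_{\widehat{\cD}}^2$, together with the (now exact, not approximate) triangle inequality $R(f)-R(f_{V'}) \ge cR(f-f_{V'}) - 2R(f_{V'}) \ge cR(f-f_{V'}) - 2B$ from Assumption~\ref{assumption:regularization}; since $\Hat f_h^k$ minimizes $\|f\|_{\widehat{\cD}_{h,V'}^k}^2 + R(f)$, the same ``$x^2+y\le ax+b \Rightarrow x\le a+\sqrt b$'' trick yields $\|\Hat f_h^k - f_{V'}\|_{\cZ_h^k}^2 + cR(\Hat f_h^k - f_{V'}) \le c'[(H+1)\sqrt{\log(2/\delta)+\log\mathcal{N}(\cF,1/T)}+\sqrt B]^2$. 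Third, exactly as in Lemma~\ref{Lemma:GFA_r_plus_PV_in_f_khm}, I would pass from the fixed $V$ in the cover to $V_{h+1}^k$ by noting that $\cV = \{\max_a q(\cdot,a) : q\in\cQ\}$ is a $(1/T)$-cover of $V_{h+1}^k$ of log-size $\log\mathcal{N}(\cF,1/T)$, union-bound the concentration event over $V\in\cV$ and over $(k,h)\in[K]\times[H]$ (no $m$-index union is needed now, since $\Hat f_h^k$ does not depend on $m$), absorb the residual $1/T$ terms into the constant $c'$, and finally weaken $c'[\dots]^2$ to $\beta'(\cF,\delta)$ using the hypothesis of the lemma. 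Alternatively, and more economically, one can simply invoke Lemma~\ref{Lemma:GFA_r_plus_PV_in_f_khm} on the (probability-one) sub-event where the algorithm's Gaussian noises are replaced by zeros — formally, rerun the proof of that lemma with this noise realization — and observe $\Tilde f_h^{k,m} = \Hat f_h^k$ and $\beta(\cF,\delta)\le \beta'(\cF,\delta)$ there.

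The only mild subtlety — and the step I would be most careful about — is bookkeeping of the conditioning: Lemma~\ref{Lemma:GFA_r_plus_PV_in_f_khm} is stated \emph{conditioned on} $\mathcal{G}(K,H,\delta)$, and one must make sure that when the perturbation noises vanish this conditioning becomes vacuous (indeed $\mathcal{G}_h^k(\xi,\delta)$ holds surely when $\xi \equiv 0$ and $\sqrt{\gamma_k}=0$), so the ``with probability at least $1-\delta$'' in the conclusion is over the environment randomness alone and the statement is clean. Everything else is a routine specialization, so I expect no genuine obstacle.
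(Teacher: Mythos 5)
Your proposal is correct and matches the paper's argument, which simply states that the lemma is a direct implication of Lemma~\ref{Lemma:GFA_r_plus_PV_in_f_khm} with zero perturbation; your more detailed walkthrough (setting $\xi\equiv 0$, $\sqrt{\gamma_k}=0$, noting the good event becomes vacuous and $\beta$ collapses to a quantity at most $\beta'(\cF,\delta)$) is exactly the specialization the paper intends.
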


\begin{proof}
This is a direct implication of Lemma \ref{Lemma:GFA_r_plus_PV_in_f_khm} with zero perturbance.
\end{proof}

\subsection{Optimism}
In this section, we will show that $\{Q_h^k\}_{(h,k)\in[H]\times[K]}$ is optimistic with high probability. Formally, we have the following lemma.

\begin{lemma}\label{lemma:optimism-GFA}
Set $M=\ln(\frac{T|\cS||\cA|}{\delta})/\ln(\frac{1}{1-v})$ in Algorithm \ref{Algorithm-GFA}. Conditioned on the event $\mathcal{G}(K, H, \delta)$, with probability at least $1-\delta$, for all $s\in\mathcal{S}$, $a\in\mathcal{A}$, $h\in[H]$, $k\in[K]$, we have
$$Q_h^*(s,a)\leq Q_h^k(s,a).$$
\end{lemma}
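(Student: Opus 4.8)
<br>

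\textbf{Proof strategy.}
Fix an episode $k$; the plan is to prove $Q_h^k(s,a)\ge Q_h^*(s,a)$ for every $(s,a)$ by downward induction on $h$, the base case $V_{H+1}^k\equiv 0=V_{H+1}^*$ being immediate. Assuming $V_{h+1}^k(s)\ge V_{h+1}^*(s)$ for all $s$, monotonicity of the Bellman backup gives $Q_h^*(s,a)=r_h(s,a)+P_hV_{h+1}^*(s,a)\le r_h(s,a)+P_hV_{h+1}^k(s,a)=:f_h^k(s,a)$, and also $Q_h^*(s,a)\le H-h+1$. Since $Q_h^k(s,a)=\min\{\max_{m\in[M]}Q_h^{k,m}(s,a),\,H-h+1\}$, it therefore suffices to establish, with probability at least $1-\delta$, the single ``per-step optimism'' event
\[
\mathcal{O}\ :=\ \bigl\{\ \forall\,(s,a,h,k):\ \max_{m\in[M]}Q_h^{k,m}(s,a)\ \ge\ f_h^k(s,a)\ \bigr\},
\]
because on $\mathcal{O}$ we get $Q_h^k(s,a)\ge\min\{f_h^k(s,a),H-h+1\}\ge Q_h^*(s,a)$, which propagates through $V_h^k(s)=\max_a Q_h^k(s,a)\ge\max_a Q_h^*(s,a)=V_h^*(s)$ and closes the induction.

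\textbf{Single-sample optimism.}
Fix $(s,a,h,k)$ and condition on the $\sigma$-algebra generated by all data and all perturbation noises of episodes $1,\dots,k-1$ together with the episode-$k$ noises at levels $h+1,\dots,H$; under this conditioning $V_{h+1}^k$, the unperturbed regressor $\widehat{f}_h^k$, and the confidence set $\mathcal{F}_h^k$ of Lemma~\ref{Lemma:GFA_r_plus_PV_in_f_khm (unperturbed)} are all determined, while the level-$h$ episode-$k$ noises used to form $\widetilde{f}_h^{k,1},\dots,\widetilde{f}_h^{k,M}$ remain i.i.d. across $m$. On the (high-probability, and for the unperturbed regressor even unconditional) event of Lemma~\ref{Lemma:GFA_r_plus_PV_in_f_khm (unperturbed)} we have $f_h^k\in\mathcal{F}_h^k$; since the center $\widehat{f}_h^k$ also lies in $\mathcal{F}_h^k$, $w(\mathcal{F}_h^k,s,a)\ge f_h^k(s,a)-\widehat{f}_h^k(s,a)$. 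The choice of $\sigma$ in Section~\ref{sec:noise} together with Assumption~\ref{assumption:anticoncentration} gives $g_\sigma(s,a)\ge w(\mathcal{F}_h^k,s,a)\ge f_h^k(s,a)-\widehat{f}_h^k(s,a)$, so Definition~\ref{def:anticoncentration width} yields, for each fixed $m$,
\[
\mathbb{P}\bigl(\widetilde{f}_h^{k,m}(s,a)\ge f_h^k(s,a)\bigr)\ \ge\ \mathbb{P}\bigl(\widetilde{f}_h^{k,m}(s,a)\ge \widehat{f}_h^k(s,a)+g_\sigma(s,a)\bigr)\ \ge\ v ,
\]
the probability being over the fresh level-$h$ noise.

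\textbf{Amplification and union bound.}
Because $Q_h^{k,m}=\widetilde{f}_h^{k,m}$ and the $M$ copies are conditionally independent, $\mathbb{P}\bigl(\max_{m\in[M]}Q_h^{k,m}(s,a)<f_h^k(s,a)\bigr)\le(1-v)^M$, and the stated $M=\ln(T|\mathcal{S}||\mathcal{A}|/\delta)/\ln(1/(1-v))$ makes this at most $\delta/(T|\mathcal{S}||\mathcal{A}|)$. Removing the conditioning and taking a union bound over the $HK|\mathcal{S}||\mathcal{A}|=T|\mathcal{S}||\mathcal{A}|$ tuples $(s,a,h,k)$ — and adding the failure probability of Lemma~\ref{Lemma:GFA_r_plus_PV_in_f_khm (unperturbed)} and the small loss from conditioning on $\mathcal{G}(K,H,\delta)$, at the cost of a constant rescaling of $\delta$ — gives $\mathbb{P}(\mathcal{O})\ge 1-\delta$, and combining with the induction above proves the lemma.

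\textbf{Main obstacle.}
The delicate point is the probabilistic bookkeeping: one must commit to a filtration under which $V_{h+1}^k$, $\widehat{f}_h^k$ and $\mathcal{F}_h^k$ are measurable while the $M$ perturbed solutions stay i.i.d., and one must reconcile this with conditioning on the good event $\mathcal{G}(K,H,\delta)$, which truncates the Gaussian perturbations and could a priori weaken the anti-concentration constant $v$ of Definition~\ref{def:anticoncentration width}; the clean way out is to run the anti-concentration step with the untruncated Gaussians (the unperturbed concentration Lemma~\ref{Lemma:GFA_r_plus_PV_in_f_khm (unperturbed)} does not need $\mathcal{G}$), so that conditioning on $\mathcal{G}$ only costs the overwhelmingly small $\mathbb{P}(\mathcal{G}^c)$. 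A secondary point is that for uncountable $\mathcal{S}$ or $\mathcal{A}$ the union over $(s,a)$ must be replaced by a union over the $1/T$-cover $\mathcal{C}(\mathcal{F},1/T)$ used in the concentration lemma, folding the extra discretization error into the $1/T$ slack already present there.
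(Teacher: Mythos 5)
Your proof is correct and follows essentially the same route as the paper's: downward induction on $h$, placement of $f_h^k=r_h+P_hV_{h+1}^k$ in the unperturbed confidence region, anti-concentration of a single perturbed regressor past the width of that region to get per-sample optimism with probability $v$, amplification over $M$ i.i.d.\ samples, and a union bound over $(s,a,h,k)$. If anything, your bookkeeping of the conditioning filtration, of the interaction with the good event $\mathcal{G}(K,H,\delta)$, and of the union bound over uncountable $\mathcal{S}\times\mathcal{A}$ is more careful than the paper's, which elides these points.
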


\begin{proof}
For timestep $H+1$, we have $Q_{H+1}^k=Q_{H+1}^*=0$. By Lemma \ref{Lemma:GFA_r_plus_PV_in_f_khm (unperturbed)}, there exists $\beta'(\cF,\delta)$ such that with probability at least $1-\delta$, for all $(k,h)\in[K]\times[H]$, we have
$$f_h^k(\cdot,\cdot):=r_h(\cdot,\cdot)+P_hV_{h+1}^k(\cdot,\cdot)\in\cF_h^{k},$$
where $\cF_h^k=\{f\in\cF\,|\,\|f-\Hat{f}_h^{k}\|_{\cZ_h^k}^2+R(f-\Hat{f}_h^{k})\leq \beta'(\cF,\delta)\}.$ 

Using notations introduced in Definition~\ref{def:anticoncentration width}, let $g_{h,\sigma}^k$ be a function such that  $\Tilde{f}_h^{k,m}(s,a)\geq\Hat{f}(s,a)+g_{h,\sigma}^k(s,a)$ holds with probability at least $v$. We set $M=\ln(\frac{T|\cS||\cA|}{\delta})/\ln(\frac{1}{1-v})$ and then $\Tilde{f}_h^{k,m}(s,a)\geq\Hat{f}(s,a)+g_{h,\sigma}^k(s,a)$ with probability at least
$$1-(1-v)^M=1-\frac{\delta}{T|\cS||\cA|},$$
for any $(k,h)\in[K]\times[H]$ and $(s,a)\in \cS\times\cA$. By union bound, we have $\Tilde{f}_h^{k,m}(s,a)\geq\Hat{f}(s,a)+g_{h,\sigma}^k(s,a)$ for all $(k,h)\in[K]\times[H]$ and $(s,a)\in \cS\times\cA$ with probability at least $1-\delta$ and we have
\begin{align*}
    \Tilde{f}_h^k(s,a)&=\max_{m\in[M]}\Tilde{f}_h^{k,m}(s,a)\\
    &\geq\Hat{f}_h^k(s,a)+g^k_{h,\sigma}(s,a)\\
    &\geq\Hat{f}_h^k(s,a)+w(\cF_h^k)\\
    &\geq f_h^k(s,a),
\end{align*}
where the second inequality is from Assumption \ref{assumption:anticoncentration} and the choise of $\sigma$ as discussed in Appendix~\ref{sec:noise}. The last inequality follows from the definition of the width function and the previous observation that $f_h^k(\cdot,\cdot)\in\cF_h^{k}$ with probability at least $1-\delta$.
Now we induct on $h$ from $h=H$ to 1.
\begin{align*}
    Q_h^*(s,a)&=\min \{ r_h(s,a)+P_h V_{h+1}^*(s,a),H\}\\
    &=\min \{f_h^k(s,a)+P_h(V_{h+1}^*-V_{h+1}^k)(s,a),H\}\\
    &\leq \min\{\Tilde{f}_h^k(s,a)+P_h(V_{h+1}^*-V_{h+1}^k)(s,a),H\}\\
    &\leq \min\{\Tilde{f}_h^k(s,a),H\}\\
    &= Q_h^k(s,a).
\end{align*}
Thus,
\begin{align*}
    V_h^*(s)=\max_a Q_h^*(s,a)\leq \max_a Q_h^k(s,a)=V_h^k(s).
\end{align*}
where the second inequality is from $V_{h+1}^*\leq V_{h+1}^k$, which is implied by induction.
\end{proof}

\subsection{Regret Bound}
We are now ready to provide the regret bound for Algorithm~\ref{Algorithm-GFA}. The next lemma upper bounds the regret of the algorithm by the sum of the width functions.

\begin{lemma}[Regret decomposition]
Denote $b_h^k(s,a)=w(\cF_h^k,s,a)$.  Conditioned on the event $\mathcal{G}(K, H, \delta)$, with probability at least $1-\delta$, we have

$$\mathrm{Regret}(K)\leq \sum_{k=1}^K\sum_{h=1}^{H}b_h^k(s_h^k,a_h^k)+\sum_{k=1}^K\sum_{h=1}^H\zeta_h^k,$$

where $\zeta_h^k=P(s_h^k,a_h^k)(V_{h+1}^k-V_{h+1}^{\pi^k})-(V_{h+1}^k(s_{h+1}^k)-V_{h+1}^{\pi^k}(s_{h+1}^k))$ is a martingale difference sequence with respect to the filtration $\cG_{h,2}^k$.

\end{lemma}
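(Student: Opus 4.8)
The plan is to follow the now-standard optimism-based regret decomposition, adapted to the fact that our $Q_h^k$ are obtained from perturbed regression rather than UCB bonuses. First, I would invoke the optimism lemma (Lemma~\ref{lemma:optimism-GFA}): conditioned on $\mathcal{G}(K,H,\delta)$, with probability at least $1-\delta$ we have $V_1^*(s_1^k)\le V_1^k(s_1^k)$ for every $k$, so that
\[
\mathrm{Regret}(K)=\sum_{k=1}^K\bigl[V_1^*(s_1^k)-V_1^{\pi^k}(s_1^k)\bigr]\le \sum_{k=1}^K\bigl[V_1^k(s_1^k)-V_1^{\pi^k}(s_1^k)\bigr].
\]
This replaces the true optimal value by the optimistic estimate and reduces everything to bounding, episode by episode, the gap between the estimated value and the realized value of the executed policy.

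Next I would unroll this per-episode gap along the trajectory $\{(s_h^k,a_h^k)\}_{h=1}^H$ using the Bellman-type recursion. For the executed action $a_h^k=\mathrm{argmax}_a Q_h^k(s_h^k,a)$ we have $V_h^k(s_h^k)=Q_h^k(s_h^k,a_h^k)$ and $V_h^{\pi^k}(s_h^k)=Q_h^{\pi^k}(s_h^k,a_h^k)=r_h(s_h^k,a_h^k)+[P_h V_{h+1}^{\pi^k}](s_h^k,a_h^k)$. The key one-step estimate is
\[
Q_h^k(s_h^k,a_h^k)-r_h(s_h^k,a_h^k)-[P_hV_{h+1}^k](s_h^k,a_h^k)\le b_h^k(s_h^k,a_h^k),
\]
which I would derive from the confidence-region Lemma~\ref{Lemma:GFA_r_plus_PV_in_f_khm (unperturbed)}: both $\widehat f_h^k$ (the unperturbed regressor, which lies in the confidence ball $\cF_h^k$ by definition of its center) and $r_h+P_hV_{h+1}^k$ lie in $\cF_h^k$, so their pointwise difference is at most the width $w(\cF_h^k,s_h^k,a_h^k)=b_h^k(s_h^k,a_h^k)$; the truncation by $H-h+1$ in the definition of $Q_h^k$ only helps. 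Wait — I should be careful here: $Q_h^k$ is built from $\widetilde f_h^{k,m}$, the \emph{perturbed} regressors, not $\widehat f_h^k$, so I actually want to show $\max_m \widetilde f_h^{k,m}(s_h^k,a_h^k)\le \widehat f_h^k(s_h^k,a_h^k)+b_h^k(s_h^k,a_h^k)$ as well, using that each $\widetilde f_h^{k,m}\in\cF_h^{k,m}$ and relating $\cF_h^{k,m}$ to $\cF_h^k$ (both are confidence balls of comparable radius containing $r_h+P_hV_{h+1}^k$, hence the pointwise values differ by $O(b_h^k)$ after adjusting the constant inside the width). This is the step I expect to require the most care: tracking that the width of the perturbed confidence ball is, up to constants, the same as $b_h^k$, and handling the $\min\{\cdot,H-h+1\}$ clipping cleanly.

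Having established the one-step bound, I would telescope: writing $\delta_h^k:=V_h^k(s_h^k)-V_h^{\pi^k}(s_h^k)$, the above gives
\[
\delta_h^k\le b_h^k(s_h^k,a_h^k)+[P_h(V_{h+1}^k-V_{h+1}^{\pi^k})](s_h^k,a_h^k)
= b_h^k(s_h^k,a_h^k)+\delta_{h+1}^k+\zeta_h^k,
\]
where $\zeta_h^k:=[P_h(V_{h+1}^k-V_{h+1}^{\pi^k})](s_h^k,a_h^k)-\bigl(V_{h+1}^k(s_{h+1}^k)-V_{h+1}^{\pi^k}(s_{h+1}^k)\bigr)$ is exactly the claimed martingale difference term. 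The fact that $\zeta_h^k$ is a martingale difference with respect to $\cG_{h,2}^k$ follows because $V_{h+1}^k$ and $V_{h+1}^{\pi^k}$ are $\cG_{h,2}^k$-measurable (they depend only on data from episodes $<k$ and on $(s_1^k,\dots,s_h^k)$, and $\cG_{h,2}^k$ contains $\mathcal{G}^k$ together with $(s_t^k,a_t^k,r_t^k)_{t\le h}$), while $s_{h+1}^k\sim \mathbb{P}_h(\cdot\mid s_h^k,a_h^k)$ so its conditional expectation matches the $P_h$ term. Summing the recursion from $h=1$ to $H$ with the terminal condition $\delta_{H+1}^k=0$ yields $\delta_1^k\le \sum_{h=1}^H b_h^k(s_h^k,a_h^k)+\sum_{h=1}^H\zeta_h^k$, and summing over $k\in[K]$ gives the claimed bound. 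I would finish by noting the whole argument is carried out conditionally on $\mathcal{G}(K,H,\delta)$ and on the $1-\delta$ events of Lemmas~\ref{Lemma:GFA_r_plus_PV_in_f_khm (unperturbed)} and~\ref{lemma:optimism-GFA}, so a union bound over these (rescaling $\delta$) delivers the stated ``with probability at least $1-\delta$.''
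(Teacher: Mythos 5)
Your proposal is correct and follows essentially the same route as the paper: optimism to replace $V_1^*(s_1^k)$ by $V_1^k(s_1^k)$, a one-step bound of the Bellman error of $Q_h^k$ by the width $b_h^k$, and a trajectory-wise telescoping that produces exactly the martingale terms $\zeta_h^k$. The only divergence is the step you flag as delicate: the paper avoids the detour through $\hat{f}_h^k$ altogether by invoking the confidence-region lemma for the \emph{perturbed} regressors (Lemma~\ref{Lemma:GFA_r_plus_PV_in_f_khm}), which places $r_h+P_hV_{h+1}^k$ in the ball centered at each $\tilde{f}_h^{k,m}$; hence both $\tilde{f}_h^{k,m}$ and $r_h+P_hV_{h+1}^k$ lie in the set $\cF_h^k$ whose width defines $b_h^k$, and $|Q_h^k(s,a)-r_h(s,a)-P_hV_{h+1}^k(s,a)|\le b_h^k(s,a)$ follows in one line with no constant adjustment, the clipping being handled exactly as you describe.
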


\begin{proof}

    We condition on the good events in Lemma \ref{Lemma:GFA_r_plus_PV_in_f_khm}. For all $(k,h,m)\in[K]\times[H]\times[M]$, we have
    $$\left\|r_h(\cdot,\cdot)+P_hV_{h+1}^k(\cdot,\cdot)-\widetilde{f}_h^{k,m}\right\|_{\cZ_h^k}^2+R(r_h(\cdot,\cdot)+P_hV_{h+1}^k(\cdot,\cdot)-\widetilde{f}_h^{k,m})\leq \beta(\cF,\delta).$$
    
    Recall that $\cF_h^k=\{f\,|\,\left\|r_h(\cdot,\cdot)+P_hV_{h+1}^k(\cdot,\cdot)-f\right\|_{\cZ_h^k}^2+R(r_h(\cdot,\cdot)+P_hV_{h+1}^k(\cdot,\cdot)-\widetilde{f}_h^{k,m})\leq \beta(\cF,\delta)\}$ is the confidence region. Then for $(k,h,m)\in[K]\times[H]\times[M]$, $\widetilde{f}_h^{k,m}\in \cF_h^k$. Defining $b_h^k(s,a)=w(\cF_h^k,s,a)$, for all $(k,h,m)\in[K]\times[H]\times[M]$ we have, $$b_h^k(s,a)\geq \left|r(s,a)+P(s,a)V_{h+1}^k-\widetilde{f}_h^{k,m}(s,a)\right|.$$ As $Q_h^k(s,a)=\min\{\max_{m\in[M]}\{\widetilde{f}_h^{k,m}(\cdot,\cdot)\},H-h+1\}$, we have $$b_h^k(s,a)\geq\left|r(s,a)+P(s,a)V_{h+1}^k-Q_h^k(s,a)\right|.$$
    
By Lemma \ref{lemma:optimism-GFA} and standard telescoping argument, we have
\begin{align*}
    \mathrm{Regret}(K)&\leq\sum_{k=1}^K V_1^k(s_1^k)-V_1^{\pi_k}(s_1^k)\\
    &= \sum_{k=1}^K Q_1^k(s_1^k,a_1^k)-Q_1^{\pi^k}(s_1^k,a_1^k)\\
    &= \sum_{k=1}^K Q_1^k(s_1^k,a_1^k)-(r(s_1^k,a_1^k)+P(s_1^k,a_1^k)V_2^k)+(r(s_1^k,a_1^k)+P(s_1^k,a_1^k)V_2^k)-Q_1^{\pi^k}(s_1^k,a_1^k)\\
    &\leq \sum_{k=1}^K b_1^k(s_1^k,a_1^k)+P(s_1^k,a_1^k)(V_2^k-V_2^{\pi^k})\\
    &= \sum_{k=1}^K b_1^k(s_1^k,a_1^k)+(V_2^k(s_2^k)-V_2^{\pi^k}(s_2^k))+\zeta_1^k\\
    &\leq \sum_{k=1}^K \sum_{h=1}^H b_h^k(s_h^k,a_h^k) +\sum_{k=1}^K\sum_{h=1}^H\zeta_h^k.
\end{align*}
\end{proof}

\begin{lemma}[Time inhomogeneous version of Lemma 10 in \citep{wang2020reinforcement}]
Let $\cF'$ be a subset of function class $\cF$, consisting of all $f\in \cF$ such that
\begin{equation*}
    \|f-\upsilon\|^2_{\cZ}+R(f-\upsilon) \leq \beta(\cF,\delta),
\end{equation*}
where $v=r + PV$ as in Assumption \ref{assumption:bounded_function_class} and $\beta(\cF,\delta)$ as defined in Lemma \ref{Lemma:GFA_r_plus_PV_in_f_khm}. With probability at least $1-\delta$, we have
$$\sum_{k=1}^K \sum_{h=1}^H b_h^k(s_h^k,a_h^k)\leq H+4H^3\mathrm{dim_\mathcal{E}}(\cF',1/T)+H\sqrt{c\mathrm{dim_\mathcal{E}}(\cF',1/T)K\beta(\cF,\delta)},$$
for some absolute constant $c>0$.
\label{eluder lemma}
\end{lemma}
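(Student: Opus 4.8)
The plan is to apply the eluder-dimension potential argument of \citet{russo2013eluder} --- in the packaged form of Lemma~10 of \citet{wang2020reinforcement} --- \emph{separately for each step} $h\in[H]$, and then add up the $H$ resulting bounds; the ``time-inhomogeneous'' adjustment is precisely that in our algorithm each step carries its own running dataset $\cZ_h^k=\{(s_h^\tau,a_h^\tau)\}_{\tau<k}$ rather than a single pooled one. Fix $h$. The relevant sequence is $z_1,\dots,z_K$ with $z_k=(s_h^k,a_h^k)$, and $\cZ_h^k$ grows by exactly one point per episode. Two structural facts are needed. First, each confidence set is a ball in the $\cZ_h^k$-seminorm centered at the backup $f_h^k:=r_h+P_hV_{h+1}^k$, namely $\cF_h^k=\{f\in\cF:\ \lVert f-f_h^k\rVert_{\cZ_h^k}^2+R(f-f_h^k)\le\beta(\cF,\delta)\}$; dropping the nonnegative term $R$ and using the triangle inequality for the $\cZ_h^k$-seminorm, any $f,f'\in\cF_h^k$ obey $\lVert f-f'\rVert_{\cZ_h^k}^2\le 4\beta(\cF,\delta)$, while $|f(s,a)-f'(s,a)|\le H$ pointwise because $\cF\subseteq\{f:\cS\times\cA\to[0,H]\}$; in particular $b_h^k=w(\cF_h^k,\cdot,\cdot)\le H$. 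Second, since $f_h^k$ is of the form $r+PV$ with $V=V_{h+1}^k:\cS\to[0,H]$, Assumption~\ref{assumption:bounded_function_class} gives $\cF_h^k\subseteq\cF'$, so the eluder dimension governing the argument is $\mathrm{dim_\mathcal{E}}(\cF',1/T)$ rather than that of the (possibly much larger) ambient class $\cF$ --- this is exactly where our weakened assumption is used, and it matters because the witnesses $f,f'$ attaining the widths always lie in $\cF_h^k\subseteq\cF'$.

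Given these two facts, the standard potential argument runs essentially verbatim. Order the widths $b_h^1(z_1),\dots,b_h^K(z_K)$ decreasingly as $w_{(1)}\ge\cdots\ge w_{(K)}$. The key counting step is that for any scale $\epsilon\ge 1/T$, at most $\bigl(4\beta(\cF,\delta)/\epsilon^2+1\bigr)\mathrm{dim_\mathcal{E}}(\cF',\epsilon)$ indices have width exceeding $\epsilon$: if $w_{(t)}>\epsilon$ with witnesses $f,f'\in\cF_h^{k}\subseteq\cF'$, then on any block of past points on which $z_k$ is $\epsilon$-dependent we must have $\lVert(f-f')|_{\text{block}}\rVert_2>\epsilon$, so the budget $\lVert f-f'\rVert_{\cZ_h^k}^2\le 4\beta(\cF,\delta)$ caps the number of disjoint such blocks at $4\beta(\cF,\delta)/\epsilon^2$, while the eluder dimension caps the length of any $\epsilon$-independent subsequence at $\mathrm{dim_\mathcal{E}}(\cF',\epsilon)$; the usual peeling argument then gives the count. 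Inverting, $w_{(t)}\le\min\bigl\{H,\ 2\sqrt{\beta(\cF,\delta)\,d_E/(t-d_E)}\bigr\}$ for $t>d_E:=\mathrm{dim_\mathcal{E}}(\cF',1/T)$. Summing: widths below $1/T$ contribute at most $K/T\le 1$; the $O(d_E)$ leading widths each contribute at most $H$, which is where the $\mathrm{poly}(H)\,d_E$ term comes from; and the remaining tail sums as $2\sqrt{\beta(\cF,\delta)d_E}\sum_{s\ge1}s^{-1/2}=O\bigl(\sqrt{\beta(\cF,\delta)\,d_E K}\bigr)$. Thus, for each fixed $h$, $\sum_{k=1}^K b_h^k(s_h^k,a_h^k)\le 1+\mathrm{poly}(H)\,d_E+O\bigl(\sqrt{d_E K\beta(\cF,\delta)}\bigr)$; adding over $h=1,\dots,H$ and relabeling the absolute constant as $c$ yields $\sum_{k,h}b_h^k(s_h^k,a_h^k)\le H+4H^3\,\mathrm{dim_\mathcal{E}}(\cF',1/T)+H\sqrt{c\,\mathrm{dim_\mathcal{E}}(\cF',1/T)\,K\beta(\cF,\delta)}$, as claimed.

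The probabilistic $1-\delta$ is inherited rather than re-derived: the potential bound above is deterministic once the confidence sets are fixed, and the only stochastic input is that $f_h^k\in\cF_h^k$ with the stated radius $\beta(\cF,\delta)$, which holds with probability at least $1-\delta$ (on the good event $\mathcal{G}(K,H,\delta)$) by Lemma~\ref{Lemma:GFA_r_plus_PV_in_f_khm}. The step I expect to be the main obstacle is the time-inhomogeneous version of the counting lemma --- checking that, with per-$h$ datasets and per-$h$ confidence balls, the peeling argument still produces the claimed count with constants that degrade only by factors of $H$ --- together with the bookkeeping needed to route every width-attaining witness through $\cF'$ so that $\mathrm{dim_\mathcal{E}}(\cF',1/T)$, and not $\mathrm{dim_\mathcal{E}}(\cF,1/T)$, controls the bound; the remaining harmonic-series estimates are routine.
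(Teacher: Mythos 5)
Your proposal is correct and follows essentially the same route as the paper: fix $h$, note that the confidence sets are contained in $\cF'$ (so that $\mathrm{dim}_{\mathcal{E}}(\cF',1/T)$ rather than that of $\cF$ controls the count), apply the eluder-dimension counting/peeling argument of Lemma~10 of \citet{wang2020reinforcement} to the per-step sequence $\{(s_h^k,a_h^k)\}_k$ with diameter budget $O(\beta(\cF,\delta))$, and sum the resulting per-$h$ bounds over $h\in[H]$. The only cosmetic difference is that the paper intersects $\cF'$ with a ball centered at the unperturbed regressor $\hat f_h^k$ and cites Lemma~10 as a black box, whereas you center at the Bellman backup and unpack the peeling argument explicitly; the substance is identical.
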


\begin{proof}
Define $$\cF^{\prime k}_h=\{f\in\cF^{\prime}\,|\,\|f-\Hat{f}_h^{k}\|_{\cZ_h^k}^2\leq \beta(\cF,\delta)\}=\cF^{\prime}\bigcap\{f\in\cF\,|\,\|f-\Hat{f}_h^{k}\|_{\cZ_h^k}^2\leq \beta(\cF,\delta)\}.$$ As $\cF_h^k\subseteq\cF^{\prime}$ and $\cF_h^k\subseteq\bigcap\{f\in\cF\,|\,\|f-\Hat{f}_h^{k}\|_{\cZ_h^k}^2\leq \beta(\cF,\delta)\}$, we have $\cF_h^k\subseteq\cF_h^{\prime k}$ and $w(\cF_h^k,s,a)\leq w(\cF_h^{\prime k},s,a)$ for all $s,a$. By Assumption \ref{assumption:bounded_function_class}, $\cF'$ has bounded eluder dimension.

Similar to Lemma 10 in \citep{wang2020reinforcement}, we have for any $h$,
$$\sum_{k=1}^K b_h^k(s_h^k,a_h^k)\leq\sum_{k=1}^K w(\cF'^k_h,s,a) \leq1+4H^2\mathrm{dim_\mathcal{E}}(\cF',1/T)+\sqrt{c\mathrm{dim_\mathcal{E}}(\cF',1/T)K\beta(\cF,\delta)}.$$
Summing over all timestep $h$ and we have the bound in the lemma.

\end{proof}

\begin{theorem} \label{thm:formal GFA}

Under all the assumptions, with probability at least $1-\delta$, Algorithm \ref{Algorithm-GFA} achieves a regret bound of 
$$\mathrm{Regret}(K)\leq4H^3\mathrm{dim_\mathcal{E}}(\cF,1/T)+\sqrt{\mathrm{dim_\mathcal{E}}(\cF,1/T)\beta(\cF,\delta)HT},$$
where
$$\beta(\cF,\delta)= c'\left[(H+1 + \sigma)\sqrt{\log\left(2/\delta\right)+\log\mathcal{N}\left(\cF,1/T\right)}+\sqrt{B+\sigma\sqrt{BD}}\right]^2,$$
for some constant $c'$.

\end{theorem}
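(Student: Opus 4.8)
\textbf{Proof plan for Theorem~\ref{thm:formal GFA}.}
The plan is to assemble the three lemmas developed above---concentration (Lemma~\ref{Lemma:GFA_r_plus_PV_in_f_khm}), optimism (Lemma~\ref{lemma:optimism-GFA}), and the regret decomposition together with the eluder-dimension bound (Lemma~\ref{eluder lemma})---into a single chain of inequalities, tracking the failure probabilities via a union bound. First I would condition on the good event $\cG(K,H,\delta/4)$ from Definition~\ref{def:GFA_good-event}, which by Lemma~\ref{lemma:GFA_good_event_prob} holds with probability at least $1-\delta/4$ and which on that event furnishes the bound $\sqrt{\gamma_k} = \Tilde{O}(\sigma)$ that lets us replace $\sqrt{\gamma_k}$ by $\sigma$ everywhere inside $\beta(\cF,\delta)$. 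On this event, Lemma~\ref{Lemma:GFA_r_plus_PV_in_f_khm} (applied with confidence parameter $\delta/4$) gives $r_h + P_h V_{h+1}^k \in \cF_h^{k,m}$ for all $(k,h,m)$, and Lemma~\ref{lemma:optimism-GFA} (again with $\delta/4$, and with the prescribed choice $M = \ln(T|\cS||\cA|/\delta)/\ln(1/(1-v))$) gives $Q_h^* \le Q_h^k$ uniformly.

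Next I would invoke the regret decomposition lemma, which on the same good events yields
$$\mathrm{Regret}(K) \le \sum_{k=1}^K \sum_{h=1}^H b_h^k(s_h^k,a_h^k) + \sum_{k=1}^K\sum_{h=1}^H \zeta_h^k,$$
where $b_h^k(s,a) = w(\cF_h^k, s, a)$ and $\zeta_h^k$ is a martingale difference sequence with respect to the filtration $\cG_{h,2}^k$, with $|\zeta_h^k| \le 2H$. The second sum is controlled by the Azuma--Hoeffding inequality: with probability at least $1-\delta/4$ it is at most $\Tilde{O}(\sqrt{H^3 T})$ (using $T = KH$), which is dominated by the main term and hence absorbed into the $\Tilde{O}(\cdot)$. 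For the first sum I would apply Lemma~\ref{eluder lemma}, noting that $\cF_h^k \subseteq \cF'$ where $\cF'$ is the bounded subset of Assumption~\ref{assumption:bounded_function_class} (this is exactly the containment argument in that lemma's proof: membership of $r_h + P_h V_{h+1}^k$ in the confidence ball forces $\widetilde f_h^{k,m}$ and hence every $f \in \cF_h^k$ to lie in a ball of the right radius around $r+PV$), so that
$$\sum_{k=1}^K\sum_{h=1}^H b_h^k(s_h^k,a_h^k) \le H + 4H^3 \mathrm{dim_\mathcal{E}}(\cF',1/T) + H\sqrt{c\,\mathrm{dim_\mathcal{E}}(\cF',1/T)\,K\,\beta(\cF,\delta)}.$$
Since $\mathrm{dim_\mathcal{E}}(\cF',\varepsilon) \le \mathrm{dim_\mathcal{E}}(\cF,\varepsilon)$ and $KH = T$, the last term is $\sqrt{\mathrm{dim_\mathcal{E}}(\cF,1/T)\,\beta(\cF,\delta)\,HT}$ up to constants, giving the stated bound after collecting terms and replacing the four $\delta/4$ failure events by one of total mass $\delta$.

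The steps here are each individually routine given the lemmas, so the real content has already been front-loaded; the part that requires the most care in writing up is the bookkeeping of which event each lemma is conditioned on and making sure the conditioning is consistent---in particular that Lemma~\ref{Lemma:GFA-Concentration}, Lemma~\ref{Lemma:GFA_r_plus_PV_in_f_khm}, and Lemma~\ref{lemma:optimism-GFA} are all stated conditional on $\cG(K,H,\delta)$, so the final probability is (prob.\ of good event) times (the conditional $1-\delta$ guarantees), which by a union bound over the conditional failures is still $1-O(\delta)$. A secondary subtlety is confirming that the $\sqrt{\gamma_k} \to \sigma$ substitution inside $\beta(\cF,\delta)$ is legitimate, i.e.\ that $\sigma = \max_{h,k}\sigma_{h,k}$ as defined in Appendix~\ref{sec:noise} is finite and that the Assumption~\ref{assumption:anticoncentration} choice of $\sigma$ makes the optimism step go through with the same $\beta'$ used to define the confidence region $\cF_h^k$ there; both are inherited from the earlier development, so no new argument is needed, only consistent notation.
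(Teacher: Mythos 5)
Your proposal follows essentially the same route as the paper's proof: condition on the good event, invoke the regret decomposition (which packages optimism and the confidence-region lemma), control the martingale term by Azuma--Hoeffding, bound the width sum via Lemma~\ref{eluder lemma} using the containment $\cF_h^k \subseteq \cF'$ from Assumption~\ref{assumption:bounded_function_class}, and finally substitute $\sqrt{\gamma_k} = \Tilde{O}(\sigma)$ into $\beta(\cF,\delta)$. Your version is in fact somewhat more careful than the paper's about the union-bound bookkeeping across the several conditional $1-\delta$ guarantees, but the argument is the same.
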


\begin{proof}
By Assumption \ref{assumption:bounded_function_class}, we can consider $\cF'\subseteq\cF$ as the whole function class in the analysis because it includes all the $\cF_{h}^k,\forall h,k$. By Azuma-Hoeffding inequality and Lemma \ref{eluder lemma}, With probability at least $1-\delta$, we have
\begin{align*}
    \mathrm{Regret}(K)&\leq\sum_{k=1}^K\sum_{h=1}^{H}b_h^k(s_h^k,a_h^k)+\sum_{k=1}^K\sum_{h=1}^H\zeta_h^k\\
    &\leq c'\left(H+4H^3\mathrm{dim_\mathcal{E}}(\cF,1/T)+H\sqrt{c\mathrm{dim_\mathcal{E}}(\cF,1/T)K\beta(\cF,\delta)}+H\sqrt{KH\log\left(1/\delta\right)}\right),
\end{align*}
for some constant $c'$.
We plug in the definition of $\beta(\cF,\delta)$ and $\sqrt{\gamma_k}=\Tilde{O}(\sigma)$, then we get the final bound.
\end{proof}

\begin{remark}
For linear MDP, as shown in Section \ref{subsection:assumption-linear-function-class}, we have $$\sigma=2\sqrt{\beta'(\cF,\delta)}=c'\left[(H+1)\sqrt{\log\left(2/\delta\right)+\log\mathcal{N}\left(\cF,1/T\right)}+\sqrt{B}\right]^2,$$$B=2Hd$ and $D=d$. In addition, we have $dim_\mathcal{E}(\cF,1/T)=\Tilde{O}(d)$ \citep{russo2013eluder} and $\log\mathcal{N}\left(\cF,1/T\right)=\Tilde{O}(d)$. As a result, our bound implies a $\Tilde{O}(\sqrt{H^3d^3T})$ regret bound for linear MDP.
\end{remark}

\section{GFA With Model Misspecification}

\begin{assumption} \label{assumption:GFA misspecification} (Assumption 3 in \citep{wang2020reinforcement})
For function class $\cF$, there exists a real number $\zeta$, such that for any $V:\cS\rightarrow[0,H]$, there exists $g_V\in\cF$ which satisfies
$$\max_{(s,a)\in\cS\times\cA}\left|g_V(s,a)-r(s,a)-\sum_{s'\in\cS}P(s'|s,a)V(s')\right|\leq\zeta.$$
In addition, we assume $g_V$ satisfies Assumption \ref{assumption:regularization}, i.e. $R(g_V)\leq B.$
\end{assumption}

\begin{lemma}\label{Lemma:GFA-Concentration_misspecified} Consider a fixed $k \in [K]$ and a fixed $h \in [H]$. Let $\cZ_h^k=\{(s_{h}^\tau,a_{h}^\tau)\}_{\tau\in[k-1]}$ and $\Tilde{\mathcal{D}}_{h,V}^{k}=\{(s_{h}^\tau,a_{h}^\tau, r_{h}^\tau+\xi_{h}^{\tau}+V(s_{h+1}^\tau))\}_{\tau\in[k-1]}$. Define
$\Tilde{f}_{h,V}^{k}= \argmin_{f\in\mathcal{F}}\|f\|_{\Tilde{\mathcal{D}}_{h,V}^{k}}^2+\Tilde{R}(f).$ Conditioned on the good event $\cG(K,H,\delta)$, with probability at least $1-\delta$, for a fixed $V:\mathcal{S}\rightarrow[0,H]$ and any $V':\mathcal{S}\rightarrow[0,H]$ with $\|V'-V\|_\infty\leq1/T$, we have

\begin{align*}
&\left\|\widetilde{f}_{h,V'}(\cdot,\cdot)-r_h(\cdot,\cdot)-P_hV'( \cdot,\cdot)\right\|^2_{\mathcal{Z}_h^k}+ R(\widetilde{f}_{h,V'}(\cdot,\cdot)-r_h(\cdot,\cdot)-P_hV'( \cdot,\cdot))\\\leq& c'\left[(H+1 + \sqrt{\gamma_k})\sqrt{\log\left(2/\delta\right)+\log\mathcal{N}\left(\cF,1/T\right)}+\sqrt{B+\sqrt{\gamma_kBD}+\zeta K(H+\sqrt{\gamma_k})}\right]^2,
\end{align*}
for some constant $c'$.
\end{lemma}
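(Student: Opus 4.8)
The plan is to follow the proof of Lemma~\ref{Lemma:GFA-Concentration} essentially verbatim, changing only the comparison function. Under Assumption~\ref{assumption:GFA misspecification} the Bellman backup $r_h+P_hV$ need not lie in $\cF$, so I would replace it throughout by the function $g_V\in\cF$ that assumption provides, which satisfies $\|g_V-r_h-P_hV\|_\infty\le\zeta$ and $R(g_V)\le B$. Since $\widetilde f_{h,V'}$ minimizes $\|f\|_{\widetilde{\cD}_{h,V'}^k}^2+\widetilde R(f)$ over $\cF$, the natural competitor becomes $g_{V'}\in\cF$ instead of $f_{V'}\notin\cF$. For each $f\in\cF$ I would define
$$\chi_h^\tau(f):=2\bigl(f(s_h^\tau,a_h^\tau)-g_V(s_h^\tau,a_h^\tau)\bigr)\bigl(g_V(s_h^\tau,a_h^\tau)-r_h^\tau-\xi_h^\tau-V(s_{h+1}^\tau)\bigr),$$
and compute its conditional mean $\mathbb{E}[\chi_h^\tau(f)\mid\cG_{h,1}^\tau]=2(f-g_V)(s_h^\tau,a_h^\tau)\bigl(g_V-r_h-P_hV\bigr)(s_h^\tau,a_h^\tau)$, which is no longer zero but has magnitude at most $2\zeta\,|f-g_V|(s_h^\tau,a_h^\tau)\le 2H\zeta$, using $\cF\subseteq\{f:\cS\times\cA\to[0,H]\}$. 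I would then split $\sum_{\tau\in[k-1]}\chi_h^\tau(f)$ into its martingale-difference part $\sum_\tau\bigl(\chi_h^\tau(f)-\mathbb{E}[\chi_h^\tau(f)\mid\cG_{h,1}^\tau]\bigr)$, which conditioned on the good event $\cG(K,H,\delta)$ has essentially the same bounded increments (up to an additive $O(\zeta)$) as in the well-specified proof, and a bias part of absolute value at most $2H\zeta K$.

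The martingale-difference part then obeys the same Azuma--Hoeffding bound after the union bound over $\cC(\cF,1/T)$ and the $1/T$-net argument, giving for all $f\in\cF$
$$\Bigl|\sum_{\tau\in[k-1]}\chi_h^\tau(f)\Bigr|\le 4(H+1+\sqrt{\gamma_\tau})(\|f-g_V\|_{\cZ_h^k}+1)\sqrt{\log(2/\delta)+\log\mathcal{N}(\cF,1/T)}+2(H+1+\sqrt{\gamma_\tau})+2H\zeta K.$$
Next I would redo the expansion $\|f\|_{\widetilde{\cD}_{h,V'}^k}^2-\|g_{V'}\|_{\widetilde{\cD}_{h,V'}^k}^2=\|f-g_{V'}\|_{\cZ_h^k}^2+2\sum_{\tau}(f-g_{V'})(g_{V'}-r_h^\tau-\xi_h^\tau-V')(s_h^\tau,a_h^\tau)$ and move the cross sum from the $(V',g_{V'})$ to the $(V,g_V)$ parametrization to identify it with $\sum_\tau\chi_h^\tau(f)$; in the well-specified proof this shift costs $O(1/T)$ per term since $\|f_{V'}-f_V\|_\infty\le\|V'-V\|_\infty\le1/T$, whereas here $\|g_{V'}-g_V\|_\infty\le\|g_{V'}-f_{V'}\|_\infty+\|f_{V'}-f_V\|_\infty+\|f_V-g_V\|_\infty\le 2\zeta+1/T$ and each term is scaled by a residual of size $O(H+\sqrt{\gamma_k})$, so the shift now costs $O\bigl(\zeta K(H+\sqrt{\gamma_k})\bigr)$ in total. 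The perturbed-regularizer step is unchanged: $R(g_{V'})\le B$ together with Assumption~\ref{assumption:regularization} yields $\widetilde R(f)-\widetilde R(g_{V'})\ge cR(f-g_{V'})-2B-2\sqrt{\gamma_k}\sqrt{BD}$ exactly as before.

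Summing the two estimates and writing $x:=\|f-g_{V'}\|_{\cZ_h^k}$, $y:=cR(f-g_{V'})$, I obtain an inequality of the form $x^2+y\le ax+b$ with $a=O\bigl((H+1+\sqrt{\gamma_k})\sqrt{\log(2/\delta)+\log\mathcal{N}(\cF,1/T)}\bigr)$ and $b=O\bigl(B+\sqrt{\gamma_kBD}+\zeta K(H+\sqrt{\gamma_k})\bigr)$, after absorbing the $O(H+\sqrt{\gamma_k})$ and $O(1)$ additive constants. Evaluating at the minimizer $f=\widetilde f_{h,V'}$ and using the elementary implication ``$x^2+y\le ax+b$ implies $x^2+y\le(a+\sqrt b)^2$'' gives the claimed bound with $g_{V'}$ in place of $r_h+P_hV'$, and $c$ is removed by rescaling $c'$. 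Finally I would pass back to the true backup via $\|g_{V'}-r_h-P_hV'\|_\infty\le\zeta$: this changes $\|\cdot\|_{\cZ_h^k}^2$ by $O(\zeta^2K)$ and, using the approximate triangle inequality of Assumption~\ref{assumption:regularization} together with $R(r_h+P_hV')\le B$, changes $R(\cdot)$ by $O(B)$, both dominated by terms already present. The main obstacle is purely bookkeeping: checking that the three places where $\zeta$-errors now enter --- the non-vanishing martingale bias, the $(V',g_{V'})\to(V,g_V)$ shift, and the final $g_{V'}\to r_h+P_hV'$ conversion --- aggregate to at most the single $\zeta K(H+\sqrt{\gamma_k})$ term claimed inside the square root, while the $\|f-g_{V'}\|_{\cZ_h^k}$-proportional errors stay inside the coefficient $a$ so that the $x^2+y\le ax+b$ trick applies unchanged.
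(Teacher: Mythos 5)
Your proposal is sound and reaches the stated bound, but it routes the misspecification error through the argument differently from the paper, and at greater bookkeeping cost. The paper's proof does \emph{not} re-center the martingale at $g_V$: it keeps $\chi_h^\tau(f)=2\bigl(f(s_h^\tau,a_h^\tau)-f_V(s_h^\tau,a_h^\tau)\bigr)\bigl(f_V(s_h^\tau,a_h^\tau)-r_h^\tau-\xi_h^\tau-V(s_{h+1}^\tau)\bigr)$ with the true backup $f_V=r_h+P_hV$, so the sequence remains exactly conditionally centered and the entire concentration half of the argument (Azuma--Hoeffding, the $1/T$-net shift, the expansion of $\|f\|^2_{\tilde{\mathcal{D}}_{h,V'}^k}-\|f_{V'}\|^2_{\tilde{\mathcal{D}}_{h,V'}^k}$, and the regularizer inequality) is copied verbatim from Lemma~\ref{Lemma:GFA-Concentration}. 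The surrogate $g_{V'}$ enters in exactly one place: because the minimizer ranges over $\mathcal{F}$, which contains $g_{V'}$ but possibly not $f_{V'}$, the paper inserts the single comparison $\|f_{V'}\|^2_{\tilde{\mathcal{D}}_{h,V'}^k}+\tilde{R}(f_{V'})-\|g_{V'}\|^2_{\tilde{\mathcal{D}}_{h,V'}^k}-\tilde{R}(g_{V'})\geq-\zeta K(4H+2\sqrt{\gamma_k})-B$ before invoking optimality of $\tilde{f}_{h,V'}$ against $g_{V'}$; this produces the $\zeta K(H+\sqrt{\gamma_k})$ term in one stroke, and since the quantity being controlled is already $\|\tilde{f}_{h,V'}-f_{V'}\|^2_{\mathcal{Z}_h^k}+cR(\tilde{f}_{h,V'}-f_{V'})$, no final conversion is needed. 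Your version instead incurs the $\zeta$-error in three places (the martingale bias, the $(V',g_{V'})\to(V,g_V)$ shift across the net, and the terminal $g_{V'}\to f_{V'}$ conversion); each is individually fine and aggregates to $O(\zeta K(H+\sqrt{\gamma_k}))$. One small imprecision: converting the squared empirical norm at the end produces a cross term $2\zeta\sqrt{K}\,\|\tilde{f}_{h,V'}-g_{V'}\|_{\mathcal{Z}_h^k}$, not merely $O(\zeta^2K)$; it is still absorbed into the claimed bound via AM--GM under the natural assumption $\zeta\lesssim H$, but this step deserves an explicit line. Both routes yield the same bound; the paper's single-insertion route is the cleaner of the two.
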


\begin{proof}
Recall that for notational simplicity, we denote $[\mathbb{P}_h V_{h+1}](s,a) = \mathbb{E}_{s' \sim \mathbb{P}_h(\cdot \,|\, s, a)}V_{h+1}(s')$.
Now consider a fixed $V:\cS\rightarrow [0,H]$, and define
\begin{equation}
    f_V(\cdot, \cdot) = r_h(\cdot,\cdot) +  P_hV( \cdot, \cdot).
\end{equation}
By Assumption \ref{assumption:GFA misspecification}, there exists $g_V\in\cF$ such that
$$\max_{(s,a)\in\cS\times\cA}\left|g_V(s,a)-f_V(s,a)\right|\leq\zeta.$$

For any $f \in \cF$, consider $$\chi_h^\tau=2(f(s_h^\tau,a_h^\tau)-f_V(s_h^\tau,a_h^\tau))(f_V(s_h^\tau,a_h^\tau)-r_h^\tau(s_h^\tau,a_h^\tau) - \xi_{h}^{\tau} -V(s_{h+1}^\tau)).$$
First we show that $\chi_h^\tau(f)$ is a martingale difference sequence with respect to the filtration $\cG_{h,1}^\tau$.
\begin{align*}
    \mathbb{E}[\chi_h^\tau(f)|\cG_{h,1}^\tau]&=\mathbb{E}[2(f(s_h^\tau,a_h^\tau)-f_V(s_h^\tau,a_h^\tau))(f_V(s_h^\tau,a_h^\tau)-r_h^\tau(s_h^\tau,a_h^\tau) - \xi_{h}^{\tau}-V(s_{h+1}^\tau))|\cG_{h,1}^\tau]\\
    &=2(f(s_h^\tau,a_h^\tau)-f_V(s_h^\tau,a_h^\tau))\mathbb{E}[(f_V(s_h^\tau,a_h^\tau)-r_h^\tau(s_h^\tau,a_h^\tau) - \xi_{h}^{\tau}-V(s_{h+1}^\tau))|\cG_{h,1}^\tau]\\
    &=2(f(s_h^\tau,a_h^\tau)-f_V(s_h^\tau,a_h^\tau))(f_V(s_h^\tau,a_h^\tau)-r_h^\tau(s_h^\tau,a_h^\tau)-P_hV(s_h^\tau,a_h^\tau))\\
    &=0.
\end{align*}

In addition, conditioning on good events $\cG(K,H,\delta)$, we have
$$|\chi_h^\tau(f)|\leq 2(H+1+ \sqrt{\gamma_\tau})|f(s_h^\tau,a_h^\tau)-f_V(s_h^\tau,a_h^\tau)|.$$

As $\chi_h^\tau(f)$ is a martingale difference sequence conditioned on the filtration $\cG_{h,1}^\tau$ , by Azuma-Hoeffding inequality, we have
$$\mathbb{P}\left[\left|\sum_{\tau\in[k-1]}\chi_h^\tau(f)\right|\geq\epsilon\right]\leq2\mathrm{exp}\left(-\frac{\epsilon^2}{8(H+1+\sqrt{\gamma_\tau})^2\|f-f_V\|^2_{\cZ_h^k}}\right).$$

Now we set
\begin{align*}
    \epsilon&=\sqrt{8(H+1+\sqrt{\gamma_\tau})^2\log\left(\frac{2 N(\cF,1/T)}{\delta}\right)\|f-f_V\|_{\cZ_h^k}^2}\\
    &\leq 4(H+1+\sqrt{\gamma_\tau})\|f-f_V\|_{\cZ_h^k}\sqrt{\log(2/\delta)+\log\mathcal{N}(\cF,1/T)}.
\end{align*}

With union bound, for all $g\in\cC(\cF,1/T)$, with probability at least $1-\delta$ we have
$$\left|\sum_{(\tau)\in[k-1]}\xi_h^\tau(g)\right|\leq 4(H+ 1 + \sqrt{\gamma_\tau})\|f-f_V\|_{\cZ_h^k}\sqrt{\log(2/\delta)+\log\mathcal{N}(\cF,1/T)}.$$

Thus, for all $f\in\cF$, there exists $g\in\cC(\cF,1/T)$ such that $\|f-g\|_\infty\leq1/T$ and ,
\begin{align*}
    \left|\sum_{(\tau)\in[k-1]}\chi_h^\tau(f)\right|&\leq\left|\sum_{(\tau)\in[k-1]}\chi_h^\tau(g)\right|+2(H+1+\sqrt{\gamma_\tau})\\
    &\leq 4(H+1 +\sqrt{\gamma_\tau})\|g-f_V\|_{\cZ_h^k}\sqrt{\log\left(2/\delta\right)+\log\mathcal{N}\left(\cF,1/T\right)}+2(H+1+\sqrt{\gamma_\tau})\\
    &\leq 4(H+1 + \sqrt{\gamma_\tau})(\|f-f_V\|_{\cZ_h^k}+1)\sqrt{\log\left(2/\delta\right)+\log\mathcal{N}\left(\cF,1/T\right)}+2(H+1+\sqrt{\gamma_\tau})\\
\end{align*}

For $V':\cS\rightarrow[0,H]$ such that $\|V-V'\|_\infty\leq1/T$, we have $\|f_{V'}-f_V\|_\infty\leq\|V'-V\|_\infty\leq1/T$.

For any $f \in \cF$, we have
\begin{align*}
    & \|f\|_{\widetilde{\cD}_{h,V'}^k}^2-\|f_{V'}\|_{\widetilde{\cD}_{h,V'}^k}^2\\ 
    =& \|f-f_{V'}\|_{\cZ_h^k}^2+2\sum_{(s_{h}^\tau,a_{h}^\tau)\in\cZ_h^k}(f(s_{h}^\tau,a_{h}^\tau)-f_{V'}(s_{h}^\tau,a_{h}^\tau))(f_{V'}(s_{h}^\tau,a_{h}^\tau)-r_h^\tau(s_h^\tau,a_h^\tau) - \xi_{h}^{\tau}-V'(s_{h+1}^\tau))\\
    \geq & \|f-f_{V'}\|_{\cZ_h^k}^2+2\sum_{(s_{h}^\tau,a_{h}^\tau)\in\cZ_h^k}(f(s_{h}^\tau,a_{h}^\tau)-f_{V}(s_{h}^\tau,a_{h}^\tau))(f_{V}(s_{h}^\tau,a_{h}^\tau)-r_h^\tau(s_h^\tau,a_h^\tau) - \xi_{h}^{\tau}-V(s_{h+1}^\tau))\\&-4(H+1+\sqrt{\gamma_k})\|V'-V\|_\infty|\cZ_h^k|\\
    \geq& \|f-f_{V'}\|_{\cZ_h^k}^2+\sum_{(\tau,h)\in[k-1]\times[H]}\chi_h^\tau(f)-4(H+ 1 + \sqrt{\gamma_k})\\
    \geq& \|f-f_{V'}\|_{\cZ_h^k}^2-4(H+ 1 + \sqrt{\gamma_k})(\|f-f_V\|_{\cZ_h^k}+1)\sqrt{\log\left(2/\delta\right)+\log\mathcal{N}\left(\cF,1/T\right)}-6(H+1 + \sqrt{\gamma_k})\\
    \geq& \|f-f_{V'}\|_{\cZ_h^k}^2-4(H+1 + \sqrt{\gamma_k})(\|f-f_{V'}\|_{\cZ_h^k}+2)\sqrt{\log\left(2/\delta\right)+\log\mathcal{N}\left(\cF,1/T\right)}-6(H+1 + \sqrt{\gamma_k}).\\
\end{align*}

In addition, by Assumption \ref{assumption:regularization}, we have
\begin{align*}
    &\Tilde{R}(f)-\Tilde{R}(f_{V'})\\
    =&\sum_{i}[p_i(f)-\xi_i']^2-\sum_i[p_i(f_{V'})-\xi_i']^2\\
    =&R(f)-R(f_{V'})-2\sum_i \xi_i'(p_i(f)-p_i(f_{V'}))\\
    \geq& c R(f-f_{V'})-2R(f_{V'})-2\sum_i \sqrt{\gamma_k}p_i(f_{V'})\\
    \geq& c R(f-f_{V'})-2B-2\sqrt{\gamma_k}\sqrt{BD}.
\end{align*}

Summing the above two inequalities we have

$$\|f\|_{\widetilde{\cD}_{h,V'}^k}^2+\Tilde{R}(f)-\|f_{V'}\|_{\widetilde{\cD}_{h,V'}^k}^2-\Tilde{R}(f_{V'})\geq\|f-f_{V'}\|_{\cZ_h^k}^2+c R(f-f_{V'})-C,$$
where $C=4(H+1 + \sqrt{\gamma_k})(\|f-f_{V'}\|_{\cZ_h^k}+2)\sqrt{\log\left(2/\delta\right)+\log\mathcal{N}\left(\cF,1/T\right)}+6(H+1 + \sqrt{\gamma_k})+2B+2\sqrt{\gamma_k}\sqrt{DB}$.

Now we try to replace the $f_{V'}$ in the RHS with $g_V'$.
\begin{align*}
    &\|f_{V'}\|_{\widetilde{\cD}_{h,V'}^k}^2-\|g_{V'}\|_{\widetilde{\cD}_{h,V'}^k}^2\\
    =&\sum_{\tau\in[k-1]}(f_{V'}(s_h^\tau,a_h^\tau)-(r_h^\tau+\xi_h^\tau+V(s_{h+1}^\tau)))^2-\sum_{\tau\in[k-1]}(g_{V'}(s_h^\tau,a_h^\tau)-(r_h^\tau+\xi_h^\tau+V(s_{h+1}^\tau)))^2\\
    =&\sum_{\tau\in[k-1]}(f_{V'}(s_h^\tau,a_h^\tau)-g_{V'}(s_h^\tau,a_h^\tau))(f_{V'}(s_h^\tau,a_h^\tau)+g_{V'}(s_h^\tau,a_h^\tau)-2(r_h^\tau+\xi_h^\tau+V(s_{h+1}^\tau)))\\
    \geq&-\zeta K(4H+2\sqrt{\gamma_k}).
\end{align*}
By the boundedness of the regularizer (Assumption \ref{assumption:regularization}), we have
\begin{align*}
    \|f_{V'}\|_{\widetilde{\cD}_{h,V'}^k}^2+\Tilde{R}(f_{V'})-\|g_{V'}\|_{\widetilde{\cD}_{h,V'}^k}^2-\Tilde{R}(g_{V'})\geq-\zeta K(4H+2\sqrt{\gamma_k})-B.
\end{align*}
Thus we have
\begin{align*}
    \|f\|_{\widetilde{\cD}_{h,V'}^k}^2+\Tilde{R}(f)-\|g_{V'}\|_{\widetilde{\cD}_{h,V'}^k}^2-\Tilde{R}(g_{V'})&\geq\|f\|_{\widetilde{\cD}_{h,V'}^k}^2+\Tilde{R}(f)-\|f_{V'}\|_{\widetilde{\cD}_{h,V'}^k}^2-\Tilde{R}(f_{V'})-\zeta K(4H+2\sqrt{\gamma_k})-B\\
    &\geq\|f-f_{V'}\|_{\cZ_h^k}^2+c R(f-f_{V'})-C-\zeta K(4H+2\sqrt{\gamma_k})-B.
\end{align*}

As $\widetilde{f}_{h,V'}$ is the minimizer of $\|f\|_{\widetilde{\cD}_{h,V'}^k}^2+\Tilde{R}(f)$ for $f\in\cF$ and note that $g_{V'}\in\cF$, we have
\begin{align*}
    &\|\widetilde{f}_{h,V'}-f_{V'}\|^2_{\cZ_h^k}+ c R(\widetilde{f}_{h,V'}-f_{V'})\\\leq& c'\left[(H+1 + \sqrt{\gamma_k})\sqrt{\log\left(2/\delta\right)+\log\mathcal{N}\left(\cF,1/T\right)}+\sqrt{B+\sqrt{\gamma_kBD}+\zeta K(H+\sqrt{\gamma_k})}\right]^2.
\end{align*}

To prove the above argument, we use the inequality that if we have $x^2+y\leq ax+b$ for positive $a,b,y$, then $x\leq a+\sqrt{b}$ and $x^2+y\leq (a+\sqrt{b})^2.$ In addition, we can remove $c$ by replacing $c'$ with $c'/\min\{1,c\}$ and then we get the final bound.

\end{proof}

\begin{lemma} (Misspecified Confidence Region) \label{Lemma:misspecified confidence region}
Let $\cF_h^{k,m}=\{f\in\cF|\|f-\widetilde{f}_h^{k,m}\|_{\cZ_h^k}^2+R(f-\widetilde{f}_h^{k,m})\leq \beta(\cF,\delta)\}$, where 

\begin{equation}
    \beta(\cF,\delta)= c'\left[(H+1 + \sqrt{\gamma_k})\sqrt{\log\left(2/\delta\right)+\log\mathcal{N}\left(\cF,1/T\right)}+\sqrt{B+\sqrt{\gamma_kBD}+\zeta K(H+\sqrt{\gamma_k})}\right]^2.
\end{equation}

Conditioned on the event $\mathcal{G}(K, H, \delta)$, with probability at least $1-\delta$, for all $(k,h,m)\in[K]\times[H]\times[M]$, we have
$$r_h(\cdot,\cdot)+P_hV_{h+1}^k(\cdot,\cdot)\in\cF_h^{k,m}.$$

\end{lemma}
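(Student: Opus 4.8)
The plan is to follow the proof of the well-specified version, Lemma~\ref{Lemma:GFA_r_plus_PV_in_f_khm}, substituting the single-backup concentration estimate of Lemma~\ref{Lemma:GFA-Concentration_misspecified} for that of Lemma~\ref{Lemma:GFA-Concentration}; the only effect of misspecification is that the extra term $\zeta K(H+\sqrt{\gamma_k})$ inside the square is carried all the way into the final radius $\beta(\cF,\delta)$. First I would reduce the random target $V_{h+1}^k$ to a deterministic finite set. Since $\cC(\cF,1/T)$ is a $(1/T)$-cover of $\cF$, the set $\cQ=\{\min\{f,H\}:f\in\cC(\cF,1/T)\}\cup\{0\}$ is a $(1/T)$-cover of $Q_{h+1}^{k,m}$ for every $m$, hence of $Q_{h+1}^k=\min\{\max_m Q_{h+1}^{k,m},H-h\}$, and therefore $\cV=\{\max_{a\in\cA}q(\cdot,a):q\in\cQ\}$ is a $(1/T)$-cover of $V_{h+1}^k$ with $\log|\cV|\leq\log\mathcal{N}(\cF,1/T)$. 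This is the step that decouples the randomness in the step-$(h{+}1)$ backup, which defines $V_{h+1}^k$, from the fresh perturbations $\xi_h^\tau$ used in the step-$h$ backup.

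Next I would apply Lemma~\ref{Lemma:GFA-Concentration_misspecified} to each fixed $V\in\cV$, with failure probability $\delta/(|\cV|TM)$, and intersect over $V\in\cV$, $m\in[M]$, and $(k,h)\in[K]\times[H]$; this is legitimate because each $V\in\cV$ is a fixed function, so the lemma's hypotheses apply verbatim, and the resulting $HK$- and $M$-type factors enter $\beta(\cF,\delta)$ only through logarithms absorbed by the $1/T$ in the denominators. On this intersection, together with the good event $\cG(K,H,\delta)$ on which we already condition, pick $V\in\cV$ with $\|V-V_{h+1}^k\|_\infty\leq 1/T$; the ``for any $V'$ with $\|V'-V\|_\infty\leq 1/T$'' clause of Lemma~\ref{Lemma:GFA-Concentration_misspecified}, invoked with $V'=V_{h+1}^k$, gives
\[
\bigl\|\widetilde{f}_h^{k,m}-r_h-P_hV_{h+1}^k\bigr\|_{\cZ_h^k}^2+R\bigl(\widetilde{f}_h^{k,m}-r_h-P_hV_{h+1}^k\bigr)\leq\beta(\cF,\delta),
\]
which is precisely $r_h+P_hV_{h+1}^k\in\cF_h^{k,m}$. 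Taking a final union bound over $(k,h,m)$ and recalling $\sqrt{\gamma_k}=\widetilde{O}(\sigma)$ completes the argument.

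The step I expect to require the most care — and the only genuinely new one relative to Lemma~\ref{Lemma:GFA_r_plus_PV_in_f_khm} — is ensuring that Lemma~\ref{Lemma:GFA-Concentration_misspecified} is actually applicable to $V_{h+1}^k$: one must check that Assumption~\ref{assumption:GFA misspecification} supplies, for each $V\in\cV$, an approximating $g_V\in\cF$ with $R(g_V)\leq B$ uniformly, so that the comparison inside the minimizer inequality (replacing $f_{V'}$ by $g_{V'}$) costs only $O(\zeta K(H+\sqrt{\gamma_k}))$; this $\zeta K$ blow-up, propagated through the ``$x^2+y\leq ax+b \Rightarrow x^2+y\leq(a+\sqrt b)^2$'' trick, is exactly what distinguishes $\beta(\cF,\delta)$ here from its well-specified value. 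Everything else — the cover cardinality bookkeeping and the allocation of the $\delta$-budget — is routine.
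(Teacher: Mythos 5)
Your proposal is correct and follows exactly the route the paper takes: the paper's own proof of this lemma is literally ``with Lemma~\ref{Lemma:GFA-Concentration_misspecified}, the proof is the same as Lemma~\ref{Lemma:GFA_r_plus_PV_in_f_khm},'' and your covering-set reduction, union-bound bookkeeping, and tracking of the extra $\zeta K(H+\sqrt{\gamma_k})$ term through $\beta(\cF,\delta)$ are precisely the details that argument entails. No gaps.
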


\begin{proof}
With Lemma \ref{Lemma:GFA-Concentration_misspecified}, the proof is same as Lemma \ref{Lemma:GFA_r_plus_PV_in_f_khm}.
\end{proof}

\begin{theorem} \label{thm:formal GFA misspecified}

Under all the assumptions, with probability at least $1-\delta$, Algorithm \ref{Algorithm-GFA} achieves a regret bound of 
$$\mathrm{Regret}(K)\leq4H^3\mathrm{dim_\mathcal{E}}(\cF,1/T)+\sqrt{\mathrm{dim_\mathcal{E}}(\cF,1/T)\beta(\cF,\delta)HT},$$
where
$$\beta(\cF,\delta)= c'\left[(H+1 + \sigma)\sqrt{\log\left(2/\delta\right)+\log\mathcal{N}\left(\cF,1/T\right)}+\sqrt{B+\sigma\sqrt{BD}+\zeta K(H+\sigma)}\right]^2,$$
for some constant $c'$.

\end{theorem}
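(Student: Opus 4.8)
The plan is to run the proof of Theorem~\ref{thm:formal GFA} essentially verbatim, replacing every use of the well‑specified confidence region (Lemma~\ref{Lemma:GFA_r_plus_PV_in_f_khm}) by its misspecified analogue (Lemma~\ref{Lemma:misspecified confidence region}); the enlarged radius $\beta(\cF,\delta)$ in the statement is precisely what absorbs the extra $\zeta K(H+\sigma)$ slack. Concretely, first condition on the good event $\mathcal{G}(K,H,\delta)$ of Lemma~\ref{lemma:GFA_good_event_prob}, so that all reward and regularizer perturbations are simultaneously bounded by $\sqrt{\gamma_k}=\tilde O(\sigma)$, and fix $\sigma$ as in Appendix~\ref{sec:noise}, now using the misspecified unperturbed radius $\beta'$, so that Assumption~\ref{assumption:anticoncentration} applies to every empirical set $\cF_h^k$.

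Second (concentration): invoke Lemma~\ref{Lemma:misspecified confidence region} to get, with probability $1-\delta$, $r_h(\cdot,\cdot)+P_hV_{h+1}^k(\cdot,\cdot)\in\cF_h^{k,m}$ for all $(k,h,m)$, and — setting the perturbation to zero in Lemma~\ref{Lemma:GFA-Concentration_misspecified} — also membership in the unperturbed set $\cF_h^k=\{f:\|f-\widehat f_h^k\|_{\cZ_h^k}^2+R(f-\widehat f_h^k)\le\beta'(\cF,\delta)\}$, where $\beta'$ carries the $\zeta K(H+\sqrt{\gamma_k})$ term. Here is where the misspecification must be shown to stay confined to $\beta$: Lemma~\ref{Lemma:GFA-Concentration_misspecified} already folds the cross term between $f_{V'}$ and the $\cF$‑member $g_{V'}$ (which is only $\zeta$‑close to the true backup, not equal) into the radius, so the width $b_h^k(s,a)=w(\cF_h^k,s,a)$ still pointwise dominates both $|r_h(s,a)+P_hV_{h+1}^k(s,a)-\widehat f_h^k(s,a)|$ and $|r_h(s,a)+P_hV_{h+1}^k(s,a)-\widetilde f_h^{k,m}(s,a)|$, exactly as in the realizable case.

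Third (optimism and regret): repeat Lemma~\ref{lemma:optimism-GFA} unchanged — with $M=\ln(T|\cS||\cA|/\delta)/\ln(1/(1-v))$ samples, Definition~\ref{def:anticoncentration width} and Assumption~\ref{assumption:anticoncentration} give $\widetilde f_h^k(s,a)=\max_{m}\widetilde f_h^{k,m}(s,a)\ge\widehat f_h^k(s,a)+w(\cF_h^k,s,a)\ge r_h(s,a)+P_hV_{h+1}^k(s,a)$ uniformly, and a backward induction on $h$ using $V_{h+1}^*\le V_{h+1}^k$ yields $Q_h^*\le Q_h^k$, $V_h^*\le V_h^k$; the only difference is that $w(\cF_h^k,\cdot)$ is now larger, which does not alter the logic since optimism is a within‑confidence‑set statement. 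Then the regret‑decomposition lemma gives $\mathrm{Regret}(K)\le\sum_{k,h}b_h^k(s_h^k,a_h^k)+\sum_{k,h}\zeta_h^k$; the first sum is bounded by $H+4H^3\mathrm{dim_\mathcal{E}}(\cF,1/T)+H\sqrt{c\,\mathrm{dim_\mathcal{E}}(\cF,1/T)K\beta(\cF,\delta)}$ via Lemma~\ref{eluder lemma}, and $\sum_{k,h}\zeta_h^k=\tilde O(H\sqrt{HK})$ by Azuma–Hoeffding. Substituting $\sqrt{\gamma_k}=\tilde O(\sigma)$ and the definition of $\beta(\cF,\delta)$, and union‑bounding over the $O(1)$ high‑probability events with $\delta$ rescaled, gives the claimed bound.

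The main obstacle is the bookkeeping in the second step: one must trace every point in the well‑specified proof that used ``$r_h+P_hV_{h+1}^k\in\cF$ exactly'' — notably the centering of $\cF_h^k$ at $\widehat f_h^k$ (for optimism) and at $\widetilde f_h^{k,m}$ (for the width bound) — and confirm each only ever consumes the \emph{empirical} $\|\cdot\|_{\cZ_h^k}$‑plus‑$R$ distance bound, which Lemma~\ref{Lemma:GFA-Concentration_misspecified} still supplies after absorbing $\zeta K(H+\sqrt{\gamma_k})$. Once this is verified, no $\zeta T$‑type additive term survives outside the square root, and the theorem follows with the stated $\beta$.
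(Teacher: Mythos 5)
Your proposal is correct and follows essentially the same route as the paper: the paper's own proof is literally the one-line remark that, with Lemma~\ref{Lemma:misspecified confidence region} in place of Lemma~\ref{Lemma:GFA_r_plus_PV_in_f_khm}, the argument of Theorem~\ref{thm:formal GFA} goes through unchanged, with the misspecification slack $\zeta K(H+\sigma)$ absorbed into the enlarged radius $\beta(\cF,\delta)$. Your more detailed bookkeeping (the zero-perturbation unperturbed region, the noise calibration via the misspecified $\beta'$, and the check that optimism and the width bound only ever consume the empirical confidence-set membership) is exactly the verification the paper leaves implicit.
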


\begin{proof}
With Lemma \ref{Lemma:misspecified confidence region}, the proof is the same as Theorem \ref{thm:formal GFA}.
\end{proof}
\section{LSVI-PHE with linear function approximation}\label{App-definitions}
In this section, we prove Theorem~\ref{thm:theorem-linear MDP}. Our analysis specilized to linear MDP setting is simpler and may provide additional insights. In addition, compared to GFA setting, we improve the bound for $M$ and it no longer depends on $|\cS|$ or $|\cA|$. We first introduce the notation and few definitions that are used throughout this section. Upon presenting lemmas and their proofs, finally we combine the lemmas to prove Theorem~\ref{thm:theorem-linear MDP}.

\begin{definition}[Model prediction error]
For all $(k,h) \in [K]\times[H]$, we define the model prediction error associated with the reward $r_h^k$,
\begin{equation*}
    l^{k}_{h}(s,a)  =  r^k_h (s,a) + \mathbb{P}_{h}V^{k}_{h+1}(s,a) - Q^{k}_h(s,a).
\end{equation*}
This depicts the prediction error using $V_{h+1}^k$ instead  of $V_{h+1}^{\pi^k}$ in the Bellman equations \eqref{eq:bellman-eq}.
\end{definition}

\begin{definition}[Unperturbed estimated parameter]\label{Definition:unperturbed-regressor}
For all $(k,h) \in [K]\times[H]$, we define the unperturbed estimated parameter as
\begin{equation*}
    \Hat{\theta}^{k}_h=(\Lambda_h^k)^{-1}
    \left(\sum_{\tau=1}^{k-1}[r_h^\tau+V_{h+1}^k(s_{h+1}^\tau)]\phi(s_h^\tau,a_h^\tau)\right).
\end{equation*}

Moreover, we denote the difference between the perturbed estimated parameter $\Tilde{\theta}^{k,j}_h$ and the unperturbed estimated parameter $\hat{\theta_h^k}$ as 
\begin{equation*}
    \zeta_h^{k,j} = \Tilde{\theta}^{k,j}_h - \hat{\theta_h^k}.
\end{equation*}
\end{definition}

\subsection{Concentration}

Our first lemma characterizes the difference between the perturbed estimated parameter  $\Tilde{\theta}^{k,j}_h$ and the unperturbed estimated parameter  $\Hat{\theta}^{k}_h$.
\begin{proposition}[restatement of Proposition~\ref{Proposition:noise}]
In step 9 of Algorithm~\ref{Algorithm-linear MDP reward perturbation}, conditioned on all the randomness except $\{\epsilon^{k,i,j}_{h}\}_{(i,j) \in [k-1]\times[M]}$ and $\{\xi^{k,j}_{h}\}_{j \in [M]}$, the estimated parameter $\Tilde{\theta}^{k,j}_h$ satisfies
\begin{equation*}
    \zeta_h^{k,j} = \Tilde{\theta}^{k,j}_h - \Hat{\theta}^{k}_h\sim N(0,\sigma^2(\Lambda_h^k)^{-1}),
\end{equation*}
where $\Hat{\theta}^{k}_h=(\Lambda_h^k)^{-1}(\sum_{\tau=1}^{k-1}[r_h^\tau+V_{h+1}^k(s_{h+1}^\tau)]\phi(s_h^\tau,a_h^\tau))$ is the unperturbed estimated parameter from Definiton~\ref{Definition:unperturbed-regressor}.
\end{proposition}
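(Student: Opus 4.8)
The plan is to unfold the definition of $\tilde\theta_h^{k,j}$ from line~9 of Algorithm~\ref{Algorithm-linear MDP reward perturbation} and isolate the contribution of the Gaussian perturbations. Recall that
$$
\tilde\theta_h^{k,j}=(\Lambda_h^k)^{-1}\Bigl(\rho_h^{k,j}+\xi_h^{k,j}\Bigr),
\qquad
\rho_h^{k,j}=\sum_{\tau=1}^{k-1}\bigl[r_h^\tau+V_{h+1}^k(s_{h+1}^\tau)+\epsilon_h^{k,\tau,j}\bigr]\phi(s_h^\tau,a_h^\tau).
$$
Splitting $\rho_h^{k,j}$ into its unperturbed part and the noise part, and subtracting $\hat\theta_h^k=(\Lambda_h^k)^{-1}\sum_{\tau}[r_h^\tau+V_{h+1}^k(s_{h+1}^\tau)]\phi(s_h^\tau,a_h^\tau)$, I get
$$
\zeta_h^{k,j}=\tilde\theta_h^{k,j}-\hat\theta_h^k
=(\Lambda_h^k)^{-1}\Bigl(\sum_{\tau=1}^{k-1}\epsilon_h^{k,\tau,j}\,\phi(s_h^\tau,a_h^\tau)+\xi_h^{k,j}\Bigr).
$$
So $\zeta_h^{k,j}$ is an explicit affine (in fact linear) image of the Gaussian vector $\bigl(\epsilon_h^{k,1,j},\dots,\epsilon_h^{k,k-1,j},\xi_h^{k,j}\bigr)$.

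The second step is to use the standard fact that a deterministic linear transform of a jointly Gaussian vector is Gaussian, together with the conditioning hypothesis. Conditioned on all randomness except $\{\epsilon_h^{k,i,j}\}$ and $\{\xi_h^{k,j}\}$, the quantities $\Lambda_h^k$, the features $\phi(s_h^\tau,a_h^\tau)$, the rewards $r_h^\tau$ and the value estimates $V_{h+1}^k$ are all fixed, so $\hat\theta_h^k$ is deterministic and $\zeta_h^{k,j}$ is a fixed linear combination of independent mean-zero Gaussians. Hence $\zeta_h^{k,j}$ is mean-zero Gaussian; it remains to compute its covariance. Writing $\Phi=[\phi(s_h^1,a_h^1),\dots,\phi(s_h^{k-1},a_h^{k-1})]\in\mathbb R^{d\times(k-1)}$ and recalling $\epsilon_h^{k,\tau,j}\overset{\text{i.i.d.}}{\sim}\mathcal N(0,\sigma^2)$ and $\xi_h^{k,j}\sim\mathcal N(0,\sigma^2\lambda I_d)$ independently, the covariance is
$$
\operatorname{Cov}(\zeta_h^{k,j})
=(\Lambda_h^k)^{-1}\bigl(\sigma^2\,\Phi\Phi^\top+\sigma^2\lambda I_d\bigr)(\Lambda_h^k)^{-1}
=\sigma^2\,(\Lambda_h^k)^{-1}\Lambda_h^k(\Lambda_h^k)^{-1}
=\sigma^2\,(\Lambda_h^k)^{-1},
$$
where the middle equality uses exactly $\Lambda_h^k=\Phi\Phi^\top+\lambda I_d$ from line~5. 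This gives $\zeta_h^{k,j}\sim\mathcal N(0,\sigma^2(\Lambda_h^k)^{-1})$ as claimed.

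The only mild subtlety—and the one point I would state carefully rather than the main obstacle—is the measurability/independence bookkeeping: one must check that $\{\epsilon_h^{k,\tau,j}\}_\tau$ and $\xi_h^{k,j}$ are indeed independent of the conditioning $\sigma$-algebra (they are sampled fresh in lines~6--7 of episode $k$, step $h$) and that $V_{h+1}^k$, although computed from earlier perturbations within the same episode, is measurable with respect to that conditioning $\sigma$-algebra since it only depends on steps $h+1,\dots,H$ and on noises other than $\{\epsilon_h^{k,\tau,j}\}$, $\{\xi_h^{k,j}\}$. Everything else is the routine linear-Gaussian computation above, so there is no real obstacle; the proof is essentially a two-line algebraic rearrangement plus the covariance identity $\Phi\Phi^\top+\lambda I_d=\Lambda_h^k$.
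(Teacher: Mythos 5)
Your proposal is correct and follows essentially the same route as the paper's proof: expand line~9, subtract $\Hat{\theta}^{k}_h$ to isolate the linear Gaussian term $(\Lambda_h^k)^{-1}\bigl(\sum_{\tau}\epsilon_h^{k,\tau,j}\phi(s_h^\tau,a_h^\tau)+\xi_h^{k,j}\bigr)$, and compute its covariance via $\sum_{\tau}\phi\phi^\top+\lambda I=\Lambda_h^k$. Your added remark on the measurability of $V_{h+1}^k$ and the freshness of the step-$h$ noises is a sensible clarification of the conditioning, but the argument is otherwise the same.
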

\begin{proof}

From Algorithm~\ref{Algorithm-linear MDP reward perturbation}, note that
\begin{align*}
    \Tilde{\theta}^{k,j}_h &= (\Lambda^{k}_h)^{-1} (\rho_h^k +  \xi^{k,j}_{h})\\
    &= (\Lambda^{k}_h)^{-1} \left(\sum_{\tau=1}^{k-1}\left( [ r^\tau_{h} + V_{h+1}^{k}(s_{h+1}^\tau)+\epsilon^{k,\tau,j}_{h}] \phi(s_h^\tau,a_h^\tau) \right) +  \xi^{k,j}_{h}\right)\\
    &=(\Lambda^{k}_h)^{-1} \left(\sum_{\tau=1}^{k-1} [ r^\tau_{h} + V_{h+1}^{k}(s_{h+1}^\tau)] \phi(s_h^\tau,a_h^\tau)  \right) + (\Lambda^{k}_h)^{-1}\left(\sum_{\tau=1}^{k-1}\epsilon^{k,\tau,j}_{h} \phi(s_h^\tau,a_h^\tau) + \xi^{k,j}_{h}\right)\\
    &= \Hat{\theta}^{k}_h + (\Lambda^{k}_h)^{-1}\left(\sum_{\tau=1}^{k-1}\epsilon^{k,\tau,j}_{h} \phi(s_h^\tau,a_h^\tau) + \xi^{k,j}_{h}\right).
\end{align*}

Since $\epsilon^{k,\tau,j}_{h} \sim N(0,\sigma^2)$, note that for $\tau\in[k-1]$,
\begin{equation*}
    \epsilon^{k,\tau,j}_{h} \phi(s_h^\tau,a_h^\tau) \sim N(0, \sigma^2\phi(s_h^\tau,a_h^\tau)\phi(s_h^\tau,a_h^\tau)^\top).
\end{equation*}

Now, since $\xi^{k,j}_{h}\sim \cN(0,\sigma^2\lambda I_d)$,
\begin{align*}
    (\Lambda^{k}_h)^{-1}\left(\sum_{\tau=1}^{k-1}\epsilon^{k,\tau,j}_{h} \phi(s_h^\tau,a_h^\tau) + \xi^{k,j}_{h}\right) &\sim (\Lambda^{k}_h)^{-1} \cdot N\left(0, \sigma^2\left(\sum_{\tau=1}^{k-1}\phi(s_h^\tau,a_h^\tau)\phi(s_h^\tau,a_h^\tau)^\top +\lambda I_d \right)\right)\\
    &\sim (\Lambda^{k}_h)^{-1} \cdot N\left(0,\sigma^2 \Lambda_h^k\right)\\
    &\sim N(0,\sigma^2(\Lambda_h^k)^{-1}).
\end{align*}
Thus, we have 
\begin{equation*}
    \zeta_h^{k,j} = \Tilde{\theta}^{k,j}_h - \Hat{\theta}^{k}_h\sim N(0,\sigma^2(\Lambda_h^k)^{-1}).
\end{equation*}
\end{proof}

\begin{lemma}[Lemma B.1 in \cite{jin2019provably}]\label{Lemma:generic-weight-vector-bound}
Under Definition ~\ref{definition-linear-MDP} of linear MDP, for any fixed policy $\pi$, let $\{\theta_h^\pi\}_{h \in [H]}$ be the corresponding weights such that $Q_h^\pi(s,a) = \la \phi(s,a),\theta_h^\pi \ra$ for all $(s,a,h)\in \cS\times\cA\times[H]$. Then for all $h\in[H]$, we have
\begin{equation*}
    \|\theta_h^\pi\| \leq 2H\sqrt{d}.
\end{equation*}
\end{lemma}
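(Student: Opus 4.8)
The plan is to write the weight vector $\theta_h^\pi$ down explicitly from the linear-MDP structure and then bound its norm by the triangle inequality. First I would invoke the Bellman equation \eqref{eq:bellman-eq}, which gives $Q_h^\pi(s,a) = r_h(s,a) + [\mathbb{P}_h V_{h+1}^\pi](s,a)$ for every $(s,a)$. Using the linear-MDP representation from Definition~\ref{definition-linear-MDP}, namely $r_h(s,a) = \langle \phi(s,a), w_h\rangle$ and $\mathbb{P}_h(s'\,|\,s,a) = \langle \phi(s,a), \mu_h(s')\rangle$, I would rewrite
\[
[\mathbb{P}_h V_{h+1}^\pi](s,a) = \int_{\mathcal{S}} V_{h+1}^\pi(s')\,\langle \phi(s,a),\mu_h(s')\rangle\,\mathrm{d}s' = \Bigl\langle \phi(s,a),\ \int_{\mathcal{S}} V_{h+1}^\pi(s')\,\mu_h(\mathrm{d}s')\Bigr\rangle .
\]
Hence $Q_h^\pi(s,a) = \langle \phi(s,a), \theta_h^\pi\rangle$ with the natural representative $\theta_h^\pi = w_h + \int_{\mathcal{S}} V_{h+1}^\pi(s')\,\mu_h(\mathrm{d}s')$.

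Next I would bound the two terms separately. By the normalization in Definition~\ref{definition-linear-MDP}, $\|w_h\| \le \sqrt{d}$. For the second term, since $r_{h'}(\cdot,\cdot)\in[0,1]$ and at most $H$ stages remain, we have $0 \le V_{h+1}^\pi(s') \le H$ for all $s'$; therefore, bounding coordinatewise by the total variation of each signed measure $\mu_h^{(j)}$ and using $\|\mu_h(\mathcal{S})\| \le \sqrt{d}$, I obtain $\bigl\|\int_{\mathcal{S}} V_{h+1}^\pi(s')\,\mu_h(\mathrm{d}s')\bigr\| \le H\,\|\mu_h(\mathcal{S})\| \le H\sqrt{d}$. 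Combining via the triangle inequality yields $\|\theta_h^\pi\| \le \sqrt{d} + H\sqrt{d} = (1+H)\sqrt{d} \le 2H\sqrt{d}$, which is the claim.

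There is essentially no hard step here; the argument is a direct unwinding of the linear-MDP assumptions together with the boundedness of the value function. The only place that needs a little care is the interchange $\int V_{h+1}^\pi(s')\langle\phi(s,a),\mu_h(s')\rangle\,\mathrm{d}s' = \langle\phi(s,a),\int V_{h+1}^\pi(s')\mu_h(\mathrm{d}s')\rangle$ and the subsequent norm bound, since $\mu_h$ is a vector of (possibly signed) measures rather than a probability measure; this is handled by working componentwise and appealing to the total-variation normalization $\|\mu_h(\mathcal{S})\|\le\sqrt{d}$ that is part of Definition~\ref{definition-linear-MDP}. One should also note that the representing vector is unique only modulo the orthogonal complement of the span of the features, and the displayed $\theta_h^\pi$ is the canonical representative for which the stated bound holds.
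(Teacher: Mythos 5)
Your argument is correct and is essentially the standard proof: the paper itself imports this lemma from \cite{jin2019provably} without reproving it, and your derivation (write $\theta_h^\pi = w_h + \int_{\cS} V_{h+1}^\pi(s')\,\mu_h(\mathrm{d}s')$ via the Bellman equation, then bound the two terms by $\sqrt{d}$ and $H\sqrt{d}$ using the normalizations in Definition~\ref{definition-linear-MDP}) is exactly the argument given there. Your side remarks about interpreting $\|\mu_h(\cS)\|\le\sqrt{d}$ via total variation of the signed measures and about the representative $\theta_h^\pi$ being canonical rather than unique are appropriate and do not change the conclusion.
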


 Our next lemma states that the unperturbed estimated weight $\hat{\theta}_h^k$ is bounded. 
\begin{lemma}\label{lemma:weight-vector-bound}
For any $(k,h) \in [K]\times[H]$, the unperturbed estimated weight $\hat{\theta}^k_h$ in Definition~\ref{Definition:unperturbed-regressor} satisfies
\begin{equation*}
    \|\hat{\theta}^k_h\| \leq  2H\sqrt{kd/\lambda}.
\end{equation*}
\end{lemma}
\begin{proof}
We have
\begin{equation*}
    \begin{split}
       \bigl \|\hat{\theta}^k_h \bigr\| &=  \Bigl \|(\Lambda^{k}_h)^{-1} \sum_{\tau=1}^{k-1} [r^\tau_{h}(s_h^\tau,a_h^\tau) +   V_{h+1}^k(s_{h+1}^\tau)]\cdot \phi(s_h^\tau,a_h^\tau) \Bigr \| \\
       &=  \Bigl \|(\Lambda^{k}_h)^{-1} \sum_{\tau=1}^{k-1} [r^\tau_{h}(s_h^\tau,a_h^\tau) + \max_{a \in \cA} Q_{h+1}^{k}(s_{h+1}^\tau,a)]\cdot \phi(s_h^\tau,a_h^\tau) \Bigr \| \\
       & \leq \frac{1}{\sqrt{\lambda}}\sqrt{k-1} \Bigl(\sum_{\tau=1}^{k-1} \bigl\|[r^\tau_{h}(s_h^\tau,a_h^\tau) + \max_{a \in \cA} Q_{h+1}^{k}(s_{h+1}^\tau,a)]\cdot \phi(s_h^\tau,a_h^\tau)\bigr\|^2_{(\Lambda^{k}_h)^{-1}} \Bigr)^{1/2}\\
       & \leq \frac{2H}{\sqrt{\lambda}}\sqrt{k-1}\Bigl(\sum_{\tau=1}^{k-1} \|\phi(s_h^\tau,a_h^\tau)\|^2_{(\Lambda^{k}_h)^{-1}} \Bigr)^{1/2}\\
       &\leq 2H\sqrt{kd/\lambda}.
    \end{split}
\end{equation*}

Here, the first inequality follows from Lemma \ref{lemma:linear-algebra-norm-ineq}. The second inequality follows from the truncation of $Q_h^k$ to the range $[0,H-h+1]$ in Line 11  of Algorithm \ref{Algorithm-linear MDP reward perturbation}. The last inequality is due to Lemma \ref{Lemma:D.1-chijin}.
\end{proof}

For the ease of exposition, we now define the values $\beta_k(\delta)$, $\nu_k(\delta)$ and $\gamma_k(\delta)$ which we use to define our high confidence bounds.
\begin{definition}[Noise bounds]\label{def:noise-const}

For any $\delta > 0$ and some large enough constants $c_1$,$c_2$ and $c_3$, let
\begin{align*} 
\sqrt{\beta_k(\delta)} &\mydef  c_1 H\sqrt{d\log(Hdk/\delta)}, \\
\sqrt{\nu_k(\delta)} &\mydef  c_2 H\sqrt{d\log(Hdk/\delta)}, \\ 
\sqrt{\gamma_k(\delta)} &\mydef  c_3 \sqrt{d\nu_k(\delta)\log(d/\delta)}.
\end{align*}
\end{definition}

\begin{definition}[Noise distribution]\label{def:noise-distribution}

In Algorithm \ref{Algorithm-linear MDP reward perturbation}, we set the following values for $\sigma$
\begin{equation*}
    \sigma_k = 2\sqrt{\nu_k(\delta)}.\\
\end{equation*}
Thus for all $j \in [M]$, we have,
$$\{\xi^{k,j}_{h}\} \sim \mathcal{N} \bigl (0,4\nu_k(\delta)(\Lambda^{k}_h)^{-1} \bigr).$$
\end{definition}

Now, we define some events based on the characterization of the random variable $\zeta_h^{k,j}$ as defined in Definition~\ref{Definition:unperturbed-regressor}.
\begin{definition}[Good events]\label{def:good-event}
For any $\delta > 0$, we define the following random events
\begin{align*}
    \mathcal{G}_{h}^k(\zeta, \delta) &\mydef \Bigl\{ \max_{j \in [M]}\|\zeta^{k,j}_{h}\|_{\Lambda_{h}^k} \leq \sqrt{\gamma_k(\delta)}\Bigr\},\\
    \mathcal{G}(K, H, \delta) &\mydef \bigcap_{k \leq K}\bigcap_{h \leq H}\mathcal{G}^k_h(\zeta,\delta).
\end{align*}
\end{definition}

 Next, we present our main concentration lemma in this section. 
\begin{lemma}\label{Lemma:norm_conditioned_high_prob_bound_bar_eta}
Let $\lambda = 1$ in  Algorithm \ref{Algorithm-linear MDP reward perturbation}. For any fixed $\delta > 0$,  conditioned on the event $\mathcal{G}(K, H, \delta)$, we have for all $(k,h) \in [K]\times[H]$,

\begin{equation}
    \Bigl\|\sum_{\tau=1}^{k-1}\phi(s_h^\tau,a_h^\tau) \bigl [\bigl(V^{k}_{h+1} - \mathbb{P}_h V^{k}_{h+1}\bigr) (s_h^\tau, a_h^\tau) \bigr] \Bigr \|_{(\Lambda_h^k)^{-1}} \leq c_1 H\sqrt{d\log{(Hdk/\delta)}},
\end{equation}
 with probability at least $1-\delta$ for some constant $c_1 > 0 $.
\end{lemma}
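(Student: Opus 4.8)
This is a self-normalized \emph{uniform} concentration bound: the summand $V_{h+1}^k(s_{h+1}^\tau)-(\mathbb{P}_hV_{h+1}^k)(s_h^\tau,a_h^\tau)$ is mean-zero, but both the matrix $\Lambda_h^k$ and the function $V_{h+1}^k$ are built from the same trajectories, so a plain Azuma--Hoeffding bound does not apply. I would follow the standard fix-then-cover route used for LSVI-UCB: (i) a self-normalized tail bound for each \emph{fixed} bounded value function, and (ii) a union bound over an $\varepsilon$-net of the parametric family in which $V_{h+1}^k$ lives, together with a union bound over $(k,h)\in[K]\times[H]$. Throughout I work on the event $\mathcal{G}(K,H,\delta)$ of Definition~\ref{def:good-event}, so that $\|\zeta_{h+1}^{k,j}\|_{\Lambda_{h+1}^k}\le\sqrt{\gamma_k(\delta)}$ for all $(k,h,j)$, and I set $\lambda=1$.

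\emph{Identifying the value-function family.} Since $\min\{\max_j x_j,c\}=\max_j\min\{x_j,c\}$, the definition of $Q^k_{h}$ in Algorithm~\ref{Algorithm-linear MDP reward perturbation} yields $V_{h+1}^k(\cdot)=\max_{j\in[M]}V_{h+1}^{k,j}(\cdot)$ with $V_{h+1}^{k,j}(s)=\max_{a\in\mathcal{A}}\min\{\phi(s,a)^\top\widetilde\theta_{h+1}^{k,j},\,H\}^{+}$, so the iterate is determined by the $M$ vectors $\widetilde\theta_{h+1}^{k,j}=\widehat\theta_{h+1}^k+\zeta_{h+1}^{k,j}$. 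On $\mathcal{G}$ we get $\|\zeta_{h+1}^{k,j}\|_2\le\sqrt{\gamma_k(\delta)}$ (as $\Lambda_{h+1}^k\succeq I$), and with $\|\widehat\theta_{h+1}^k\|_2\le2H\sqrt{kd}$ from Lemma~\ref{lemma:weight-vector-bound} each $\widetilde\theta_{h+1}^{k,j}$ has norm at most $R:=2H\sqrt{kd}+\sqrt{\gamma_k(\delta)}$. Hence $V_{h+1}^k\in\mathcal{V}_R$, where
$$\mathcal{V}_R:=\Bigl\{\,s\mapsto\max_{j\in[M]}\max_{a\in\mathcal{A}}\min\{\phi(s,a)^\top\theta_j,\,H\}^{+}\ :\ \theta_1,\dots,\theta_M\in\mathbb{R}^d,\ \|\theta_j\|_2\le R\,\Bigr\}.$$
Because $\|\phi(s,a)\|_2\le1$ and truncation, $\max_a$, $\max_j$ are $1$-Lipschitz, the map $(\theta_1,\dots,\theta_M)\mapsto(\text{the function})$ is $1$-Lipschitz into $\|\cdot\|_\infty$, so $\mathcal{V}_R$ admits an $\varepsilon$-net $\mathcal{C}_\varepsilon$ (in sup-norm) whose log-cardinality is $\mathrm{poly}(d)$ and logarithmic in $k,H,1/\varepsilon$.

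\emph{Fixed-function concentration and the union bound.} For fixed $V\in\mathcal{C}_\varepsilon$ let $\epsilon_\tau:=V(s_{h+1}^\tau)-(\mathbb{P}_hV)(s_h^\tau,a_h^\tau)$. Relative to the filtration generated by episodes $1,\dots,\tau-1$ and $(s_h^\tau,a_h^\tau)$, the sequence $\{\epsilon_\tau\}_{\tau<k}$ is a martingale difference sequence with $|\epsilon_\tau|\le H$, the vector $\phi(s_h^\tau,a_h^\tau)$ is predictable, and $\Lambda_h^k=I+\sum_{\tau<k}\phi(s_h^\tau,a_h^\tau)\phi(s_h^\tau,a_h^\tau)^\top$ is the self-normalization matrix, so I would apply the self-normalized concentration inequality for vector-valued martingales used in the LSVI-UCB analysis \citep{jin2019provably} to get, for each $V$ with probability $1-\delta'$,
$$\Bigl\|\sum_{\tau=1}^{k-1}\phi(s_h^\tau,a_h^\tau)\,\epsilon_\tau\Bigr\|_{(\Lambda_h^k)^{-1}}^{2}\ \le\ c\,H^2\bigl(d\log(k+1)+\log(1/\delta')\bigr).$$
Setting $\delta'=\delta/(HK\,|\mathcal{C}_\varepsilon|)$, union-bounding over $\mathcal{C}_\varepsilon$ and over $(k,h)\in[K]\times[H]$, and bounding the discretization error between $V_{h+1}^k$ and its nearest net point (each scalar term then changes by at most $2\varepsilon$, so with $\varepsilon$ polynomially small in $k$ the extra contribution is $O(1)$) turns the right-hand side into $c'H^2\bigl(d\log k+\log|\mathcal{C}_\varepsilon|+\log(HK/\delta)\bigr)=\widetilde{O}\bigl(H^2\,\mathrm{poly}(d)\bigr)\le\beta_k(\delta)$ once the covering and logarithmic factors are absorbed into the constant $c_1$ of Definition~\ref{def:noise-const}. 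Taking square roots, and recalling $r_h^\tau=r_h(s_h^\tau,a_h^\tau)$ so that the displayed sum is exactly $\sum_\tau\phi(s_h^\tau,a_h^\tau)\,[(V_{h+1}^k-\mathbb{P}_hV_{h+1}^k)(s_h^\tau,a_h^\tau)]$ in the statement's notation, gives the claim.

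\emph{Main obstacle.} The delicate step is the covering argument: one must first guarantee that $V_{h+1}^k$ really does lie in a \emph{small} parametric family — this is exactly what the decomposition $V_{h+1}^k=\max_{j}V_{h+1}^{k,j}$ together with the good-event control of $\|\zeta_{h+1}^{k,j}\|_{\Lambda_{h+1}^k}$ provides — and then check that its covering number, after the square root, only inflates the logarithmic and $\mathrm{poly}(d)$ factors of $\beta_k(\delta)$ rather than its dependence on $H$ or $k$. A second point to get right is that $(\Lambda_h^k)^{-1}$ in the norm is itself random and correlated with the summands, which is why the self-normalized (not plain Hoeffding) inequality is essential; in contrast to LSVI-UCB no $d\times d$ matrix needs to be covered, since $Q_h^k$ here contains no explicit $\|\phi(\cdot,\cdot)\|_{(\Lambda_h^k)^{-1}}$ bonus.
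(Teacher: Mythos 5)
Your proposal follows essentially the same route as the paper's proof: on the good event you bound $\|\tilde\theta_{h+1}^{k,j}\|$ via Lemma~\ref{lemma:weight-vector-bound} and the control $\|\zeta_{h+1}^{k,j}\|_{\Lambda_{h+1}^k}\le\sqrt{\gamma_k(\delta)}$, then combine the self-normalized concentration inequality for a fixed bounded value function (Lemma~\ref{lemma:D.4-chi-jin}) with an $\varepsilon$-net of the truncated-linear value class and a union bound over the net and over $(k,h)$, taking $\varepsilon$ polynomially small in $k$ — exactly the paper's argument. The one place you are more explicit than the paper is in parameterizing the net by all $M$ perturbed vectors (the paper's Lemma~\ref{lemma:covering_number_of_V(.)} is stated for a single $\theta$), but note that with $M=\tilde O(d)$ the resulting $\log|\mathcal{C}_\varepsilon|=\tilde O(d^2)$ cannot literally be ``absorbed into the constant $c_1$'' of $\beta_k(\delta)\propto H^2d\log(Hdk/\delta)$ without inflating its $d$-dependence, a bookkeeping point that the paper's proof also glosses over.
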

\begin{proof}

From Lemma \ref{lemma:weight-vector-bound}, we know, for all $(k,h) \in [K]\times[H]$, we have  $\|\hat{\theta}^k_h\| \leq  2H\sqrt{kd/\lambda}$.
In addition, by construction of $\Lambda_{h+1}^k$, the minimum eigenvalue of $\Lambda_{h+1}^k$ is lower bounded by $\lambda$. Thus we have $\sqrt{\lambda}\|\zeta^{k,j}_{h+1}\| \leq \|\zeta^{k,j}_{h+1}\|_{\Lambda_{h+1}^k} \leq \sqrt{\gamma_k(\delta)}$. Finally, triangle inequality implies, $\|\Tilde{\theta}_{h+1}^{k,j}\| =\|\hat{\theta}_{h+1}^k + \zeta^{k,j}_{h+1}\| \leq 2H\sqrt{kd/\lambda} + \sqrt{\gamma_k(\delta)/\lambda}$ for all $j\in [M]$. Combining Lemma \ref{lemma:D.4-chi-jin} and Lemma \ref{lemma:covering_number_of_V(.)}, we have that, for any $\varepsilon > 0$ and $\delta >0$, with probability at least $1-\delta$,
\begin{equation}
    \begin{aligned}[b]
    & \Bigl\|\sum_{\tau=1}^{k-1}\phi(s_h^\tau,a_h^\tau)[\bigl(V^{k}_{h+1} - \mathbb{P}_hV^{k}_{h+1}\bigr) (s_h^\tau, a_h^\tau)]\Bigr\|_{(\Lambda_h^k)^{-1}} \\ 
    &\leq \Bigl(4H^2\Bigl[\frac{d}{2}\log\Bigl(\frac{k +\lambda}{\lambda}\Bigr) + d\log\Bigl(1+ \frac{4H\sqrt{kd/\lambda} + 2\sqrt{\gamma_k(\delta)/\lambda}}{\varepsilon}\Bigr)  + \log\frac{1}{\delta}\Bigr]   + \frac{8k^2\varepsilon^2}{\lambda}\Bigr)^{1/2} \\
        &\leq \Bigl(4H^2\Bigl[\frac{d}{2}\log\Bigl(\frac{k+\lambda}{\lambda}\Bigr) + d\log\Bigl( \frac{3(2H\sqrt{kd/\lambda} + \sqrt{\gamma_k(\delta)/\lambda})}{\varepsilon}\Bigr)   + \log\frac{1}{\delta}\Bigr] + \frac{8k^2\varepsilon^2}{\lambda}\Bigr)^{1/2} \\
        &\leq 2H\Bigl[\frac{d}{2}\log\Bigl(\frac{k+\lambda}{\lambda}\Bigr) + d\log\Bigl( \frac{3(2H\sqrt{kd/\lambda} + \sqrt{\gamma_k(\delta)/\lambda})}{\varepsilon}\Bigr) + \log\frac{1}{\delta}\Bigr]^{1/2}  + \frac{2\sqrt{2}k\varepsilon}{\sqrt{\lambda}} \\
        &\leq 2H\sqrt{d}\Bigl[\frac{1}{2}\log\Bigl(\frac{k+\lambda}{\lambda}\Bigr) + \log\Bigl( \frac{3(2H\sqrt{kd/\lambda} + \sqrt{\gamma_k(\delta)/\lambda})}{\varepsilon}\Bigr)  + \log\frac{1}{\delta}\Bigr]^{1/2}  + \frac{2\sqrt{2}k\varepsilon}{\sqrt{\lambda}}.
    \end{aligned}
\end{equation}

Setting $\lambda = 1, \, \varepsilon = H\sqrt{d}/k$ and substituting $\sqrt{\gamma_k(\delta)} =   c_3 \sqrt{d\nu_k(\delta)\log(d/\delta)} \leq c_4 Hd \log{(Hdk/\delta)} $ for some constant $c_4>0$, we get 
\begin{equation}\label{Eq:high_prob_bound_bar_eta}
    \begin{aligned}[b]
    & \Bigl\|\sum_{\tau=1}^{k-1}\phi(s_h^\tau,a_h^\tau)[\bigl(V^{k}_{h+1} - \mathbb{P}_hV^{k}_{h+1}\bigr) (s_h^\tau, a_h^\tau)]\Bigr\|_{(\Lambda_h^k)^{-1}} \\
    &\leq 2H\sqrt{d}\Biggl[\frac{1}{2}\log(k+1) + \log(1/\delta)   + \log \frac{3k[2H\sqrt{dk} + c_4 Hd \log{(Hdk/\delta)}]}{H\sqrt{d}} \Biggr]^{1/2}  + 2\sqrt{2}H\sqrt{d}\\
    & \leq c_1 H\sqrt{{d}\log{(Hdk/\delta)}},
    \end{aligned}
\end{equation}

for some constant $c_1 >0$.

\end{proof}

\begin{lemma}\label{Lemma:good_event_hat_theta_minus_r_plus_Bar_v}
 
Let $\lambda = 1$ in Algorithm \ref{Algorithm-linear MDP reward perturbation}. For any $\delta > 0 $, conditioned on the event $\mathcal{G}(K, H, \delta)$, for any $(h, k) \in  [H] \times [K]$ and $(s,a) \in \cS\times\cA$, we have

\begin{equation*}
    \bigl| \phi(s,a)^\top \hat{\theta}^{k}_h - r_h^k(s,a) - \mathbb{P}_h V^{k}_{h+1} (s,a) \bigr| \leq c_2 H\sqrt{d\log(Hdk/\delta)} \bigl\| \phi(s,a) \bigr \|_{(\Lambda^{k}_h)^{-1}},
\end{equation*}

with probability $1-\delta$, where $c_2> 0$ is a constant.
\end{lemma}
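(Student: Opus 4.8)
The plan is to run the standard self-normalized least-squares error decomposition, using the linear-MDP structure to reduce everything to the martingale bound of Lemma~\ref{Lemma:norm_conditioned_high_prob_bound_bar_eta}. First I would record that under Definition~\ref{definition-linear-MDP} the one-step Bellman backup of $V_{h+1}^k$ is linear in the features: setting $w_h^k := w_h + \int_{\cS} V_{h+1}^k(s')\,\mu_h(\mathrm{d}s')$ we have $r_h(s,a) + \mathbb{P}_h V_{h+1}^k(s,a) = \phi(s,a)^\top w_h^k$ for every $(s,a)$, and since $0 \le V_{h+1}^k \le H$ and $\|w_h\| \le \sqrt{d}$, $\|\mu_h(\cS)\| \le \sqrt{d}$, this parameter obeys $\|w_h^k\| \le \sqrt{d} + H\sqrt{d} \le 2H\sqrt{d}$ uniformly in $k$.

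Next I would substitute $r_h^\tau + \mathbb{P}_h V_{h+1}^k(s_h^\tau,a_h^\tau) = \phi(s_h^\tau,a_h^\tau)^\top w_h^k$ into the definition of $\Hat{\theta}_h^k$ from Definition~\ref{Definition:unperturbed-regressor} and use $\sum_{\tau=1}^{k-1}\phi(s_h^\tau,a_h^\tau)\phi(s_h^\tau,a_h^\tau)^\top = \Lambda_h^k - \lambda I$ to get the exact identity
\begin{equation*}
\Hat{\theta}_h^k - w_h^k = -\lambda (\Lambda_h^k)^{-1} w_h^k + (\Lambda_h^k)^{-1}\sum_{\tau=1}^{k-1}\phi(s_h^\tau,a_h^\tau)\bigl[V_{h+1}^k(s_{h+1}^\tau) - \mathbb{P}_h V_{h+1}^k(s_h^\tau,a_h^\tau)\bigr].
\end{equation*}
Contracting with $\phi(s,a)$ and applying Cauchy--Schwarz in the $(\Lambda_h^k)^{-1}$-norm bounds the left-hand side by a ``regularization'' term plus a ``noise'' term. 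The regularization term is at most $\lambda\|\phi(s,a)\|_{(\Lambda_h^k)^{-1}}\|w_h^k\|_{(\Lambda_h^k)^{-1}} \le \sqrt{\lambda}\,\|w_h^k\|\,\|\phi(s,a)\|_{(\Lambda_h^k)^{-1}} \le 2H\sqrt{d\lambda}\,\|\phi(s,a)\|_{(\Lambda_h^k)^{-1}}$, using that the smallest eigenvalue of $\Lambda_h^k$ is at least $\lambda$.

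Finally, the noise term is $\|\phi(s,a)\|_{(\Lambda_h^k)^{-1}}$ times $\bigl\|\sum_{\tau=1}^{k-1}\phi(s_h^\tau,a_h^\tau)[V_{h+1}^k(s_{h+1}^\tau) - \mathbb{P}_h V_{h+1}^k(s_h^\tau,a_h^\tau)]\bigr\|_{(\Lambda_h^k)^{-1}}$, which is exactly the quantity controlled by Lemma~\ref{Lemma:norm_conditioned_high_prob_bound_bar_eta}: conditioned on $\mathcal{G}(K,H,\delta)$, with probability at least $1-\delta$ it is at most $c_1 H\sqrt{d\log(Hdk/\delta)}$ simultaneously for all $(k,h)$. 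Taking $\lambda = 1$ and summing the two contributions gives $|\phi(s,a)^\top\Hat{\theta}_h^k - r_h(s,a) - \mathbb{P}_h V_{h+1}^k(s,a)| \le (2 + c_1)H\sqrt{d\log(Hdk/\delta)}\,\|\phi(s,a)\|_{(\Lambda_h^k)^{-1}}$, which is the claimed bound with $c_2 = 2 + c_1$. The substantive work is all inside Lemma~\ref{Lemma:norm_conditioned_high_prob_bound_bar_eta} (a uniform self-normalized bound over the data-dependent functions $V_{h+1}^k$ obtained via a covering argument and the bound of \citet{jin2019provably}); for the present lemma the only delicate points are the uniform-in-$k$ estimate $\|w_h^k\| \le 2H\sqrt{d}$ and the observation that no fresh randomness enters, so the whole statement holds on the same event and with the same $1-\delta$ probability as Lemma~\ref{Lemma:norm_conditioned_high_prob_bound_bar_eta}.
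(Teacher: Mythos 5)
Your proposal is correct and follows essentially the same route as the paper: both decompose $\phi(s,a)^\top\hat{\theta}_h^k - r_h(s,a) - \mathbb{P}_h V_{h+1}^k(s,a)$ into the self-normalized martingale term (controlled by Lemma~\ref{Lemma:norm_conditioned_high_prob_bound_bar_eta}) plus a $\lambda$-regularization bias term, each bounded via Cauchy--Schwarz in the $(\Lambda_h^k)^{-1}$-norm. The only cosmetic difference is that you merge the reward and transition contributions into a single parameter $w_h^k$ (yielding one bias term of size $2H\sqrt{d\lambda}\,\|\phi(s,a)\|_{(\Lambda_h^k)^{-1}}$), whereas the paper tracks them as two separate terms (ii) and (iii) whose sum is exactly your combined term.
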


\begin{proof}
Let us denote the inner product over $\cS$ by $\la \cdot, \cdot \ra_\cS$. Using linear MDP assumption for transition kernel from Definition~\ref{definition-linear-MDP}, we get
\begin{align}\label{Eq:PV(s,a)}
        \mathbb{P}_h V^{k}_{h+1}(s,a) &= \phi(s,a)^\top \la \mu_h, V^{k}_{h+1} \ra_\cS \notag\\
        &= \phi(s,a)^\top (\Lambda^{k}_h)^{-1}\Lambda^{k}_h\la \mu_h, V^{k}_{h+1}\ra_\cS \notag\\
        &= \phi(s,a)^\top (\Lambda^{k}_h)^{-1}\Bigl( \sum_{\tau=1}^{k-1}\phi(s_h^\tau,a_h^\tau)\phi(s_h^\tau,a_h^\tau)^\top + \lambda {I}\Bigr)\la \mu_h, V^{k}_{h+1} \ra_\cS \notag\\
        &= \phi(s,a)^\top (\Lambda^{k}_h)^{-1}\Bigl( \sum_{\tau=1}^{k-1}\phi(s_h^\tau,a_h^\tau)\phi(s_h^\tau,a_h^\tau)^\top \la \mu_h, V^{k}_{h+1} \ra_\cS + \lambda {I} \la \mu_h, V^{k}_{h+1} \ra_\cS\Bigr) \notag\\
        &= \phi(s,a)^\top (\Lambda^{k}_h)^{-1}\Bigl( \sum_{\tau=1}^{k-1}\phi(s_h^\tau,a_h^\tau)(\mathbb{P}_h V^{k}_{h+1}) (s_h^\tau, a_h^\tau) + \lambda {I} \la \mu_h, V^{k}_{h+1} \ra_\cS\Bigr),
\end{align}
where in the last line we rely on the definition of $\mathbb{P}_h$.

Using \eqref{Eq:PV(s,a)} we obtain,
\begin{align}\label{Eq:decomp}
        \phi(s,a)^\top\hat{\theta}^k_h - r_h^k(s,a) - (\mathbb{P}_h V^{k}_{h+1})(s,a) &= \phi(s,a)^\top (\Lambda^{k}_h)^{-1}\sum_{\tau=1}^{k-1} \bigl[r^\tau_{h}(s_h^\tau,a_h^\tau) + V_{h+1}^k(s_{h+1}^\tau)\bigr]\cdot \phi(s_h^\tau,a_h^\tau) - r_h^k(s,a) \notag\\
        & \qquad  - \phi(s,a)^\top (\Lambda^{k}_h)^{-1}\Bigl( \sum_{\tau=1}^{k-1}\phi(s_h^\tau,a_h^\tau)(\mathbb{P}_hV^{k}_{h+1}) (s_h^\tau, a_h^\tau) + \lambda {I} \la \mu_h, V^{k}_{h+1} \ra_\cS\Bigr)\notag\\
        &= \underbrace{\phi(s,a)^\top (\Lambda^{k}_h)^{-1}\Bigl( \sum_{\tau=1}^{k-1}\phi(s_h^\tau,a_h^\tau)\bigl[\bigl(V^{k}_{h+1} - \mathbb{P}_h V^{k}_{h+1}\bigr) (s_h^\tau, a_h^\tau)\bigr] \Bigr)}_{\text{(i)}} \notag\\
        & \qquad + \underbrace{\phi(s,a)^\top (\Lambda^{k}_h)^{-1}\Bigl( \sum_{\tau=1}^{k-1} r_h^\tau (s_h^\tau,a_h^\tau) \phi(s_h^\tau,a_h^\tau)\Bigr) - r_h^k(s,a)}_{\text{(ii)}} \notag\\
        & \qquad - \underbrace{\lambda\phi(s,a)^\top (\Lambda^{k}_h)^{-1}\la \mu_h, V^{k}_{h+1}\ra_\cS}_{\text{(iii)}}.  
\end{align}

\vspace{4pt}

In the following we will analyze the each of the three terms in \eqref{Eq:decomp} separately and derive high probability bound for each of them.

\noindent
\textbf{Term (i).}
Since $(\Lambda_h^k)^{-1} \succ 0$, by Cauchy-Schwarz inequality and Lemma \ref{Lemma:norm_conditioned_high_prob_bound_bar_eta}, with probability at least $1-\delta$, we have
\begin{equation}\label{Eq.term_i}
    \begin{aligned}[b]
    &\phi(s,a)^\top (\Lambda^{k}_h)^{-1}\Bigl( \sum_{\tau=1}^{k-1}\phi(s_h^\tau,a_h^\tau)\bigl[\bigl(V^{k}_{h+1} - \mathbb{P}_hV^{k}_{h+1}\bigr) (s_h^\tau, a_h^\tau)\bigr] \Bigr) \\
        & \leq  \Bigl \|\sum_{\tau=1}^{k-1}\phi(s_h^\tau,a_h^\tau)\bigl[\bigl(V^{k}_{h+1} - \mathbb{P}_hV^{k}_{h+1}\bigr) (s_h^\tau, a_h^\tau)\bigr] \Bigr \|_{(\Lambda_h^k)^{-1}} \bigl\|\phi(s,a) \bigr \|_{(\Lambda_h^k)^{-1}} \\
        & \leq \sqrt{\beta_k(\delta)} \bigl \|\phi(s,a) \bigr \|_{(\Lambda_h^k)^{-1}}.
    \end{aligned}
\end{equation}

\vspace{4pt}
\noindent
\textbf{Term (ii).} Note that
\begin{equation}\label{Eq.term_ii}
    \begin{aligned}[b]
        & \phi(s,a)^\top (\Lambda^{k}_h)^{-1}\Bigl( \sum_{\tau=1}^{k-1} r_h^\tau (s_h^\tau,a_h^\tau) \phi(s_h^\tau,a_h^\tau)\Bigr) - r_h^k(s,a) \\ 
        &= \phi(s,a)^\top (\Lambda^{k}_h)^{-1}\Bigl( \sum_{\tau=1}^{k-1} r_h^\tau (s_h^\tau,a_h^\tau) \phi(s_h^\tau,a_h^\tau)\Bigr) - \phi(s,a)^\top w_h  \\
        &= \phi(s,a)^\top (\Lambda^{k}_h)^{-1}\Bigl( \sum_{\tau=1}^{k-1} r_h^\tau (s_h^\tau,a_h^\tau) \phi(s_h^\tau,a_h^\tau) - \Lambda^{k}_h w_h\Bigr)  \\
        &= \phi(s,a)^\top (\Lambda^{k}_h)^{-1}\Bigl( \sum_{\tau=1}^{k-1} r_h^\tau (s_h^\tau,a_h^\tau) \phi(s_h^\tau,a_h^\tau)   - \sum_{\tau=1}^{k-1}  \phi(s_h^\tau,a_h^\tau)\phi(s_h^\tau,a_h^\tau)^\top w_h 
        - \lambda {I}w_h\Bigr)  \\
        &= \phi(s,a)^\top (\Lambda^{k}_h)^{-1}\Bigl( \sum_{\tau=1}^{k-1} r_h^\tau (s_h^\tau,a_h^\tau) \phi(s_h^\tau,a_h^\tau) - \sum_{\tau=1}^{k-1}  \phi(s_h^\tau,a_h^\tau)r_h^\tau (s_h^\tau,a_h^\tau) - \lambda {I}w_h\Bigr)  \\
        &= -\lambda\phi(s,a)^\top (\Lambda^{k}_h)^{-1} w_h,
    \end{aligned}
\end{equation}
where in the penultimate step, we used the fact $r_h(s,a) = \langle \phi(s,a), w_h \rangle$ from Definition~\ref{definition-linear-MDP}. Applying Cauchy-Schwarz inequality we obtain, 
\begin{align} \label{Eq:cauchy-term-2}
        -\lambda\phi(s,a)^\top (\Lambda^{k}_h)^{-1} w_h &\leq \lambda \|\phi(s,a)\|_{(\Lambda^{k}_h)^{-1}} \|w_h\|_{(\Lambda^{k}_h)^{-1}} \notag\\
        &\leq \sqrt{\lambda}\|\phi(s,a)\|_{(\Lambda^{k}_h)^{-1}} \|w_h\|_{2} \notag\\
        &\leq \sqrt{\lambda d}\|\phi(s,a)\|_{(\Lambda^{k}_h)^{-1}}.
\end{align}

Here the second inequality follows by observing that the smallest eigenvalue of $\Lambda^{k}_h$ is at least $\lambda$ and thus the largest eigenvalue of $(\Lambda^{k}_h)^{-1}$ is at most $1/\lambda$. The last inequality follows from Definition~\ref{definition-linear-MDP}. Combining \eqref{Eq.term_ii} and \eqref{Eq:cauchy-term-2} we get
\begin{equation}\label{Eq.term_ii_bound}
    \phi(s,a)^\top (\Lambda^{k}_h)^{-1}\Bigl( \sum_{\tau=1}^{k-1} r_h^\tau (s_h^\tau,a_h^\tau) \phi(s_h^\tau,a_h^\tau)\Bigr) - r_h^k(s,a) \leq \sqrt{\lambda d}\|\phi(s,a)\|_{(\Lambda^{k}_h)^{-1}}.
\end{equation}

\vspace{4pt}
\noindent
\textbf{Term (iii).} 
Similar to (\ref{Eq:cauchy-term-2}), applying Cauchy-Schwarz inequality, we get

\begin{align}\label{Eq.term_iii}
    -\lambda\phi(s,a)^\top (\Lambda^{k}_h)^{-1}\la \mu_h, V^{k}_{h+1} \ra_\cS &\leq  \lambda \|\phi(s,a)\|_{(\Lambda^{k}_h)^{-1}} \|\la \mu_h, V^{k}_{h+1} \ra_\cS\|_{(\Lambda^{k}_h)^{-1}} \notag\\
        & \leq \sqrt{\lambda}\|\phi(s,a)\|_{(\Lambda^{k}_h)^{-1}}  \|\la \mu_h, V^{k}_{h+1} \ra_\cS\|_2 \notag\\
        & \leq \sqrt{\lambda}\|\phi(s,a)\|_{(\Lambda^{k}_h)^{-1}}\Bigl(\sum_{\tau=1}^d \|\mu_h^\tau\|_1^2\Bigr)^\frac{1}{2} \|V^{k}_{h+1}\|_{\infty} \notag\\
        & \leq H\sqrt{\lambda d}\|\phi(s,a)\|_{(\Lambda^{k}_h)^{-1}}.
\end{align}

Here the second inequality follows using the same observation we did for \textbf{term (ii)}. The last inequality follows from $\sum_{\tau=1}^d \|\mu_h^\tau\|_1^2 \leq d$ in Definition~\ref{definition-linear-MDP} and the clipping operation performed in Line \ref{Alg:min-max-for-Q} of Algorithm \ref{Algorithm-linear MDP reward perturbation}. Now combining (\ref{Eq.term_i}), (\ref{Eq.term_ii_bound}) and (\ref{Eq.term_iii}), and letting $\lambda = 1$, we get, 

\begin{align}
    \bigl | \phi(s,a)^\top \hat{\theta}^{k}_h - r_h^k(s,a) - \mathbb{P}_h V^{k}_{h+1} (s,a) \bigr| &\leq (\sqrt{\beta_k(\delta)} + H\sqrt{d} + \sqrt{d})\| \phi(s,a) \|_{(\Lambda^{k}_h)^{-1}}\\
    &= (c_1 H\sqrt{d\log(Hdk/\delta)} + H\sqrt{d} + \sqrt{d})\| \phi(s,a) \|_{(\Lambda^{k}_h)^{-1}}\\
    &\leq c_2 H\sqrt{d\log(Hdk/\delta)}\| \phi(s,a) \|_{(\Lambda^{k}_h)^{-1}},
\end{align}
with probability $1-\delta$ for some constant $c_2 > 0$.

In addition, If we set $\theta_h^k: \phi(\cdot,\cdot)^\top\theta_h^k=r_h^k(\cdot,\cdot)+\mathbb{P}_{h}V^{k}_{h+1}(\cdot,\cdot)$ to be the true parameter and $\Delta\theta_h^k=\theta_h^k-\Hat{\theta}_h^k$ to be the regression error, then from the analysis above we can derive that $\|\Delta\theta_h^k\|_{\Lambda_h^k}\leq\sqrt{\nu_k(\delta)}=c_2 H\sqrt{d\log(Hdk/\delta)}$.
\end{proof}

\begin{lemma}[stochastic upper confidence bound]\label{Lemma:model_prediction_error_leq_0}

Let $\lambda = 1$ in Algorithm \ref{Algorithm-linear MDP reward perturbation}. For any $\delta > 0 $, conditioned on the event $\mathcal{G}(K, H, \delta)$, for any $(h, k) \in  [H] \times [K]$ and $(s,a) \in \cS\times\cA$, with probability at least $1- (\delta + c_0^M)$, we have
\begin{equation*}
l^{k}_{h}(s,a) \leq 0,  
\end{equation*}
and
\begin{equation*}
    -l^{k}_{h}(s,a) \leq \Bigl(\sqrt{\nu_k(\delta)} + \sqrt{\gamma_k(\delta)}\Bigr) \bigl\|\phi(s,a) \bigr \|_{(\Lambda^{k}_h)^{-1}},
\end{equation*}
where $c_0 = \Phi(1)$.
\end{lemma}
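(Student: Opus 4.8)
Write $l_h^k(s,a)=r_h^k(s,a)+\mathbb{P}_h V_{h+1}^k(s,a)-Q_h^k(s,a)$ and introduce the one-step weight $\theta_h^k$ with $\phi(s,a)^\top\theta_h^k=r_h^k(s,a)+\mathbb{P}_h V_{h+1}^k(s,a)$ for all $(s,a)$, which exists under the linear MDP assumption (Definition~\ref{definition-linear-MDP}); recall $Q_h^k(s,a)=\bigl[\min\{\max_{j\in[M]}\phi(s,a)^\top\widetilde\theta_h^{k,j},\,H-h+1\}\bigr]^+$. Two preliminary facts set up the comparison between $\theta_h^k$ and the $\widetilde\theta_h^{k,j}$. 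First, since $r_h^k\in[0,1]$ and the clipping in Line~\ref{Alg:min-max-for-Q} of Algorithm~\ref{Algorithm-linear MDP reward perturbation} forces $V_{h+1}^k\in[0,H-h]$, we have $0\le\phi(s,a)^\top\theta_h^k\le H-h+1$, so the outer truncation in $Q_h^k$ is the $1$-Lipschitz projection onto $[0,H-h+1]$ and fixes $\phi(s,a)^\top\theta_h^k$; hence it can only help, in either direction. Second, by Proposition~\ref{Proposition:noise} and Definition~\ref{def:noise-distribution}, conditioning on all randomness except $\{\epsilon_h^{k,\tau,j}\}_{\tau,j}$ and $\{\xi_h^{k,j}\}_j$, the perturbations $\zeta_h^{k,j}:=\widetilde\theta_h^{k,j}-\widehat\theta_h^k$ are i.i.d.\ $\mathcal{N}(0,\sigma_k^2(\Lambda_h^k)^{-1})$ across $j$ with $\sigma_k=2\sqrt{\nu_k(\delta)}$, while on $\mathcal{G}(K,H,\delta)$ (Definition~\ref{def:good-event}) also $\|\zeta_h^{k,j}\|_{\Lambda_h^k}\le\sqrt{\gamma_k(\delta)}$ for all $j$. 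Finally, Lemma~\ref{Lemma:good_event_hat_theta_minus_r_plus_Bar_v} (in the parameter form recorded at the end of its proof) supplies, conditioned on $\mathcal{G}(K,H,\delta)$ and with probability at least $1-\delta$, the bound $\|\widehat\theta_h^k-\theta_h^k\|_{\Lambda_h^k}\le\sqrt{\nu_k(\delta)}$; call this event $\mathcal{E}$.

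For the second inequality I would work on $\mathcal{E}\cap\mathcal{G}(K,H,\delta)$. For each $j$, Cauchy--Schwarz with the two norm bounds above gives $|\phi(s,a)^\top\widetilde\theta_h^{k,j}-\phi(s,a)^\top\theta_h^k|\le(\sqrt{\nu_k(\delta)}+\sqrt{\gamma_k(\delta)})\|\phi(s,a)\|_{(\Lambda_h^k)^{-1}}$, so $\max_{j\in[M]}\phi(s,a)^\top\widetilde\theta_h^{k,j}$ lies within $(\sqrt{\nu_k(\delta)}+\sqrt{\gamma_k(\delta)})\|\phi(s,a)\|_{(\Lambda_h^k)^{-1}}$ of $\phi(s,a)^\top\theta_h^k$. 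Applying the $1$-Lipschitz projection (which fixes $\phi(s,a)^\top\theta_h^k$) yields $|Q_h^k(s,a)-\phi(s,a)^\top\theta_h^k|\le(\sqrt{\nu_k(\delta)}+\sqrt{\gamma_k(\delta)})\|\phi(s,a)\|_{(\Lambda_h^k)^{-1}}$, i.e.\ $-l_h^k(s,a)\le(\sqrt{\nu_k(\delta)}+\sqrt{\gamma_k(\delta)})\|\phi(s,a)\|_{(\Lambda_h^k)^{-1}}$. This part is deterministic given $\mathcal{E}\cap\mathcal{G}$, so it only costs probability $\delta$.

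For optimism $l_h^k(s,a)\le0$, since $\phi(s,a)^\top\theta_h^k\in[0,H-h+1]$ and the projection is monotone, it suffices to find some $j$ with $\phi(s,a)^\top\widetilde\theta_h^{k,j}\ge\phi(s,a)^\top\theta_h^k$. On $\mathcal{E}$, $\phi(s,a)^\top\widetilde\theta_h^{k,j}-\phi(s,a)^\top\theta_h^k=\phi(s,a)^\top\zeta_h^{k,j}-\phi(s,a)^\top(\theta_h^k-\widehat\theta_h^k)\ge\phi(s,a)^\top\zeta_h^{k,j}-\sqrt{\nu_k(\delta)}\|\phi(s,a)\|_{(\Lambda_h^k)^{-1}}$, so it is enough that $\phi(s,a)^\top\zeta_h^{k,j}\ge\sqrt{\nu_k(\delta)}\|\phi(s,a)\|_{(\Lambda_h^k)^{-1}}$ for some $j$. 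Conditioning as in Proposition~\ref{Proposition:noise}, $\phi(s,a)^\top\zeta_h^{k,j}\sim\mathcal{N}\bigl(0,\sigma_k^2\|\phi(s,a)\|_{(\Lambda_h^k)^{-1}}^2\bigr)$ independently over $j$, and since $\sigma_k=2\sqrt{\nu_k(\delta)}$ the required excess equals half a standard deviation; Gaussian anti-concentration then gives per-sample success probability at least $\Phi(-1/2)\ge1-\Phi(1)=1-c_0$, and independence gives $\Pr[\text{no }j\text{ succeeds}]\le c_0^M$. A union bound over this and the failure of $\mathcal{E}$ delivers both claims with probability at least $1-(\delta+c_0^M)$ conditioned on $\mathcal{G}(K,H,\delta)$.

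The step I expect to require the most care is the anti-concentration estimate, because the Gaussian law of $\zeta_h^{k,j}$ used there is the \emph{unconditional} one, whereas the statement conditions on $\mathcal{G}(K,H,\delta)$, which truncates each $\zeta_h^{k,j}$ to $\{\|\zeta_h^{k,j}\|_{\Lambda_h^k}\le\sqrt{\gamma_k(\delta)}\}$. This is exactly why the lemma is stated with the crude constant $c_0=\Phi(1)$ rather than $\Phi(1/2)$: since $\gamma_k(\delta)/\sigma_k^2\asymp d\log(d/\delta)$ by Definition~\ref{def:noise-const}, a $\chi^2$-tail bound makes the chance that a single $\zeta_h^{k,j}$ leaves that ball exponentially small in $d$, hence far below the slack $\Phi(-1/2)-\Phi(-1)$, so conditioning on $\mathcal{G}$ lowers each per-sample success probability only negligibly and it remains above $1-c_0$. (If one needs the conclusion uniformly in $(s,a)$ rather than for a fixed pair, one additionally covers $\{\phi(s,a)\}$ by an $\varepsilon$-net, bounds $\|\widetilde\theta_h^{k,j}\|$ via Lemma~\ref{lemma:weight-vector-bound} and the good event, and absorbs the discretization error into the $\|\phi(s,a)\|_{(\Lambda_h^k)^{-1}}$ factors, exactly as in the proof of Lemma~\ref{Lemma:norm_conditioned_high_prob_bound_bar_eta}.)
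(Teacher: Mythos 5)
Your proof is correct and follows essentially the same route as the paper's: the same split into the concentration of $\hat\theta_h^k$ around the Bellman-backup parameter $\theta_h^k$ (costing $\delta$) plus Gaussian anti-concentration of the i.i.d.\ perturbations $\zeta_h^{k,j}$ amplified over $M$ draws (costing $c_0^M$), with the truncation handled by monotonicity/1-Lipschitzness of the projection. The only substantive difference is that the paper proves the bound \emph{uniformly} over $(s,a)$ by running the $\varepsilon$-net over the ellipsoid $\{\|\phi\|_{(\Lambda_h^k)^{-1}}=1\}$ inline --- which is why it demands a full standard deviation of excess (probability $\Phi(-1)$) rather than your half (probability $\Phi(-1/2)$), so as to absorb the discretization error, and why $M=\tilde O(d)$ --- whereas you prove the fixed-$(s,a)$ statement first and correctly defer the covering step (which the downstream regret lemma does require) to your closing parenthetical; your explicit treatment of the conditioning-on-$\mathcal{G}$ subtlety in the anti-concentration step is a point the paper glosses over.
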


\begin{proof}
Applying Lemma \ref{Lemma:good_event_hat_theta_minus_r_plus_Bar_v}, for any $(h, k) \in  [H] \times [K]$ and $(s,a) \in \cS\times\cA$, we have, 
\begin{align}\label{Eq:application_lemma_l_leq0_in}
     \bigl |r^k_h (s,a) +\mathbb{P}_{h}V^{k}_{h+1}(s,a) - \phi(s,a)^\top \hat{\theta}_h^k \bigr|  &\leq c_2 H\sqrt{d\log(Hdk/\delta)}\\
     &=\sqrt{\nu_k(\delta)} \bigl \|\phi(s,a) \bigr \|_{(\Lambda^{k}_h)^{-1}},
\end{align}
with probability at least $1-\delta$.

As we are conditioning on the event $\mathcal{G}(K, H, \delta)$, for any $(h, k) \in  [H] \times [K]$ and $(s,a) \in \cS\times\cA$, we have

\begin{equation}\label{Eq:GKHdelta}
      \max_{j \in [M]} \bigl |\phi(s,a)^\top\zeta_h^{k,j} \bigr|  \leq \sqrt{\gamma_k(\delta)} \bigl \|\phi(s,a) \bigr \|_{(\Lambda^{k}_h)^{-1}}.
\end{equation}

Now from the definition of model prediction error, using \eqref{Eq:application_lemma_l_leq0_in} and \eqref{Eq:GKHdelta},  we get, with probability $1-\delta$,
\begin{equation}\label{Eq:unionbound_prob1_minus2}
    \begin{aligned}[b]
         - l^{k}_{h}(s,a)  &= Q^{k}_h(s,a) - r^k_h (s,a) - \mathbb{P}_{h}V^{k}_{h+1}(s,a)   \\
         & = \min \{\max_{j \in [M]}\phi(s,a)^\top (\hat{\theta}_h^k + \zeta_h^{k,j}), H  \} - r^k_h (s,a) - \mathbb{P}_{h}V^{k}_{h+1}(s,a) \\
         & \leq \max_{j \in [M]}\phi(s,a)^\top (\hat{\theta}_h^k + \zeta_h^{k,j}) - r^k_h (s,a) - \mathbb{P}_{h}V^{k}_{h+1}(s,a) \\
         &= \max_{j \in [M]}\phi(s,a)^\top  \zeta_h^{k,j} - \bigl(r^k_h (s,a) +\mathbb{P}_{h}V^{k}_{h+1}(s,a) - \phi(s,a)^\top \hat{\theta}_h^k \bigr) \\ 
         &\leq \bigl |r^k_h (s,a) +\mathbb{P}_{h}V^{k}_{h+1}(s,a) - \phi(s,a)^\top \hat{\theta}_h^k \bigr | + \max_{j \in [M]} \bigl |\phi(s,a)^\top\zeta_h^{k,j} \bigr | \\
         & \leq \Bigl(\sqrt{\nu_k(\delta)} + \sqrt{\gamma_k(\delta)}\Bigr) \bigl \|\phi(s,a) \bigr \|_{(\Lambda^{k}_h)^{-1}},
    \end{aligned}
\end{equation}

Set $\theta_h^k: \phi(\cdot,\cdot)^\top\theta_h^k=r_h^k(\cdot,\cdot)+\mathbb{P}_{h}V^{k}_{h+1}(\cdot,\cdot)$ to be the true parameter and $\Delta\theta_h^k=\theta_h^k-\Hat{\theta}_h^k$ to be the regression error. By the concentration part, conditioning on good events,
we have $\|\Delta\theta_h^k\|_{\Lambda_h^k}\leq\sqrt{\nu_k(\delta)}$ and $\|\xi^{k,j}_{h}\|_{\Lambda_h^k}\leq\sqrt{\gamma_k(\delta)}$ for all $j\in[M]$.

For all $(h,k)\in[H]\times[K]$ and any $(s,a)\in\cS\times\cA$, we have
\begin{align*}
l^{k}_{h}(s,a) &=    r^k_h (s,a) +\mathbb{P}_{h}V^{k}_{h+1}(s,a) - Q^{k}_h(s,a)\\
    &= r^k_h (s,a) +\mathbb{P}_{h}V^{k}_{h+1}(s,a) - \min \bigl \{H  , \max_{j\in [M]}\phi(s, a)^\top(\hat{\theta}^k_h + \xi^{k,j}_{h}) \bigr \}^+ \\
    &\leq \max\{\phi(s,a)^\top \Delta\theta_h^k-\max_{j\in[M]}\phi(s, a)^\top \xi^{k,j}_{h},0\}
\end{align*}

Now we prove that with high probability, $\max_{j\in[M]}\phi(s, a)^\top \xi^{k,j}_{h}-\phi(s,a)^\top \Delta\theta_h^k\geq0$ for all $(s,a)\in\cS\times\cA$. Note that the inequality still holds if we scale $\phi(s,a)$. Now we assume all $\phi(s,a)$ satisfy $\|\phi(s,a)\|_{(\Lambda_h^k)^{-1}}=1$. Define $\cC(\epsilon)$ to be a $\epsilon$-cover of the ellipsoid $\{\phi|\|\phi\|_{(\Lambda_h^k)^{-1}}=1\}$ with respect to norm $\|\cdot\|_{(\Lambda_h^k)^{-1}}$ and $\log|\cC(\epsilon)|=\Tilde{O}(d\log(\frac{1}{\epsilon}))$. For all $j \in [M]$, we have,
$$\{\xi^{k,j}_{h}\} \sim \cN \bigl (0,4\nu_k(\delta)(\Lambda^{k}_h)^{-1} \bigr).$$

Thus, for all $j \in [M]$ and for all $\phi\in\cC(\epsilon)$ , we have 
$$\bigl\{\phi^\top\xi^{k,j}_{h}\bigr\} \sim \cN \Bigl (0,4\nu_k(\delta)\|\phi\|^2_{(\Lambda^{k}_h)^{-1}}\Bigr).$$
Now, for all $j\in [M]$ and for all $\phi\in\cC(\epsilon)$, we have
\begin{equation*}
    \mathbb{P}\Bigl(\phi^\top \xi^{k,j}_{h} - 2\sqrt{\nu_k(\delta)}\|\phi\|_{(\Lambda^{k}_h)^{-1}} \geq 0 \Bigr) = \Phi(-1).
\end{equation*}
Now
\begin{equation}\label{eq:ineq_with_M}
    \begin{aligned}[b]
        \mathbb{P}\Bigl(\max_{j\in [M]}\phi^\top \xi^{k,j}_{h} - 2\sqrt{\nu_k(\delta)}\|\phi\|_{(\Lambda^{k}_h)^{-1}} \geq 0 \Bigr) &\geq 1 - (1-\Phi(-1))^M\\
        &= 1 - \Phi(1)^M \\
        &= 1 - c_0^M,
    \end{aligned}
\end{equation}
By union bound, with probability $1-|\cC(\epsilon)|c_0^M$, the above bound holds for all elements in $\cC$ simultaneously.

Now condition on the previous event, for $\phi=\phi(s,a)$, we can find a $\phi'\in\cC(\epsilon)$ such that $\|\phi-\phi'\|_{(\Lambda^{k}_h)^{-1}}\leq\epsilon$. Define $\Delta\phi=\phi-\phi'$.

\begin{align*}
    \phi^\top \xi^{k,j}_{h}-\phi^\top \Delta\theta_h^k&=\phi'^\top \xi^{k,j}_{h}-\phi'^\top \Delta\theta_h^k+\Delta\phi^\top \xi^{k,j}_{h}+\Delta\phi^\top\Delta\theta_h^k\\
    &\geq \phi'^\top \xi^{k,j}_{h} - 2\sqrt{\nu_k(\delta)}\|\phi'\|_{(\Lambda^{k}_h)^{-1}} + \sqrt{\nu_k(\delta)}\|\phi'\|_{(\Lambda^{k}_h)^{-1}}-\epsilon\|\xi^{k,j}_{h}\|_{\Lambda^{k}_h}-\epsilon\|\Delta\theta_h^k\|_{\Lambda^{k}_h}\\
    &\geq \phi'^\top \xi^{k,j}_{h} - 2\sqrt{\nu_k(\delta)}\|\phi'\|_{(\Lambda^{k}_h)^{-1}} + \sqrt{\nu_k(\delta)}\|\phi'\|_{(\Lambda^{k}_h)^{-1}}-\epsilon\sqrt{\gamma_k(\delta)}-\epsilon\sqrt{\nu_k(\delta)}\\
\end{align*}

Set $\epsilon=\frac{\sqrt{\nu_k(\delta)}}{\sqrt{\gamma_k(\delta)}+\sqrt{\nu_k(\delta)}}=\Tilde{O}(\frac{1}{\sqrt{d}})$ and we have, with probability $1-|\cC(\epsilon)|c_0^M$,
\begin{align*}
    \max_{j\in [M]}\phi^\top \xi^{k,j}_{h}-\phi^\top \Delta\theta_h^k&\geq \max_{j\in [M]}\phi'^\top \xi^{k,j}_{h} - 2\sqrt{\nu_k(\delta)}\|\phi'\|_{(\Lambda^{k}_h)^{-1}}\\
    &\geq 0.
\end{align*}

Finally we have conditioning on good event $\mathcal{G}(K, H, \delta)$, with probability at least $1-|\cC(\epsilon)|c_0^M$, for all $(s,a)\in\mathcal{S}\times\cA$ , $l_h^k(s,a)\leq0$. As $\log|\cC(\epsilon)|=\Tilde{O}(d\log(\frac{1}{\epsilon}))$, we can set $M=\Tilde{O}(\frac{d\log(1/\epsilon\delta)}{\log(1/c_0)})=\Tilde{O}(d)$ to have probability $1-\delta$.

\end{proof}

\subsection{Regret Bound}

\begin{definition}[Filtrations]\label{def:filtrations} 

We denote the $\sigma$-algbera generated by the set $\cG$ using $\sigma(\cG)$.  We define the following filtrations: 
\begin{align*} 
    \cF^k &\mydef \sigma \left(\{(s_t^i, a_t^i, r_t^i)\}_{\{i,t\} \in [k-1]\times[H]}\,\ \bigcup \,\ \{\xi^{i,j}_{t}\}_{\{i,t,j\} \in [k-1]\times[H]\times[M]}\right),\\
    \cF_{h,1}^k &\mydef \sigma \left(\cF^k \,\ \bigcup \,\ \{(s_t^k, a_t^k, r_t^k)\}_{t\in[h]}\,\ \bigcup \,\ \{\xi^{k,j}_{t}: t \leq h,\,\ 1\leq j \leq M\}\right),\\
    \cF_{h,2}^k &\mydef \sigma\left(\cF_{h,1}^k \,\ \bigcup \,\ \{x_{h+1}^k\}\right).
\end{align*}
\end{definition}

\begin{lemma}[Lemma 4.2 in \cite{cai2019provably}]\label{lemma:regret-decom}

It holds that
\begin{equation} \label{Eq:regret_decomp}
\begin{aligned}[b]
\mathrm{Regret}(T) &= \sum_{k=1}^K \left(V_1^{*}(s_1^k) - V_1^{\pi^k}(s_1^k)\right) \\
&= \underbrace{\sum_{k=1}^K \sum_{ t=1}^H \mathbb{E}_{\pi^*} \left[ \la Q_h^k(s_h, \cdot), \pi_h^*(\cdot \mid s_h) - \pi_h^k(\cdot \mid s_h) \ra \biggiven s_1 = s_1^k\right]}_{\text{(i)}} \\
& \qquad + \underbrace{\sum_{k=1}^K \sum_{ t=1}^H \cD_h^k}_{\text{(ii)}}  \ + \ \underbrace{\sum_{k=1}^K \sum_{ t=1}^H \cM_h^k}_{\text{(iii)}} \\
& \qquad + \underbrace{\sum_{k=1}^K \sum_{h=1}^H \left ( \mathbb{E}_{\pi^*}\left[l^{k}_{h}(s_h,a_h) \mid s_1 = s_1^k \right] - l^{k}_{h}(s_h^k,a_h^k) \right)}_{\text{(iv)}},
\end{aligned}
\end{equation}
where
\begin{align}
    \cD_h^k &\coloneqq \la (Q_h^k - Q_h^{\pi^k})(s_h^k, \cdot), \pi_h^k(\cdot,s_h^k)\ra - (Q_h^k-Q_h^{\pi^k})(s_h^k,a_h^k),\\
    \cM_h^k &\coloneqq \mathbb{P}_h( (V_{h+1}^k - V_{h+1}^{\pi^k}))(s_h^k, a_h^k) - (V_{h+1}^k-V_{h+1}^{\pi^k})(s_h^k).
\end{align}

\end{lemma}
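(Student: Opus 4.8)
The plan is to obtain this decomposition as an \emph{exact} identity --- no concentration is needed --- by combining a generic ``extended value difference'' identity applied with the optimistic value estimates $\{Q_h^k,V_h^k\}$ and comparator $\pi^*$, together with a direct one-step telescoping of $V_1^k-V_1^{\pi^k}$ along the realized trajectory of episode $k$. Since the rewards are deterministic ($r_h^k=r_h$) no further martingale term appears; a stochastic reward would only add one more, handled in the same way.

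First I would record the extended value difference identity: for any two policies $\pi,\hat\pi$ and any functions $\{Q_h\}_{h\in[H]}$, setting $V_h(s)\coloneqq\la Q_h(s,\cdot),\pi_h(\cdot\,|\,s)\ra_\cA$ and $V_{H+1}\equiv 0$, one has
\begin{align*}
V_1(s) - V_1^{\hat\pi}(s) ={}& \sum_{h=1}^H \mathbb{E}_{\hat\pi}\bigl[\la Q_h(s_h,\cdot),(\pi_h-\hat\pi_h)(\cdot\,|\,s_h)\ra_\cA \,\big|\, s_1=s\bigr] \\
&{}+ \sum_{h=1}^H \mathbb{E}_{\hat\pi}\bigl[(Q_h - r_h - \mathbb{P}_h V_{h+1})(s_h,a_h)\,\big|\,s_1=s\bigr].
\end{align*}
This is proved by backward induction on $h$: write $V_h(s)-V_h^{\hat\pi}(s)=\la Q_h(s,\cdot),(\pi_h-\hat\pi_h)(\cdot\,|\,s)\ra_\cA+\la (Q_h-Q_h^{\hat\pi})(s,\cdot),\hat\pi_h(\cdot\,|\,s)\ra_\cA$, substitute the Bellman equation $Q_h^{\hat\pi}=r_h+\mathbb{P}_hV_{h+1}^{\hat\pi}$, and add and subtract $\mathbb{P}_hV_{h+1}$ so as to peel off the $h$-th summand and a conditional expectation of the $(h{+}1)$-st difference. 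I would then instantiate it with $Q_h=Q_h^k$, $\pi=\pi^k$ (so that $V_h=V_h^k$ because $\pi^k$ is greedy), $\hat\pi=\pi^*$, $s=s_1^k$; since $Q_h^k-r_h-\mathbb{P}_hV_{h+1}^k=-l_h^k$, this gives $V_1^*(s_1^k)-V_1^k(s_1^k)=\sum_h\mathbb{E}_{\pi^*}[\la Q_h^k(s_h,\cdot),(\pi_h^*-\pi_h^k)(\cdot\,|\,s_h)\ra_\cA\mid s_1=s_1^k]+\sum_h\mathbb{E}_{\pi^*}[l_h^k(s_h,a_h)\mid s_1=s_1^k]$.

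Next I would telescope $V_1^k-V_1^{\pi^k}$ along $s_1^k,a_1^k,\dots,s_H^k,a_H^k$. Put $\Delta_h\coloneqq V_h^k(s_h^k)-V_h^{\pi^k}(s_h^k)$ with $\Delta_{H+1}=0$. Since $V_h^k$ and $V_h^{\pi^k}$ both integrate the same action law $\pi_h^k(\cdot\,|\,s_h^k)$, we have $\Delta_h=(Q_h^k-Q_h^{\pi^k})(s_h^k,a_h^k)+\cD_h^k$; substituting $Q_h^k=r_h+\mathbb{P}_hV_{h+1}^k-l_h^k$ and $Q_h^{\pi^k}=r_h+\mathbb{P}_hV_{h+1}^{\pi^k}$ turns the first term into $\mathbb{P}_h(V_{h+1}^k-V_{h+1}^{\pi^k})(s_h^k,a_h^k)-l_h^k(s_h^k,a_h^k)=\Delta_{h+1}+\cM_h^k-l_h^k(s_h^k,a_h^k)$. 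Hence $\Delta_h=\Delta_{h+1}+\cD_h^k+\cM_h^k-l_h^k(s_h^k,a_h^k)$, and telescoping over $h$ yields $V_1^k(s_1^k)-V_1^{\pi^k}(s_1^k)=\sum_h\bigl(\cD_h^k+\cM_h^k-l_h^k(s_h^k,a_h^k)\bigr)$.

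Finally I would add the two per-episode identities and sum over $k\in[K]$: the $\la Q_h^k,\pi_h^*-\pi_h^k\ra_\cA$ terms form (i), the $\cD_h^k$ terms (ii), the $\cM_h^k$ terms (iii), and $\mathbb{E}_{\pi^*}[l_h^k\mid s_1^k]-l_h^k(s_h^k,a_h^k)$ forms (iv) --- which is precisely the stated identity (Lemma~4.2 of \citet{cai2019provably}). I expect the only delicate point to be the sign/index bookkeeping: in particular keeping straight that $Q_h^k-r_h-\mathbb{P}_hV_{h+1}^k$ is $-l_h^k$ and not $l_h^k$, and that $\cD_h^k,\cM_h^k$ are exactly the remainders that make ``replace the conditional expectation by the realized value'' an equality. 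There is no substantive obstacle --- every step is an identity, not an inequality or a high-probability bound.
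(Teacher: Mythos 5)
Your proof is correct and is essentially the argument behind the cited Lemma 4.2 of \citet{cai2019provably}, which the paper invokes without reproving: the extended value difference identity with comparator $\pi^*$ (using that $\pi^k$ is greedy so $V_h^k = \la Q_h^k,\pi_h^k\ra_\cA$ and that $Q_h^k - r_h - \mathbb{P}_h V_{h+1}^k = -l_h^k$) combined with a realized-trajectory telescoping of $V_1^k - V_1^{\pi^k}$, every step being an exact identity. Note only that you have (correctly) read the last term of $\cM_h^k$ as $(V_{h+1}^k - V_{h+1}^{\pi^k})(s_{h+1}^k)$ rather than the paper's $(s_h^k)$, which is a typo in the statement rather than a gap in your argument.
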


\begin{lemma}\label{Lemma:performance_difference_regret}
For the policy $\pi_h^k$ at time-step $k$ of episode $h$, it holds that
\begin{equation}\label{Eq:performance_difference_regret}
    \sum_{k=1}^K \sum_{ t=1}^H \mathbb{E}_{\pi^*} \left[ \la Q_h^k(s_h, \cdot), \pi_h^*(\cdot \mid s_h) - \pi_h^k(\cdot \mid s_h) \ra \mid s_1 = s_1^k\right] \leq 0,
\end{equation}
where $T = HK$.
\end{lemma}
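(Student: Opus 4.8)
The statement is an immediate consequence of the fact that, in Algorithm~\ref{Algorithm-linear MDP reward perturbation}, the policy $\pi_h^k$ is chosen \emph{greedily} with respect to the estimated action-value function $Q_h^k$. The plan is therefore to argue the inner product is nonpositive pointwise, and then push the inequality through the expectation and the double sum.

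First I would recall that by construction $\pi_h^k(\cdot \mid s) = \argmax_{a \in \cA} Q_h^k(s,a)$, so $\pi_h^k(\cdot \mid s)$ is (a point mass supported on) a maximizer of $Q_h^k(s,\cdot)$. Consequently, for every $s \in \cS$,
\begin{equation*}
\la Q_h^k(s,\cdot), \pi_h^k(\cdot \mid s) \ra_\cA = \max_{a \in \cA} Q_h^k(s,a) \geq \la Q_h^k(s,\cdot), \pi_h^*(\cdot \mid s) \ra_\cA,
\end{equation*}
where the inequality holds because $\pi_h^*(\cdot \mid s) \in \Delta(\cA)$ is a probability distribution and the average of $Q_h^k(s,\cdot)$ against any distribution is at most its maximum. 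Rearranging gives
\begin{equation*}
\la Q_h^k(s,\cdot), \pi_h^*(\cdot \mid s) - \pi_h^k(\cdot \mid s) \ra_\cA \leq 0 \qquad \text{for all } s \in \cS,\ h \in [H],\ k \in [K].
\end{equation*}

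Next I would note that this inequality holds for every realization of the state $s_h$ under the trajectory distribution induced by $\pi^*$ started at $s_1 = s_1^k$; since the integrand is pointwise nonpositive, its expectation $\mathbb{E}_{\pi^*}[\la Q_h^k(s_h,\cdot), \pi_h^*(\cdot \mid s_h) - \pi_h^k(\cdot \mid s_h)\ra_\cA \mid s_1 = s_1^k]$ is nonpositive as well. Finally, summing nonpositive terms over $h \in [H]$ and $k \in [K]$ yields term (i) $\leq 0$, which is exactly \eqref{Eq:performance_difference_regret}. There is essentially no obstacle here: the only subtlety worth a sentence is that $Q_h^k$ is measurable with respect to $\cF^k$ (it depends only on data from episodes $1,\dots,k-1$ and the noise drawn at episode $k$), so the conditional expectation is well defined; this is the same observation used when treating terms (ii) and (iii) as martingale sequences.
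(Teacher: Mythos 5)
Your proposal is correct and follows the same route as the paper: both rest on the single observation that $\pi_h^k$ is greedy with respect to $Q_h^k$, so $\la Q_h^k(s,\cdot), \pi_h^*(\cdot\mid s) - \pi_h^k(\cdot\mid s)\ra_\cA \le 0$ pointwise, and the expectation and double sum preserve the sign. Your write-up is just a more careful spelling-out of the paper's one-line argument (and handles a possibly stochastic $\pi^*$ cleanly, where the paper informally assumes it is deterministic).
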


\begin{proof}
Obvious from the observation that $\pi^k_h$ acts greedily with respect to $Q_h^k$. Note that if $\pi_h^k = \pi_h^*$ then the difference is $0$. Else the difference is negative since $\pi_h^k$ is deterministic with respect to its action-values meaning it takes a value of $1$ where $\pi_h^*$ would take a value of $0$ and $Q_h^k$ would have the greatest value at the state-action pair that $\pi_h^k$ equals one.
\end{proof}

\begin{lemma}[Bound on Martingale Difference Sequence]\label{Lemma:Martingale-DS}
For any  $\delta  >0$, it holds with probability $1-2\delta/3$ that
\begin{equation} \label{Eq:martingale-DS}
    \sum_{k=1}^K \sum_{ t=1}^H \cD_h^k + \sum_{k=1}^K \sum_{ t=1}^H \cM_h^k \leq 2\sqrt{2H^2T\log(3/\delta)}.
\end{equation}
\end{lemma}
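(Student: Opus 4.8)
The plan is to recognize each of the two sums in \eqref{Eq:martingale-DS} as the sum of a bounded martingale difference sequence and then apply the Azuma--Hoeffding inequality to each one separately, allocating a failure probability of $\delta/3$ to each and taking a union bound.

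First I would check the martingale property against the filtrations of Definition~\ref{def:filtrations}. The functions $Q_h^k$ and $V_h^k$ are computed in the backward pass at the start of episode $k$, hence are $\cF^k$-measurable, and similarly $\pi^k$ and therefore $Q_h^{\pi^k},V_h^{\pi^k}$ are $\cF^k$-measurable. Conditioning on $\cF^k$ together with $(s_1^k,a_1^k,\dots,s_h^k)$, the only fresh randomness entering $\cD_h^k$ is the draw $a_h^k\sim\pi_h^k(\cdot\mid s_h^k)$, and since $\mathbb{E}[(Q_h^k-Q_h^{\pi^k})(s_h^k,a_h^k)\mid\cdot]=\la(Q_h^k-Q_h^{\pi^k})(s_h^k,\cdot),\pi_h^k(\cdot\mid s_h^k)\ra$ we get $\mathbb{E}[\cD_h^k\mid\cdot]=0$. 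Likewise, conditioning on $\cF_{h,1}^k$, the only fresh randomness in $\cM_h^k$ is $s_{h+1}^k\sim\mathbb{P}_h(\cdot\mid s_h^k,a_h^k)$, so $\mathbb{E}[\cM_h^k\mid\cF_{h,1}^k]=0$. Ordering the pairs $(k,h)$ lexicographically and interleaving $\cF_{h,1}^k\subseteq\cF_{h,2}^k$ turns $\{\cD_h^k\}$ and $\{\cM_h^k\}$ into martingale difference sequences adapted to this filtration.

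Next I would bound the per-step increments. Since all action-value functions lie in $[0,H]$ and, on the optimism event (cf.\ Lemma~\ref{Lemma:model_prediction_error_leq_0}), $Q_h^k\ge Q_h^*\ge Q_h^{\pi^k}$, the difference $Q_h^k-Q_h^{\pi^k}$ takes values in $[0,H]$; as $\cD_h^k$ is the gap between a $\pi_h^k$-weighted average and a single evaluation of this difference, $|\cD_h^k|\le H$, and the same argument with $V_{h+1}^k\ge V_{h+1}^{\pi^k}$ gives $|\cM_h^k|\le H$. Azuma--Hoeffding applied to the $T=HK$ terms of $\sum_{k,h}\cD_h^k$ then yields $\sum_{k,h}\cD_h^k\le\sqrt{2H^2T\log(3/\delta)}$ with probability at least $1-\delta/3$, and identically for $\sum_{k,h}\cM_h^k$; on the intersection of these two events, which has probability at least $1-2\delta/3$, adding the two bounds gives \eqref{Eq:martingale-DS}.

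The Azuma--Hoeffding step and the arithmetic are routine. The one place that requires care is the increment bound: the stated constant $2\sqrt{2H^2T\log(3/\delta)}$ is attained only with the sharp one-sided range $[0,H]$ for $Q_h^k-Q_h^{\pi^k}$ and $V_{h+1}^k-V_{h+1}^{\pi^k}$, which relies on the optimism guarantee; with only the naive two-sided range $[-H,H]$ one loses a factor of $2$. So the main (and fairly mild) obstacle is to route the optimism property through cleanly so that the claimed constant is genuinely reached, while keeping the measurability bookkeeping of the interleaved half-step filtrations of Definition~\ref{def:filtrations} consistent.
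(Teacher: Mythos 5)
Your overall structure is exactly the paper's: identify $\{\cD_h^k\}$ and $\{\cM_h^k\}$ as bounded martingale difference sequences with respect to the interleaved filtrations of Definition~\ref{def:filtrations}, apply Azuma--Hoeffding to each double sum with failure probability $\delta/3$, and take a union bound. That part is correct and matches the paper's proof verbatim in spirit.

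The one place you diverge is the increment bound, and there your proposed fix does not go through. The paper simply uses the truncation $Q_h^k \in [0,H-h+1]$ and $Q_h^{\pi^k}, V_{h+1}^{\pi^k}, V_{h+1}^k \in [0,H]$ to conclude $|\cD_h^k| \le 2H$ and $|\cM_h^k| \le 2H$, with no appeal to optimism; the resulting constant is then absorbed (indeed the paper's displayed per-sum bound is itself loose by a factor of $2$ relative to its stated increment bound, which is immaterial since everything lands in a $\tilde{\cO}(\cdot)$ in Theorem~\ref{thm:main_theorem}). Your attempt to recover the literal constant by invoking optimism to force $Q_h^k - Q_h^{\pi^k} \in [0,H]$ is problematic for two reasons. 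First, the optimism guarantee (Lemma~\ref{Lemma:model_prediction_error_leq_0}) is a global event over the whole run, measurable only with respect to the full history including all perturbation noise; conditioning the process on such an event does not preserve $\mathbb{E}[\cD_h^k \mid \cF_{h,1}^k]=0$, so Azuma--Hoeffding no longer applies to the conditioned sequence without an additional stopping-time or truncation argument that you do not supply. Second, that event fails with probability on the order of $\delta + c_0^M$, which would have to be charged against the probability budget, contradicting the lemma's unconditional $1-2\delta/3$ statement. The clean resolution is to drop the optimism detour entirely, use $|\cD_h^k|,|\cM_h^k|\le 2H$, and accept the constant up to an absolute factor, as the paper does.
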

\begin{proof}
Recall that
\begin{align*}
    \cD_h^k &\coloneqq \la (Q_h^k - Q_h^{\pi^k})(s_h^k, \cdot), \pi_h^k(\cdot,s_h^k)\ra - (Q_h^k-Q_h^{\pi^k})(s_h^k,a_h^k),\\
    \cM_h^k &\coloneqq \mathbb{P}_h( (V_{h+1}^k - V_{h+1}^{\pi^k}))(s_h^k, a_h^k) - (V_{h+1}^k-V_{h+1}^{\pi^k})(s_h^k).
\end{align*}

Note that in line \ref{Alg:min-max-for-Q} of Algorithm \ref{Algorithm-linear MDP reward perturbation}, we truncate $Q_h^k$ to the range $[0,H-h]$. Thus for any $(k,t) \in [K]\times[H]$, we have, $|\cD_h^k| \leq 2H$. Moreover, since $\mathbb{E}[\cD_h^k|\cF_{h,1}^k] = 0$, $\cD_h^k$ is a martingale difference sequence. So, applying Azuma-Hoeffding inequality we have with probability at least $1-\delta/3$, 
\begin{equation}\label{Eq:martingale_bound_D}
    \sum_{k=1}^K \sum_{ t=1}^H \cD_h^k \leq \sqrt{2H^2T\log(3/\delta)}, 
\end{equation}
where $T = KH$.

Similarly, $\cM_h^k$ is a martingale difference sequence since for any $(k,t) \in [K]\times[H]$, $|\cM_h^k| \leq 2H$ and  $\mathbb{E}[\cM_h^k|\cF_{h,1}^k] = 0$. Applying Azuma-Hoeffding inequality we have with probability at least $1-\delta/3$, 

\begin{equation}\label{Eq:martingale_bound_M}
    \sum_{k=1}^K \sum_{ t=1}^H \cM_h^k \leq \sqrt{2H^2T\log(3/\delta)}. 
\end{equation}

Applying union bound on \eqref{Eq:martingale_bound_D} and \eqref{Eq:martingale_bound_M} gives \eqref{Eq:martingale-DS} and completes the proof.

\end{proof}

\begin{lemma} \label{lemma:model-prediction-error-bound}
Let $\lambda = 1$ in Algorithm \ref{Algorithm-linear MDP reward perturbation}. For any $\delta > 0 $, conditioned on the event $\mathcal{G}(K, H, \delta)$, we have,
\begin{equation}\label{Eq:model-prediction-error-bound}
    \sum_{k=1}^K \sum_{h=1}^H \bigl ( \mathbb{E}_{\pi^*}\left[l^{k}_{h}(s_h,a_h) | s_1 = s_1^k \bigr] - l^{k}_{h}(s_h^k,a_h^k) \right) \leq \bigl(\sqrt{\nu_K(\delta)} + \sqrt{\gamma_K(\delta)}\bigr)  \sqrt{2dHT\log(1+K)},
\end{equation}
with probability $1- (\delta + c_0^M)$.
\end{lemma}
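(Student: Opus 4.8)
The plan is to reduce the whole statement to Lemma~\ref{Lemma:model_prediction_error_leq_0}. That lemma asserts that, conditioned on the good event $\mathcal{G}(K,H,\delta)$, with probability at least $1-(\delta+c_0^M)$ one has, for all $(h,k)\in[H]\times[K]$ and all $(s,a)\in\cS\times\cA$,
\begin{equation*}
l_h^k(s,a)\le 0 \qquad\text{and}\qquad -l_h^k(s,a)\le \bigl(\sqrt{\nu_k(\delta)}+\sqrt{\gamma_k(\delta)}\bigr)\,\|\phi(s,a)\|_{(\Lambda_h^k)^{-1}}.
\end{equation*}
I would condition on this event for the rest of the argument, so that the probability figure in the statement is already fully accounted for and no further union bound is needed.

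First, since $l_h^k(s,a)\le 0$ everywhere, $\mathbb{E}_{\pi^*}[l_h^k(s_h,a_h)\mid s_1=s_1^k]\le 0$ for every $(h,k)$, so the left-hand side of \eqref{Eq:model-prediction-error-bound} is bounded above by $-\sum_{k=1}^K\sum_{h=1}^H l_h^k(s_h^k,a_h^k)$. Applying the second inequality above at the visited pairs $(s_h^k,a_h^k)$, and using that $\nu_k(\delta)$ and $\gamma_k(\delta)$ are nondecreasing in $k$ (Definition~\ref{def:noise-const}), this is at most
\begin{equation*}
\bigl(\sqrt{\nu_K(\delta)}+\sqrt{\gamma_K(\delta)}\bigr)\sum_{k=1}^K\sum_{h=1}^H \|\phi(s_h^k,a_h^k)\|_{(\Lambda_h^k)^{-1}}.
\end{equation*}

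It then remains to control $\sum_{k=1}^K\sum_{h=1}^H \|\phi(s_h^k,a_h^k)\|_{(\Lambda_h^k)^{-1}}$. Fixing $h$, I would apply Cauchy--Schwarz over $k$ followed by the standard elliptical-potential (log-determinant) lemma --- which applies since $\lambda=1$, $\|\phi(s,a)\|\le1$ and $\Lambda_h^{k+1}=\Lambda_h^k+\phi(s_h^k,a_h^k)\phi(s_h^k,a_h^k)^\top$, so that $\|\phi(s_h^k,a_h^k)\|^2_{(\Lambda_h^k)^{-1}}\le1$ --- to get
\begin{equation*}
\sum_{k=1}^K \|\phi(s_h^k,a_h^k)\|_{(\Lambda_h^k)^{-1}} \le \Bigl(K\sum_{k=1}^K\|\phi(s_h^k,a_h^k)\|^2_{(\Lambda_h^k)^{-1}}\Bigr)^{1/2} \le \sqrt{2dK\log(1+K)}.
\end{equation*}
Summing over $h\in[H]$ and using $T=HK$ gives $\sum_{k,h}\|\phi(s_h^k,a_h^k)\|_{(\Lambda_h^k)^{-1}}\le H\sqrt{2dK\log(1+K)}=\sqrt{2dHT\log(1+K)}$, which combined with the previous display yields the claimed bound. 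The argument is essentially bookkeeping: the only nontrivial analytic ingredient is the elliptical-potential lemma (cf.\ \citep{jin2019provably}), and the only subtlety is the probability accounting, which is inherited verbatim from Lemma~\ref{Lemma:model_prediction_error_leq_0}. So there is no real ``hard part'' here beyond being careful that the $k$-dependence of $\nu_k,\gamma_k$ is monotone so it can be replaced by its value at $K$.
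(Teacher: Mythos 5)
Your proposal is correct and follows essentially the same route as the paper's proof: invoke Lemma~\ref{Lemma:model_prediction_error_leq_0} to kill the $\mathbb{E}_{\pi^*}$ term via $l_h^k\le 0$ and to bound $-l_h^k$ at the visited pairs, pull out $\sqrt{\nu_K(\delta)}+\sqrt{\gamma_K(\delta)}$ by monotonicity, then apply Cauchy--Schwarz per step $h$ together with the elliptical-potential bound (Lemma~\ref{Lemma:sum_of_feature_yasin}) to obtain $\sqrt{2dHT\log(1+K)}$. The probability accounting is likewise the same, inherited directly from Lemma~\ref{Lemma:model_prediction_error_leq_0}.
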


\begin{proof}
By Lemma \ref{Lemma:model_prediction_error_leq_0}, with probability $1- (\delta + c_0^M)$  it holds that
\begin{equation}\label{Eq:E_l_expectation}
    \sum_{k=1}^K \sum_{h=1}^H  \mathbb{E}_{\pi^*}\left[l^{k}_{h}(s_h,a_h) | s_1 = s_1^k \right] \leq 0,
\end{equation}

and
\begin{equation} \label{Eq:neg_l_upperbound}
    \begin{aligned}[b]
        \sum_{k=1}^K \sum_{h=1}^H - l^{k}_{h}(s_h^k,a_h^k) &\leq \sum_{k=1}^K \sum_{h=1}^H \Bigl(\sqrt{\nu_k(\delta)} + \sqrt{\gamma_k(\delta)}\Bigr) \bigl\|\phi(s_h^k,a_h^k) \bigr \|_{(\Lambda^{k}_h)^{-1}}\\
        &\leq \Bigl(\sqrt{\nu_K(\delta)} + \sqrt{\gamma_K(\delta)}\Bigr) \sum_{k=1}^K \sum_{h=1}^H \bigl \|\phi(s_h^k,a_h^k)\bigr \|_{(\Lambda^{k}_h)^{-1}}\\
        &\leq \Bigl(\sqrt{\nu_K(\delta)} + \sqrt{\gamma_K(\delta)}\Bigr)  \sum_{h=1}^H \sqrt{K} \Bigl(\sum_{k=1}^K \bigl \|\phi(s_h^k,a_h^k) \bigr \|^2_{(\Lambda^{k}_h)^{-1}}\Bigr)^{1/2} \\
        &\leq \Bigl(\sqrt{\nu_K(\delta)} + \sqrt{\gamma_K(\delta)}\Bigr)  H\sqrt{2dK\log(1+K)}\\
        &=\Bigl(\sqrt{\nu_K(\delta)} + \sqrt{\gamma_K(\delta)}\Bigr)  \sqrt{2dHT\log(1+K)}.
    \end{aligned}
\end{equation}
Here the second inequality follows from the fact that both $\nu_k(\delta)$ and $\gamma_k(\delta)$ are increasinig in $k$. The third and the fourth inequalities follow from Cauchy-Schwarz inequality and Lemma \ref{Lemma:sum_of_feature_yasin}. Combining \eqref{Eq:E_l_expectation} and \eqref{Eq:neg_l_upperbound} completes the proof.
\end{proof}

\begin{lemma}[Good event probability]\label{Lemma:good-event-for-all-step}
For any $K \in \mathbb{N}$ and any  $\delta > 0$, we would have the event $\mathcal{G}(K, H, \delta')$ with probability at least $1-\delta$, where $\delta' = \delta/MT$.
\end{lemma}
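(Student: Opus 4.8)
The plan is to reduce the bound defining $\mathcal{G}(K,H,\delta')$ to a standard tail bound for the Euclidean norm of a spherical Gaussian vector, and then to take a union bound over the $MT = MHK$ triples $(k,h,j)$.

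First I would fix a triple $(k,h,j)$ with $k\le K$, $h\le H$, $j\le M$. By Proposition~\ref{Proposition:noise} together with the calibration of the noise in Definition~\ref{def:noise-distribution}, conditioned on all the randomness generated up to the construction of $\Lambda_h^k$ and $V_{h+1}^k$ (i.e.\ everything except the perturbation noises $\{\epsilon_h^{k,\tau,j}\}_\tau$ and $\xi_h^{k,j}$), we have $\zeta_h^{k,j}\sim\mathcal N\bigl(0,\sigma_k^2(\Lambda_h^k)^{-1}\bigr)$ with $\sigma_k = 2\sqrt{\nu_k(\delta)}$. Writing $\zeta_h^{k,j} = \sigma_k (\Lambda_h^k)^{-1/2} g$ for $g\sim\mathcal N(0,I_d)$, one gets $\|\zeta_h^{k,j}\|_{\Lambda_h^k} = \sigma_k\|g\|_2$, so the only thing to control is $\|g\|_2$, the norm of a standard $d$-dimensional Gaussian. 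By the standard sub-Gaussian tail bound for $\|g\|_2$ (Lipschitz concentration of Gaussians / concentration of a $\chi$ variable), for any $\delta'>0$ we have $\|g\|_2\le \sqrt d + \sqrt{2\log(1/\delta')}$ with probability at least $1-\delta'$, hence $\|g\|_2\le 3\sqrt{d\log(1/\delta')}$ once $\log(1/\delta')\ge 1$. Therefore, conditionally, with probability at least $1-\delta'$,
\[
\|\zeta_h^{k,j}\|_{\Lambda_h^k}\le 6\sqrt{d\,\nu_k(\delta)\log(1/\delta')}.
\]
Since $\delta' = \delta/(MT)\le\delta$, monotonicity of $\nu_k(\cdot)$ in Definition~\ref{def:noise-const} gives $\nu_k(\delta)\le\nu_k(\delta')$, and trivially $\log(1/\delta')\le\log(d/\delta')$, so for the constant $c_3$ in Definition~\ref{def:noise-const} chosen at least $6$ the right-hand side is at most $\sqrt{\gamma_k(\delta')}$. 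Because this conditional bound holds uniformly over every realization of the conditioning variables, it also holds unconditionally with probability at least $1-\delta'$.

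Finally I would observe that there are exactly $M\cdot H\cdot K = MT$ triples $(k,h,j)$ in range, and apply a union bound over all of them with the per-event failure probability $\delta' = \delta/(MT)$; this shows that with probability at least $1-\delta$ the inequality $\|\zeta_h^{k,j}\|_{\Lambda_h^k}\le\sqrt{\gamma_k(\delta')}$ holds simultaneously for all $(k,h,j)$, which is precisely the event $\mathcal{G}(K,H,\delta')$ of Definition~\ref{def:good-event}. The only step needing real care — and the main (mild) obstacle — is the bookkeeping between the two failure parameters: the perturbation variance is calibrated with $\nu_k(\delta)$ while the good event is stated at the smaller level $\delta'$, so one must verify (as above, via monotonicity and the freedom in the constant $c_3$) that the Gaussian concentration radius at level $\delta'$ is still dominated by $\sqrt{\gamma_k(\delta')}$. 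Everything else is routine.
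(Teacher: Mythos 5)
Your proof is correct and follows essentially the same route as the paper: Gaussian concentration of $\|\zeta_h^{k,j}\|_{\Lambda_h^k}$ (the paper cites its auxiliary Gaussian-concentration lemma where you whiten explicitly and use the $\sqrt{d}+\sqrt{2\log(1/\delta')}$ tail) followed by a union bound over the $MT=MHK$ noise vectors at per-event level $\delta'$. Your explicit check that the variance calibrated at level $\delta$ is still dominated by $\sqrt{\gamma_k(\delta')}$ via monotonicity of $\nu_k(\cdot)$ is a detail the paper glosses over, but it does not change the argument.
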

\begin{proof}
By Lemma \ref{lemma:max_noise_union_bound}, we have, for any fixed $t$ and $k$, the event $\mathcal{G}^k_h(\xi,\delta')$ occurs with probability at least $1- M\delta'$. Recall from Definition \ref{def:good-event} that, 
\begin{equation*}
    \mathcal{G}(K, H, \delta') = \bigcap_{k \leq K}\bigcap_{h \leq H}\mathcal{G}^k_h(\xi,\delta').
\end{equation*}
Now taking union bound over all $(t,k) \in [H]\times [K]$, we have 
\begin{equation*}
    \mathbb{P}(\bigcap_{k \leq K}\bigcap_{h \leq H}\mathcal{G}^k_h(\xi,\delta')) \geq 1 - MT\delta' = 1- \delta,
\end{equation*}
which completes the proof.
\end{proof}

\begin{theorem} \label{thm:main_theorem} Let  $\lambda = 1$, $\sigma = \Tilde{O}(H\sqrt{d})$ and $M = d\log(\delta/9)/\log c_0$, where $c_0 = \Phi(1)$ and $\delta \in (0,1]$. Under Definition \ref{definition-linear-MDP}, the regret of Algorithm \ref{Algorithm-linear MDP reward perturbation} satisfies

$$\text{Regret}(T) \leq \Tilde{\mathcal{O}}(d^{3/2} H^{3/2} \sqrt{T}),$$
with probability at least $1-\delta$.
\end{theorem}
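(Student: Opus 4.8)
The plan is to feed the regret decomposition of Lemma~\ref{lemma:regret-decom} into the term-by-term bounds already established and then substitute $\lambda=1$ together with the prescribed $\sigma$ and $M$. Lemma~\ref{lemma:regret-decom} writes $\mathrm{Regret}(T)$ as $\mathrm{(i)}+\mathrm{(ii)}+\mathrm{(iii)}+\mathrm{(iv)}$. Term $\mathrm{(i)}$ is nonpositive by Lemma~\ref{Lemma:performance_difference_regret}, because $\pi_h^k$ acts greedily with respect to $Q_h^k$. Terms $\mathrm{(ii)}$ and $\mathrm{(iii)}$ are the martingale sums $\sum_{k,h}\cD_h^k$ and $\sum_{k,h}\cM_h^k$, which by Lemma~\ref{Lemma:Martingale-DS} are jointly at most $2\sqrt{2H^2T\log(3/\delta)}$ with probability $1-2\delta/3$.

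The real content is term $\mathrm{(iv)}$, the model prediction error. I would condition on the good event $\cG(K,H,\delta')$ with $\delta'=\delta/(MT)$, which by Lemma~\ref{Lemma:good-event-for-all-step} holds with probability at least $1-\delta$; on this event Proposition~\ref{Proposition:noise} tells us that each perturbation shifts the regressor by an exact $\cN(0,\sigma^2(\Lambda_h^k)^{-1})$ vector, and Lemma~\ref{Lemma:model_prediction_error_leq_0} then gives both $l_h^k(s,a)\le 0$ everywhere (so the $\mathbb{E}_{\pi^*}$ half of $\mathrm{(iv)}$ is nonpositive) and $-l_h^k(s,a)\le(\sqrt{\nu_k(\delta)}+\sqrt{\gamma_k(\delta)})\|\phi(s,a)\|_{(\Lambda_h^k)^{-1}}$, off an event of probability $c_0^M$ with $c_0=\Phi(1)$. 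Summing over $(h,k)$ along the executed trajectory and applying Cauchy--Schwarz with the elliptical-potential estimate, exactly as in Lemma~\ref{lemma:model-prediction-error-bound}, bounds $\mathrm{(iv)}$ by $(\sqrt{\nu_K(\delta)}+\sqrt{\gamma_K(\delta)})\sqrt{2dHT\log(1+K)}$.

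It then remains to insert the noise scales of Definitions~\ref{def:noise-const} and~\ref{def:noise-distribution}. We have $\sqrt{\nu_K(\delta)}=\tilde{\mathcal{O}}(H\sqrt{d})$ and $\sqrt{\gamma_K(\delta)}=\tilde{\mathcal{O}}(\sqrt{d\,\nu_K(\delta)})=\tilde{\mathcal{O}}(Hd)$, so $\mathrm{(iv)}=\tilde{\mathcal{O}}\bigl(Hd\cdot\sqrt{dHT}\bigr)=\tilde{\mathcal{O}}(d^{3/2}H^{3/2}\sqrt{T})$, which dominates the $\tilde{\mathcal{O}}(H\sqrt{T})$ coming from $\mathrm{(ii)}+\mathrm{(iii)}$. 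With $M=d\log(\delta/9)/\log c_0$ one gets $c_0^M=(\delta/9)^d\le\delta/9$, so a union bound over the failure events of Lemmas~\ref{Lemma:Martingale-DS}, \ref{lemma:model-prediction-error-bound} and~\ref{Lemma:good-event-for-all-step}, after replacing $\delta$ by an absolute constant fraction of itself throughout, yields total failure probability at most $\delta$ and hence the stated regret bound.

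I expect the only real friction to be the probabilistic bookkeeping: the bounds on $\mathrm{(ii)}$--$\mathrm{(iv)}$ live on several different high-probability events, some of them conditional on $\cG(K,H,\delta')$, and one must check that the choice $M=\tilde{\mathcal{O}}(d)$ makes the anti-concentration failure $c_0^M$ small enough that rescaling every $\delta$ by constants keeps the union bound under $\delta$ while inflating the regret only by logarithmic factors. The genuinely delicate analytic step --- that $\tilde{\mathcal{O}}(d)$ reward perturbations already suffice to force $l_h^k\le 0$, via an $\varepsilon$-net over the ellipsoid $\{\phi:\|\phi\|_{(\Lambda_h^k)^{-1}}=1\}$ --- is already discharged inside Lemma~\ref{Lemma:model_prediction_error_leq_0}, so the proof of the theorem itself is essentially assembly plus constant chasing.
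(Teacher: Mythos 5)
Your proposal is correct and follows essentially the same route as the paper: the decomposition of Lemma~\ref{lemma:regret-decom}, nonpositivity of term (i), Azuma for (ii)+(iii), the good event plus Lemmas~\ref{Lemma:model_prediction_error_leq_0} and~\ref{lemma:model-prediction-error-bound} for (iv), and a final union bound with the stated $\sigma$ and $M$. If anything, your observation that $c_0^M=(\delta/9)^d\le\delta/9$ is slightly more careful than the paper's claim that $c_0^M=\delta'$, and your explicit substitution $\sqrt{\gamma_K(\delta)}=\Tilde{\mathcal{O}}(Hd)$ makes the source of the $d^{3/2}H^{3/2}$ rate more transparent than the paper's one-line conclusion.
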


\begin{proof}[Proof of Theorem \ref{thm:main_theorem}]
Let $\delta' = \delta/9$. From Lemma \ref{Lemma:good-event-for-all-step}, the event $\mathcal{G}(K, H,\delta')$ happens with probability $1- \delta'$. Combining Lemma \ref{lemma:model-prediction-error-bound} and Lemma \ref{Lemma:good-event-for-all-step} we have that the event $\mathcal{G}(K, H,\delta')$ occurs and it holds that

\begin{equation}\label{Eq:ldeltaprime}
    \sum_{k=1}^K \sum_{h=1}^H \left ( \mathbb{E}_{\pi^*}\left[l^{k}_{h}(s_h,a_h) | s_1 = s_1^k \right] - l^{k}_{h}(s_h^k,a_h^k) \right) \leq \Bigl(\sqrt{\nu_K(\delta')} + \sqrt{\gamma_K(\delta')}\Bigr)  \sqrt{2dHT\log(1+K)},
\end{equation}

with probability at least $(1-\delta')(1- (\delta' + c_0^M))$. Note that $c_0^M = \delta'$ and  $(1-\delta')(1- (\delta' + c_0^M)) > 1 - 3\delta' = 1 - \delta/3$.
The martingale inequalities from Lemma \ref{Lemma:Martingale-DS} happens with probability $1-2\delta/3$.

Applying union bound on \eqref{Eq:performance_difference_regret}, \eqref{Eq:martingale-DS} and \eqref{Eq:ldeltaprime} gives the final regret bound of $\tilde{\mathcal{O}}(d^{3/2}H^{3/2}\sqrt{T})$ completes the proof.
\end{proof}

\section{Auxiliary lemmas }
This section presents several auxiliary lemmas and their proofs. 

\subsection{Gaussian Concentration}
\begin{lemma}[Gaussian Concentration \citep{vershynin2018high}] \label{lemma-gaussian-concentration}

Consider a  $d$-dimensional multivariate normal distribution $\eta \sim \cN (0, A\Lambda^{-1})$ where $A$ is a scalar. For any $\delta > 0$, with probability $1-\delta$,

\begin{equation*}
    \|\eta\|_{\Lambda} \leq c\sqrt{dA\log(d/\delta)},
\end{equation*}
where $c$ is some absolute constant. For $d=1$, we have $c=\sqrt{2}$.

\end{lemma}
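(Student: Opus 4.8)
The bound is a standard Gaussian deviation estimate; the plan is to reduce it to a tail bound for the Euclidean norm of a spherical Gaussian by whitening. Let $\Lambda^{1/2}$ be the symmetric positive-definite square root of $\Lambda$ (which exists since in every application $\Lambda$ is strictly positive definite, e.g. $\Lambda = \Lambda_h^k \succeq \lambda I$), and set $g := A^{-1/2}\Lambda^{1/2}\eta$, equivalently $\eta = A^{1/2}\Lambda^{-1/2}g$. Since $\eta \sim \mathcal{N}(0, A\Lambda^{-1})$ and $A$ is a scalar, $g$ is centered Gaussian with covariance $A^{-1}\Lambda^{1/2}(A\Lambda^{-1})\Lambda^{1/2} = \Lambda^{1/2}\Lambda^{-1}\Lambda^{1/2} = I_d$, so $g \sim \mathcal{N}(0, I_d)$. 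Then $\|\eta\|_\Lambda^2 = \eta^\top \Lambda \eta = A\, g^\top \Lambda^{-1/2}\Lambda\Lambda^{-1/2} g = A\|g\|_2^2$, i.e. $\|\eta\|_\Lambda = \sqrt{A}\,\|g\|_2$ with $\|g\|_2^2 \sim \chi^2_d$. So it remains to show $\|g\|_2 \leq c\sqrt{d\log(d/\delta)}$ with probability at least $1-\delta$ and then multiply through by $\sqrt{A}$.

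For the $\chi^2_d$ tail I would use either of two routes. The cleanest is a coordinatewise union bound: each $g_i \sim \mathcal{N}(0,1)$ satisfies $\mathbb{P}(|g_i| \geq t) \leq 2e^{-t^2/2}$, so with $t = \sqrt{2\log(2d/\delta)}$ we get $\mathbb{P}(|g_i|\geq t) \leq \delta/d$; a union bound over $i \in [d]$ gives $\|g\|_\infty \leq \sqrt{2\log(2d/\delta)}$ with probability at least $1-\delta$, hence $\|g\|_2 \leq \sqrt{d}\,\|g\|_\infty \leq \sqrt{2d\log(2d/\delta)}$, so $\|\eta\|_\Lambda \leq \sqrt{2Ad\log(2d/\delta)} \leq c\sqrt{Ad\log(d/\delta)}$ for an absolute $c$. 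Alternatively, the Laurent--Massart deviation inequality $\mathbb{P}(\|g\|_2^2 \geq d + 2\sqrt{dt} + 2t) \leq e^{-t}$ with $t = \log(1/\delta)$ gives the slightly sharper $\|g\|_2 \leq \sqrt{d + 2\sqrt{d\log(1/\delta)} + 2\log(1/\delta)} \leq c\sqrt{d\log(1/\delta)}$, which also suffices since $\log(1/\delta) \leq \log(d/\delta)$. For the special case $d = 1$, $\Lambda$ is a positive scalar and $\|\eta\|_\Lambda = \sqrt{\Lambda}\,|\eta| = \sqrt{A}\,|Z|$ with $Z \sim \mathcal{N}(0,1)$; the Gaussian tail $\mathbb{P}(|Z| \geq t) \leq e^{-t^2/2}$ with $t = \sqrt{2\log(1/\delta)}$ gives $|Z| \leq \sqrt{2\log(1/\delta)}$ with probability at least $1-\delta$, i.e. the constant $c = \sqrt{2}$.

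There is no substantive obstacle here: the only care needed is bookkeeping of the absolute constant and of whether the logarithm reads $\log(d/\delta)$ or $\log(1/\delta)$ — the coordinate-union-bound route introduces an extra $\log d$ factor, harmless because the statement already permits $\log(d/\delta)$, whereas the $\chi^2$ route avoids it and is tight up to constants. The only genuine hypothesis being used is $\Lambda \succ 0$, so that $\Lambda^{\pm 1/2}$ are well-defined; this holds wherever the lemma is invoked in the paper.
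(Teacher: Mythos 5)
Your proof is correct. The paper does not actually prove this lemma---it is stated with a citation to Vershynin (2018)---so there is no in-paper argument to compare against; your whitening reduction $\eta = A^{1/2}\Lambda^{-1/2}g$ with $g\sim\mathcal{N}(0,I_d)$ followed by a coordinatewise union bound (or a $\chi^2_d$ tail inequality) is the standard derivation and yields exactly the stated form, and your $d=1$ computation via $\mathbb{P}(|Z|\ge t)\le e^{-t^2/2}$ recovers the constant $c=\sqrt{2}$ in precisely the way the paper uses it (the bound $|\xi|\le\sigma\sqrt{2\log(1/\delta)}$ in the good-event lemma). The only caveat---an artifact of the lemma statement rather than of your argument---is that for $\delta$ close to $1$ the right-hand side $c\sqrt{dA\log(d/\delta)}$ degenerates, so the ``absolute constant'' must be read as absorbing the $\log(2d/\delta)$-versus-$\log(d/\delta)$ slack from the union bound (or the additive $d$ in the Laurent--Massart route), which only matters in that uninteresting regime.
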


\begin{lemma}\label{lemma:max_noise_union_bound}

Consider a  $d$-dimensional multivariate normal distribution $  \cN (0, A\Lambda^{-1})$ where $A$ is a scalar. Let $\eta_1, \eta_2, \ldots,\eta_M$ be $M$ independent samples from the distribution. Then for any $\delta > 0$

\begin{equation*}
\mathbb{P}\left(\max_{j \in [M]}\|\eta_{j}\|_{\Lambda} \leq c\sqrt{dA\log(d/\delta)}\right) \geq 1- M\delta, 
\end{equation*}
where $c$ is some absolute constant. 

\end{lemma}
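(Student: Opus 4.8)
The plan is to reduce this to the single-sample Gaussian concentration bound of Lemma~\ref{lemma-gaussian-concentration} via a union bound over the $M$ samples. Concretely, for each fixed $j \in [M]$, since $\eta_j$ is distributed as $\cN(0, A\Lambda^{-1})$, Lemma~\ref{lemma-gaussian-concentration} applied with failure probability $\delta$ yields
\[
  \mathbb{P}\!\left(\|\eta_j\|_\Lambda > c\sqrt{dA\log(d/\delta)}\right) \le \delta,
\]
where $c$ is the absolute constant from that lemma. Writing $E_j$ for the complementary (good) event $\{\|\eta_j\|_\Lambda \le c\sqrt{dA\log(d/\delta)}\}$, this says $\mathbb{P}(E_j^{\mathrm{c}}) \le \delta$.

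Next I would observe that the event in the statement is precisely $\bigcap_{j\in[M]} E_j$, since the maximum of the $M$ norms is at most the threshold if and only if every individual norm is. A union bound over $j$ then gives
\[
  \mathbb{P}\!\left(\bigcup_{j\in[M]} E_j^{\mathrm{c}}\right) \le \sum_{j=1}^M \mathbb{P}(E_j^{\mathrm{c}}) \le M\delta,
\]
so that $\mathbb{P}\!\left(\max_{j\in[M]}\|\eta_j\|_\Lambda \le c\sqrt{dA\log(d/\delta)}\right) = \mathbb{P}\!\left(\bigcap_{j\in[M]} E_j\right) \ge 1 - M\delta$, which is the claim.

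There is no real obstacle here: the argument is a one-line union bound, and in particular it does not use independence of $\eta_1,\dots,\eta_M$ (independence is exploited elsewhere, e.g.\ when amplifying the probability of optimism in Lemma~\ref{Lemma:model_prediction_error_leq_0}, not in this step). The only point worth noting is that the same threshold $c\sqrt{dA\log(d/\delta)}$ works simultaneously for all $j$ because the $\eta_j$ are identically distributed, so no rescaling of $\delta$ across samples is needed beyond the factor $M$ already displayed.
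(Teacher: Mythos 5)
Your proof is correct and matches the paper's own argument: apply the single-sample Gaussian concentration bound (Lemma~\ref{lemma-gaussian-concentration}) to each $\eta_j$ and take a union bound over $j \in [M]$. Your added observation that independence of the samples is not needed for this step is accurate, though the paper does not remark on it.
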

\begin{proof}
From Lemma \ref{lemma-gaussian-concentration}, for a fixed $j \in [M]$, with probability at least $1-\delta$ we would have 
\begin{equation*}
    \|\eta\|_{\Lambda} \leq c\sqrt{dA\log(d/\delta)}.
\end{equation*}

Applying union bound over all $M$ samples completes the proof.
\end{proof}

\subsection{Inequalities for summations}
\begin{lemma}[Lemma D.1 in ~\cite{jin2019provably}] \label{Lemma:D.1-chijin}

Let $\Lambda_h = \lambda I + \sum_{i=1}^t \phi_i\phi_i^\top$, where $\phi_i \in \mathbb{R}^d$ and $\lambda > 0$. Then it holds that
\begin{equation*}
    \sum_{i=1}^t \phi_i^\top(\Lambda_h)^{-1}\phi_i \leq d.
\end{equation*}
\end{lemma}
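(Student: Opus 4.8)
The plan is to reduce the scalar sum to a trace computation and exploit the defining identity $\Lambda_h = \lambda I + \sum_{i=1}^t \phi_i\phi_i^\top$. The starting observation is that each summand is a scalar, so it equals its own trace: since $\phi_i^\top (\Lambda_h)^{-1}\phi_i \in \mathbb{R}$, the cyclic property of the trace gives $\phi_i^\top (\Lambda_h)^{-1}\phi_i = \mathrm{tr}\bigl((\Lambda_h)^{-1}\phi_i\phi_i^\top\bigr)$. Summing over $i$ and using linearity of the trace then collapses the whole sum into a single trace of a matrix product:
\begin{equation*}
    \sum_{i=1}^t \phi_i^\top (\Lambda_h)^{-1}\phi_i = \mathrm{tr}\Bigl((\Lambda_h)^{-1}\sum_{i=1}^t \phi_i\phi_i^\top\Bigr).
\end{equation*}

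Next I would substitute $\sum_{i=1}^t \phi_i\phi_i^\top = \Lambda_h - \lambda I$, which is immediate from the definition of $\Lambda_h$. This turns the expression into $\mathrm{tr}\bigl((\Lambda_h)^{-1}(\Lambda_h - \lambda I)\bigr) = \mathrm{tr}(I_d) - \lambda\,\mathrm{tr}\bigl((\Lambda_h)^{-1}\bigr) = d - \lambda\,\mathrm{tr}\bigl((\Lambda_h)^{-1}\bigr)$, where $\mathrm{tr}(I_d) = d$ because $\Lambda_h \in \mathbb{R}^{d\times d}$.

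To finish, I would note that $\Lambda_h = \lambda I + \sum_i \phi_i\phi_i^\top$ is positive definite: it is the sum of the positive definite matrix $\lambda I$ (as $\lambda > 0$) and the positive semidefinite matrix $\sum_i \phi_i\phi_i^\top$. Hence $(\Lambda_h)^{-1}$ is also positive definite, its eigenvalues are strictly positive, and therefore $\mathrm{tr}\bigl((\Lambda_h)^{-1}\bigr) > 0$. Consequently $\lambda\,\mathrm{tr}\bigl((\Lambda_h)^{-1}\bigr) \geq 0$, and we conclude
\begin{equation*}
    \sum_{i=1}^t \phi_i^\top (\Lambda_h)^{-1}\phi_i = d - \lambda\,\mathrm{tr}\bigl((\Lambda_h)^{-1}\bigr) \leq d.
\end{equation*}

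This argument is entirely elementary, so there is no genuine obstacle; the only point requiring any care is justifying that $\Lambda_h^{-1}$ is well defined and positive definite, which is exactly what the assumption $\lambda > 0$ guarantees. The single conceptual step that makes everything work is the identity $\sum_i \phi_i\phi_i^\top = \Lambda_h - \lambda I$, which lets the inverse cancel against the matrix and reduces the problem to reading off the dimension $d$ and discarding the nonnegative regularization term $\lambda\,\mathrm{tr}((\Lambda_h)^{-1})$.
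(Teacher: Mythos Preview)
Your proof is correct. The paper does not actually supply its own proof of this lemma; it merely states it as an auxiliary result and cites \cite{jin2019provably}, so there is nothing in the paper to compare against. That said, the trace argument you give --- rewriting each term as $\mathrm{tr}\bigl((\Lambda_h)^{-1}\phi_i\phi_i^\top\bigr)$, summing, substituting $\sum_i \phi_i\phi_i^\top = \Lambda_h - \lambda I$, and discarding the nonnegative term $\lambda\,\mathrm{tr}\bigl((\Lambda_h)^{-1}\bigr)$ --- is precisely the standard proof of this fact (and is the one used in \cite{jin2019provably}).
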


\begin{lemma}[Lemma 11 in ~\cite{abbasi2011improved}]\label{Lemma:sum_of_feature_yasin}

Using the same notation as defined in this paper
\begin{equation*}
    \sum_{k=1}^K \bigl\|\phi(s_h^k,a_h^k)\bigr\|^2_{(\Lambda^k_h)^{-1}} \leq 2d\log\Bigl(\frac{\lambda + K}{\lambda}\Bigr).
\end{equation*}

\end{lemma}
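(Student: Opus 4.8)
The plan is to reproduce the standard ``elliptical potential'' argument of \cite{abbasi2011improved}. Abbreviate $\phi_k := \phi(s_h^k,a_h^k)$ and recall that $\Lambda_h^{k+1} = \Lambda_h^k + \phi_k\phi_k^\top$ with $\Lambda_h^1 = \lambda I$. The first step is to observe that each summand is small: $\|\phi_k\|^2_{(\Lambda_h^k)^{-1}} \le \|\phi_k\|_2^2/\lambda \le 1/\lambda \le 1$, using the feature normalization $\|\phi(s,a)\|\le 1$ from Definition~\ref{definition-linear-MDP} and the choice $\lambda = 1$. This places each term in the range where $x \le 2\log(1+x)$ holds, so it suffices to bound $\sum_{k=1}^K \log\bigl(1 + \|\phi_k\|^2_{(\Lambda_h^k)^{-1}}\bigr)$.

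The core of the argument is the matrix-determinant identity $\det(\Lambda_h^{k+1}) = \det(\Lambda_h^k + \phi_k\phi_k^\top) = \det(\Lambda_h^k)\bigl(1 + \|\phi_k\|^2_{(\Lambda_h^k)^{-1}}\bigr)$. Telescoping this over $k = 1,\dots,K$ turns the sum of logarithms into $\log\det(\Lambda_h^{K+1}) - \log\det(\lambda I) = \log\det(\Lambda_h^{K+1}) - d\log\lambda$. I would then upper bound $\det(\Lambda_h^{K+1})$ via AM--GM on its (positive) eigenvalues, using $\mathrm{tr}(\Lambda_h^{K+1}) = d\lambda + \sum_{k=1}^K\|\phi_k\|_2^2 \le d\lambda + K$, to obtain $\det(\Lambda_h^{K+1}) \le \bigl((d\lambda+K)/d\bigr)^d$, hence $\sum_{k=1}^K \log\bigl(1+\|\phi_k\|^2_{(\Lambda_h^k)^{-1}}\bigr) \le d\log\bigl(1 + K/(d\lambda)\bigr) \le d\log\bigl((\lambda+K)/\lambda\bigr)$. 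Feeding this back through $x \le 2\log(1+x)$ yields the claimed bound $\sum_{k=1}^K \|\phi_k\|^2_{(\Lambda_h^k)^{-1}} \le 2d\log\bigl((\lambda+K)/\lambda\bigr)$.

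There is no genuine obstacle in this lemma --- it is essentially a textbook computation. The only subtlety worth flagging is that the linearization $x \le 2\log(1+x)$ is valid only for $x \in [0,1]$, which is exactly why one needs $\lambda \ge 1$ (here $\lambda = 1$) together with $\|\phi\|\le 1$; without these the cleanest statement one can make is $\sum_{k=1}^K \min\{1,\|\phi_k\|^2_{(\Lambda_h^k)^{-1}}\} \le 2d\log\bigl((\lambda+K)/\lambda\bigr)$, which is in any case the form invoked in Lemma~\ref{lemma:model-prediction-error-bound}.
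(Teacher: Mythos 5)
Your proof is correct and is exactly the standard elliptical potential argument from the cited reference: the paper itself imports Lemma~\ref{Lemma:sum_of_feature_yasin} from \citet{abbasi2011improved} without proof, and your chain (each term bounded by $1/\lambda\le 1$, the linearization $x\le 2\log(1+x)$ on $[0,1]$, the determinant identity $\det(\Lambda_h^k+\phi_k\phi_k^\top)=\det(\Lambda_h^k)(1+\|\phi_k\|^2_{(\Lambda_h^k)^{-1}})$, telescoping, and AM--GM on the trace) reproduces that proof faithfully. Your closing remark is also the right one to flag: the clean bound without the $\min\{1,\cdot\}$ truncation genuinely relies on $\lambda=1$ and $\|\phi\|\le 1$, both of which hold in this paper's setting.
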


\begin{lemma}\label{lemma:linear-algebra-norm-ineq}
Let $A \in \mathbb{R}^{d\times d}$ be a positive definite matrix where its largest eigenvalue $\lambda_{max}(A) \leq \lambda$. Let $x_1, \ldots, x_{k}$ be $k$ vectors in $\mathbb{R}^d$.  Then it holds that
\begin{equation*}
   \Bigl \|A\sum_{i=1}^{k}x_i \Bigr \|  \leq \sqrt{\lambda k}\Bigl(\sum_{i=1}^{k}\|x_i\|^2_{A}\Bigr)^{1/2}.
\end{equation*}
\end{lemma}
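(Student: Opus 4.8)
The plan is to peel the matrix $A$ off using a square-root factorization, reduce the claim to a bound on $\|\sum_i x_i\|_A$, and then close it out with the triangle inequality for $\|\cdot\|_A$ followed by Cauchy--Schwarz. Since $A$ is (symmetric) positive definite, write $A = A^{1/2}A^{1/2}$ with $A^{1/2}$ symmetric positive definite. For any vector $w$,
\[
\Bigl\| A w \Bigr\| = \Bigl\| A^{1/2}\,(A^{1/2} w)\Bigr\| \le \bigl\| A^{1/2}\bigr\|_{\mathrm{op}}\,\bigl\| A^{1/2} w\bigr\| = \sqrt{\lambda_{\max}(A)}\;\bigl\| A^{1/2} w\bigr\| \le \sqrt{\lambda}\;\bigl\| A^{1/2} w\bigr\| ,
\]
where I used that the Euclidean operator norm of the symmetric positive definite matrix $A^{1/2}$ is $\sqrt{\lambda_{\max}(A)}$, together with the hypothesis $\lambda_{\max}(A)\le\lambda$. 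Because $\|A^{1/2}w\|^2 = w^\top A w = \|w\|_A^2$, taking $w=\sum_{i=1}^k x_i$ already gives $\bigl\|A\sum_{i=1}^k x_i\bigr\| \le \sqrt{\lambda}\,\bigl\|\sum_{i=1}^k x_i\bigr\|_A$.

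Next I would bound $\bigl\|\sum_{i=1}^k x_i\bigr\|_A$. Positive definiteness of $A$ makes $\|\cdot\|_A$ a genuine norm, so the triangle inequality gives $\bigl\|\sum_{i=1}^k x_i\bigr\|_A \le \sum_{i=1}^k \|x_i\|_A$, and then Cauchy--Schwarz applied to the all-ones vector and $(\|x_1\|_A,\dots,\|x_k\|_A)$ in $\mathbb{R}^k$ yields $\sum_{i=1}^k \|x_i\|_A \le \sqrt{k}\,\bigl(\sum_{i=1}^k \|x_i\|_A^2\bigr)^{1/2}$. Chaining the three inequalities,
\[
\Bigl\| A\sum_{i=1}^k x_i\Bigr\| \le \sqrt{\lambda}\,\Bigl\|\sum_{i=1}^k x_i\Bigr\|_A \le \sqrt{\lambda}\sum_{i=1}^k \|x_i\|_A \le \sqrt{\lambda k}\,\Bigl(\sum_{i=1}^k \|x_i\|_A^2\Bigr)^{1/2},
\]
which is the claimed bound. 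If one prefers to avoid the square-root factorization, an equivalent route is to use $\|v\| = \sup_{\|u\|\le 1} u^\top v$: for any unit $u$, $u^\top A \sum_i x_i = \sum_i \langle A^{1/2}u, A^{1/2}x_i\rangle \le \|u\|_A \sum_i \|x_i\|_A \le \sqrt{\lambda}\sum_i \|x_i\|_A$, and then Cauchy--Schwarz as before; taking the supremum over $u$ gives the same conclusion.

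I do not expect a genuine obstacle here, as this is a routine linear-algebra inequality. The only two points that require a little care are (i) identifying the operator-norm factor correctly as $\|A^{1/2}\|_{\mathrm{op}}=\sqrt{\lambda_{\max}(A)}$, so that the final constant is $\sqrt{\lambda k}$ rather than $\sqrt{\lambda}\,k$ or $\lambda\sqrt{k}$, and (ii) invoking positive definiteness of $A$ precisely where it is needed, namely so that $\|\cdot\|_A$ is a norm and the triangle-inequality step is valid.
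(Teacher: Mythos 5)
Your proof is correct and follows essentially the same route as the paper: both first peel off a factor $\sqrt{\lambda}$ via $\|Aw\|\le\|A^{1/2}\|_{\mathrm{op}}\|A^{1/2}w\|=\sqrt{\lambda}\,\|w\|_A$, and then bound $\|\sum_i x_i\|_A$ by $\sqrt{k}\,(\sum_i\|x_i\|_A^2)^{1/2}$ (the paper expands the square as a double sum and applies Cauchy--Schwarz termwise, which is equivalent to your triangle-inequality-plus-Cauchy--Schwarz step). No issues.
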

\begin{proof}
For any vector $v \in \mathbb{R}^d$, 
\begin{equation*}
    \begin{split}
        \|Av\| &= \|A^{1/2} A^{1/2} v\| \\ 
        &\leq \|A^{1/2}\| \|A^{1/2} v\| \\
        &= \|A^{1/2}\| \|v\|_A.
    \end{split}
\end{equation*}
Here the inequality follows from the definition of the operator norm $\|A^{1/2}\|$. Moreover, $\|A^{1/2}\| \leq \sqrt{\lambda}$ since $\lambda_{max}(A) \leq \lambda$. Thus,
\begin{equation} \label{eq:lambda}
    \left \|A\sum_{i=1}^{k}x_i\right \|  \leq \sqrt{\lambda} \left \|\sum_{i=1}^{k}x_i \right \|_A.
\end{equation}

Now by Cauchy-Schwarz inequality,
\begin{align}\label{Eq:Cauchy}
        \left \|\sum_{i=1}^{k} x_i \right \|_A^2 &= \sum_{i=1}^{k} \sum_{j=1}^{k} x_i^\top A x_j \notag\\
        &\leq \sum_{i=1}^{k}\sum_{j=1}^{k} \|x_i\|_A \|x_j\|_A \notag\\
        &= \left(\sum_{i=1}^{k} \|x_i\|_A\right)^2 \notag\\
        &\leq k \sum_{i=1}^{k} \|x_i\|_A^2.
\end{align}

Combining \eqref{eq:lambda} and \eqref{Eq:Cauchy}, proves the lemma.
\end{proof}

\subsection{Covering numbers and self-normalized processes}

\begin{lemma}[Lemma D.4 in \cite{jin2019provably}]\label{lemma:D.4-chi-jin}
Let $\{s_i\}_{i=1}^\infty$ be a stochastic process on state space $\cS$ with corresponding filtration $\{\cF_i\}_{i=1}^\infty$. Let $\{\phi_i\}_{i=1}^\infty$ be an $\mathbb{R}^d$-valued stochastic process where $\phi_i \in \cF_{i-1}$, and $\|\phi_i\| \leq 1$. Let $\Lambda_k = \lambda I + \sum_{i=1}^k \phi_i \phi_i^\top$. Then for any $\delta > 0$, with probability at least $1-\delta$, for all $k \geq 0$, and any $V \in \cV$ with $\sup_{s \in \cS} |V(s)| \leq H$, we have

\begin{equation*}
    \Bigl\|\sum_{i=1}^k \phi_i \bigl \{ V(s_i) - \mathbb{E}[V(s_i) \given \cF_{i-1} ]\bigr \}  \Bigr\|^2_{\Lambda_k^{-1}} \leq 4H^2 \Bigl [\frac{d}{2}\log\Bigl(\frac{k+\lambda}{\lambda}\Bigr) + \log{\frac{\cN_\varepsilon}{\delta}}\Bigr] + \frac{8k^2\epsilon^2}{\lambda},
\end{equation*}
where $\cN_\varepsilon$ is the $\varepsilon$-covering number of $\cV$ with respect to the distance $\text{dist}(V,V') = \sup_{s \in \cS} |V(s) - V'(s)|$.
\end{lemma}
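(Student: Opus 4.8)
The plan is to prove this as a uniform-over-$\cV$ version of the self-normalized martingale concentration bound of \cite{abbasi2011improved}; indeed this is precisely Lemma~D.4 of \cite{jin2019provably}, so one option is simply to cite it, but here is how I would reconstruct the argument. First I would handle a \emph{single fixed} $V$ with $\sup_s|V(s)|\le H$: set $\eta_i:=V(s_i)-\mathbb{E}[V(s_i)\given\cF_{i-1}]$, so that $\mathbb{E}[\eta_i\given\cF_{i-1}]=0$, $|\eta_i|\le H$, and $\{\phi_i\eta_i\}$ is a vector-valued martingale difference sequence adapted to $\{\cF_i\}$ (recall $\phi_i\in\cF_{i-1}$). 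Applying the self-normalized tail bound of \cite{abbasi2011improved} (the uniform-in-$k$ version) with sub-Gaussian parameter $O(H)$ gives, with probability at least $1-\delta$ and simultaneously for all $k\ge 0$,
\begin{equation*}
\Bigl\|\sum_{i=1}^k\phi_i\eta_i\Bigr\|^2_{\Lambda_k^{-1}}\le 4H^2\Bigl[\tfrac12\log\frac{\det\Lambda_k}{\det(\lambda I)}+\log\frac1\delta\Bigr].
\end{equation*}
Since $\|\phi_i\|\le 1$, the eigenvalues of $\Lambda_k$ lie between $\lambda$ and $\lambda+k$, so by AM--GM $\det\Lambda_k/\det(\lambda I)\le\bigl((\lambda+k)/\lambda\bigr)^d$, which converts the determinant term into $\tfrac d2\log\frac{k+\lambda}{\lambda}$.

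Next I would upgrade to all of $\cV$ by a covering argument. Let $\cC_\varepsilon\subseteq\cV$ be a minimal $\varepsilon$-cover in the metric $\sup_s|V(s)-V'(s)|$, of cardinality $\cN_\varepsilon$; apply the fixed-$V$ bound to each $V'\in\cC_\varepsilon$ with confidence $1-\delta/\cN_\varepsilon$ and union bound, which replaces $\log(1/\delta)$ by $\log(\cN_\varepsilon/\delta)$ and makes the bound hold for every net element at once. For an arbitrary $V\in\cV$, choose $V'\in\cC_\varepsilon$ with $\|V-V'\|_\infty\le\varepsilon$ and decompose $\eta_i=\eta_i'+\Delta_i$, where $\eta_i'=V'(s_i)-\mathbb{E}[V'(s_i)\given\cF_{i-1}]$ and $|\Delta_i|\le 2\varepsilon$. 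By the triangle inequality, Cauchy--Schwarz, and the crude estimate $\|\phi_i\|^2_{\Lambda_k^{-1}}\le\lambda^{-1}$,
\begin{equation*}
\Bigl\|\sum_{i=1}^k\phi_i\eta_i\Bigr\|_{\Lambda_k^{-1}}\le\Bigl\|\sum_{i=1}^k\phi_i\eta_i'\Bigr\|_{\Lambda_k^{-1}}+2\varepsilon\sqrt{k}\Bigl(\sum_{i=1}^k\|\phi_i\|^2_{\Lambda_k^{-1}}\Bigr)^{1/2}\le\Bigl\|\sum_{i=1}^k\phi_i\eta_i'\Bigr\|_{\Lambda_k^{-1}}+\frac{2\varepsilon k}{\sqrt{\lambda}}.
\end{equation*}
Squaring, using $(a+b)^2\le 2a^2+2b^2$, and plugging the net bound from the first step into the $\eta_i'$ term produces exactly $4H^2\bigl[\tfrac d2\log\tfrac{k+\lambda}{\lambda}+\log\tfrac{\cN_\varepsilon}{\delta}\bigr]+\tfrac{8k^2\varepsilon^2}{\lambda}$, as claimed.

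The only genuinely nontrivial ingredient is the self-normalized martingale bound of \cite{abbasi2011improved} invoked in the first step (whose proof rests on the method of mixtures / pseudo-maximization against a Gaussian prior on the regressor); the rest is bookkeeping with the determinant--trace inequality and the $\varepsilon$-net transfer. The conceptual point worth emphasizing — and the reason the covering step is not optional — is that in the intended application $V$ will be the data-dependent function $V_{h+1}^k$, so for that $V$ the increments $\phi_i\eta_i$ are \emph{not} a martingale difference sequence; discretizing $\cV$ and union-bounding restores a valid martingale argument at the price of the $\log\cN_\varepsilon$ term and the discretization error $8k^2\varepsilon^2/\lambda$. Accordingly I do not foresee a real obstacle beyond carefully tracking the constants.
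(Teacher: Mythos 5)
Your proposal is correct and follows essentially the same route as the proof of Lemma D.4 in \citet{jin2019provably}, which this paper imports by citation rather than reproving: the self-normalized bound of \citet{abbasi2011improved} applied to each element of an $\varepsilon$-net, a union bound costing $\log\cN_\varepsilon$, and the $2\varepsilon k/\sqrt{\lambda}$ discretization error folded in via $(a+b)^2\le 2a^2+2b^2$. The only nit is constant bookkeeping: the fixed-$V$ self-normalized bound yields $2H^2\bigl[\tfrac12\log\bigl(\det\Lambda_k/\det(\lambda I)\bigr)+\log(1/\delta)\bigr]$, and the prefactor $4H^2$ in the lemma arises only from the final squaring step, so writing $4H^2$ already at the fixed-$V$ stage and then doubling again would yield $8H^2$.
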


\begin{lemma}[Covering number of Euclidean ball, \cite{vershynin2018high} ]\label{lemma:covering-number-euclidean-ball}
For any $\varepsilon > 0$, the $\varepsilon$-covering number, $\cN_\varepsilon$, of the Euclidean ball of radius $B > 0$ in $\mathbb{R}^d$ satisfies 
\begin{equation*}
    \cN_\varepsilon \leq \Bigl(1+ \frac{2B}{\varepsilon}\Bigr)^d \leq \Bigl( \frac{3B}{\varepsilon}\Bigr)^d .
\end{equation*}
\end{lemma}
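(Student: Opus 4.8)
The plan is to establish this bound by the classical volumetric packing argument, which converts the covering problem into a volume comparison in $\mathbb{R}^d$. The key preliminary observation is the duality between covering and packing: if I take a \emph{maximal} collection of points $x_1,\dots,x_N$ inside the radius-$B$ ball that are pairwise separated by distance at least $\varepsilon$, then this collection is automatically an $\varepsilon$-net. Indeed, if some point $x$ of the ball had distance exceeding $\varepsilon$ from every $x_i$, I could adjoin $x$ while preserving $\varepsilon$-separation, contradicting maximality. Hence $\cN_\varepsilon \le N$, and it suffices to upper bound the packing number $N$.

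Next I would run the volume comparison. Since the centers $x_1,\dots,x_N$ are $\varepsilon$-separated, the open balls of radius $\varepsilon/2$ around them are pairwise disjoint; moreover each such small ball lies inside the ball of radius $B+\varepsilon/2$ centered at the origin. Writing $\mathrm{vol}$ for Lebesgue measure and using that a Euclidean ball of radius $r$ in $\mathbb{R}^d$ has volume $c_d r^d$ for a dimension-dependent constant $c_d$, disjointness and containment give
\[
N \cdot c_d \Bigl(\frac{\varepsilon}{2}\Bigr)^d \;\le\; c_d \Bigl(B + \frac{\varepsilon}{2}\Bigr)^d .
\]
The constant $c_d$ cancels, so $N \le \bigl((B + \varepsilon/2)/(\varepsilon/2)\bigr)^d = (1 + 2B/\varepsilon)^d$, which is exactly the first claimed inequality.

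For the second inequality I would simply note that in the regime of interest $\varepsilon \le B$ one has $1 \le B/\varepsilon$, whence $1 + 2B/\varepsilon \le 3B/\varepsilon$, and raising to the $d$-th power gives $\cN_\varepsilon \le (3B/\varepsilon)^d$. I do not anticipate a genuine obstacle: the only points demanding care are invoking the covering--packing duality (to turn the separation hypothesis into a covering guarantee) and the implicit assumption $\varepsilon \le B$ behind the final simplification --- when $\varepsilon > B$ the ball is covered by a single point, so $\cN_\varepsilon = 1$ and the stated first bound already holds trivially.
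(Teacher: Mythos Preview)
Your argument is correct and is exactly the standard volumetric packing argument from \cite{vershynin2018high}; the paper itself does not prove this lemma but simply cites that reference, so there is nothing further to compare. Your observation that the second inequality $(1+2B/\varepsilon)^d \le (3B/\varepsilon)^d$ implicitly requires $\varepsilon \le B$ is also accurate and worth noting, since the lemma as stated claims it for all $\varepsilon>0$.
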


\begin{lemma}\label{lemma:covering_number_of_V(.)}
Consider a class of functions $\cV: \cS \rightarrow \RR$ which has the following parametric form

\begin{equation*}\label{Eq:parametric-function-class}
    V(\cdot) = \Bigl\la \min \bigl\{   \phi(\cdot,\cdot)^\top\theta, H\bigr\}^{+}, \pi(\cdot \given \cdot) \Bigr \ra_{\cA},
\end{equation*}
where the parameter $\theta$ satisfies $\|\theta\| \leq B$ and for all $(s,a) \in \cS\times\cA$, we have $\|\phi(s,a)\| \leq 1$. If $\cN_{\cV,\varepsilon}$ denotes the $\varepsilon$-covering number of $\cV$ with respect to the distance $\text{dist}(V,V') = \sup_{s\in \cS} |V(s) - V'(s)|$, then
\begin{equation*}
    \log \cN_{\cV,\varepsilon} \leq d\log(1+ 2B/\varepsilon) \leq d\log(3B/\varepsilon).
\end{equation*}
\end{lemma}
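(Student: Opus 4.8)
The plan is to cover the function class $\cV$ by covering its $d$-dimensional parameter space and transferring the cover via a Lipschitz argument. Write $g_\theta(s,a) = \min\{\phi(s,a)^\top\theta, H\}^+$, so that every $V \in \cV$ equals $V_\theta : s \mapsto \la g_\theta(s,\cdot),\ \pi(\cdot \given s)\ra_\cA$ for some $\theta$ with $\|\theta\| \le B$ (we regard $\pi$ as fixed; the greedy case is discussed below). The first step is the \emph{Lipschitz estimate}: for any $\theta_1,\theta_2$ with $\|\theta_i\| \le B$ and any $s \in \cS$,
\begin{align*}
|V_{\theta_1}(s) - V_{\theta_2}(s)|
&= \bigl| \la g_{\theta_1}(s,\cdot) - g_{\theta_2}(s,\cdot),\ \pi(\cdot \given s)\ra_\cA \bigr| \\
&\le \sup_{a \in \cA} \bigl| \min\{\phi(s,a)^\top\theta_1, H\}^+ - \min\{\phi(s,a)^\top\theta_2, H\}^+ \bigr| \\
&\le \sup_{a \in \cA} \bigl| \phi(s,a)^\top(\theta_1 - \theta_2)\bigr| \;\le\; \|\theta_1 - \theta_2\|,
\end{align*}
where the first inequality uses that $\pi(\cdot \given s)$ is a probability distribution on $\cA$ (so averaging against it is non-expansive in the sup-norm), the second uses that $x \mapsto \min\{x,H\}^+ = \max\{\min\{x,H\},0\}$ is $1$-Lipschitz, and the last is Cauchy--Schwarz together with $\|\phi(s,a)\| \le 1$. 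Taking the supremum over $s$ gives $\mathrm{dist}(V_{\theta_1},V_{\theta_2}) \le \|\theta_1-\theta_2\|$.

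Given this, I would take an $\varepsilon$-net $\cC_\varepsilon$ of the Euclidean ball $\{\theta \in \mathbb{R}^d : \|\theta\| \le B\}$ and set $\cV_\varepsilon = \{V_\theta : \theta \in \cC_\varepsilon\} \subseteq \cV$. For an arbitrary $V_\theta \in \cV$, choosing $\theta' \in \cC_\varepsilon$ with $\|\theta - \theta'\| \le \varepsilon$ yields $\mathrm{dist}(V_\theta, V_{\theta'}) \le \varepsilon$ by the displayed bound, so $\cV_\varepsilon$ is an $\varepsilon$-cover of $\cV$ and hence $\cN_{\cV,\varepsilon} \le |\cC_\varepsilon|$. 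Invoking Lemma~\ref{lemma:covering-number-euclidean-ball} on the covering number of a Euclidean ball, $\cC_\varepsilon$ can be taken of size at most $(1 + 2B/\varepsilon)^d \le (3B/\varepsilon)^d$; taking logarithms gives $\log \cN_{\cV,\varepsilon} \le d\log(1 + 2B/\varepsilon) \le d\log(3B/\varepsilon)$, which is the claim.

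I do not expect a genuine obstacle here: this is the standard ``cover the parameters, transfer via Lipschitzness'' recipe. The only points needing care are that averaging against the probability kernel $\pi(\cdot \given s)$ is non-expansive in the sup-norm — which also handles the case where $\pi$ is the greedy policy induced by $\theta$, since a pointwise maximum over $\cA$ is likewise non-expansive, giving $V_\theta(s) = \min\{\max_{a}\phi(s,a)^\top\theta, H\}^+$ with the same Lipschitz constant — and that the normalization $\|\phi(s,a)\| \le 1$ from Definition~\ref{definition-linear-MDP} is exactly what keeps the Lipschitz constant equal to $1$, so that no extra factor of $d$ leaks into the radius of the ball that must be covered.
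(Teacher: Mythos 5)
Your proposal is correct and follows essentially the same route as the paper's proof: establish the Lipschitz bound $\mathrm{dist}(V_{\theta_1},V_{\theta_2}) \le \|\theta_1-\theta_2\|$ via the contraction property of the clipping, the non-expansiveness of averaging against $\pi(\cdot\given s)$, and Cauchy--Schwarz with $\|\phi(s,a)\|\le 1$, then transfer an $\varepsilon$-net of the parameter ball using Lemma~\ref{lemma:covering-number-euclidean-ball}. No gap.
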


\begin{proof}
Consider any two functions $V_1, V_2 \in \cV$ with parameters $\theta_1$ and $\theta_2$, respectively. Note that  $\min\{\cdot, H\}$ is a contraction mapping. Thus we have

\begin{align}\label{eq:dist(v_1,v_2)}
        \text{dist}(V_1, V_2) &\leq \sup_{s} \bigl| \la \phi(s,\cdot)^\top\theta_1 - \phi(s,\cdot)^\top\theta_2, \pi(\cdot \mid s) \ra_{\cA} \bigr| \notag\\
        & \leq \sup_{\phi:\|\phi\|\leq 1} \bigl|\phi^\top\theta_1 - \phi^\top\theta_2 \bigr| \notag\\
        & = \sup_{\phi:\|\phi\|\leq 1} \bigl|\phi^\top\bigl(\theta_1 - \theta_2\bigr) \bigr| \notag\\
        & \leq \sup_{\phi:\|\phi\|\leq 1} \|\theta_1- \theta_2 \|_2 \|\phi \|_2 \notag\\
        &= \|\theta_1 - \theta_2\|,
\end{align}

where the second inequality follows from the triangle inequality and the third inequality follows from the Cauchy-Schwarz inequality. 

If $\cN_{\theta,\varepsilon}$ denotes the $\varepsilon$-covering number of $\{\theta \in \mathbb{R}^d \given \|\theta\| \leq B \}$, Lemma ~\ref{lemma:covering-number-euclidean-ball} implies
\begin{equation*}
    \cN_{\theta,\varepsilon} \leq \Bigl(1+ \frac{2B}{\varepsilon}\Bigr)^d \leq \Bigl( \frac{3B}{\varepsilon}\Bigr)^d.
\end{equation*}

Let $\cC_{\theta,\varepsilon}$ be an $\varepsilon$-cover of $\{\theta \in \mathbb{R}^d \given \|\theta\| \leq B \}$ with cardinality $\cN_{\theta,\varepsilon}$. Consider any $V_1 \in \cV$. By  \eqref{eq:dist(v_1,v_2)}, there exists $\theta_2 \in \cC_{\theta,\varepsilon}$ such that $V_2$ parameterized by $\theta_2$ satisfies $\text{dist}(V_1,V_2)\leq \varepsilon$. Thus we have

\begin{equation*}
    \log\cN_{\cV,\varepsilon} \leq \log\cN_{\theta,\varepsilon} \leq d\log(1+ 2B/\varepsilon) \leq d\log(3B/\varepsilon),
\end{equation*}
which concludes the proof. 
\end{proof}
\section{Experiment Details}
In this section we include the figure for the RiverSwim environment from \citep{osband2013more}.
\begin{figure}[h]
    \centering
    \includegraphics[width=0.9\textwidth]{./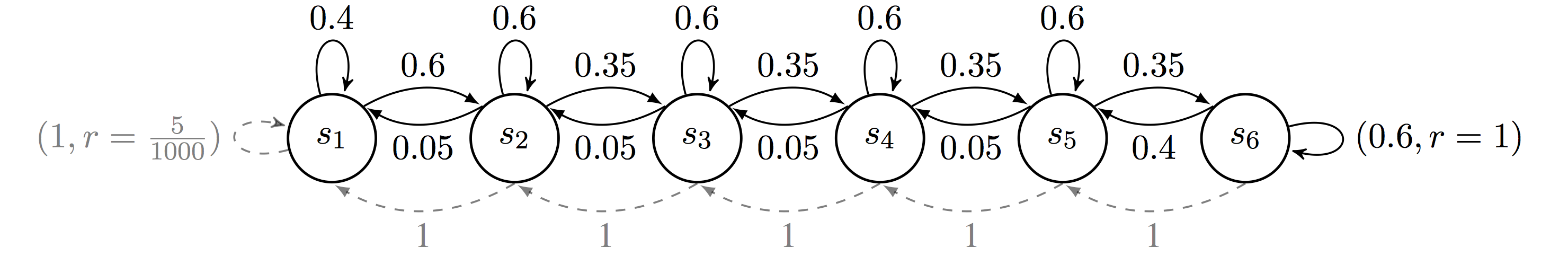}
    \caption{The 6 state RiverSwim environment \citep{osband2013more}. State $s_1$ has a small reward while state $s_6$ has a large reward. The action whose transition is denoted with a dashed arrow deterministically moves the agent left. The other action is stochastic, and with relative high probability moves the agent towards the goal state $s_6$. This action represents swimming against the current, hence the name RiverSwim.}
    \label{fig:riverswim}
\end{figure}

\newpage
\end{document}